\titlespacing\section{0pt}{12pt plus 3pt minus 3pt}{1pt plus 1pt minus 1pt}
\titlespacing\subsection{0pt}{10pt plus 3pt minus 3pt}{1pt plus 1pt minus 1pt}
\titlespacing\subsubsection{0pt}{8pt plus 3pt minus 3pt}{1pt plus 1pt minus 1pt}
\newcommand{\abs}[1]{\ensuremath{\left\lvert#1\right\rvert}}
\newcommand{\norm}[1]{\left\lVert #1 \right\rVert} 
\newcommand{\mx}[1]{\mathbf{\bm{#1}}} 
\newcommand{\vc}[1]{\mathbf{\bm{#1}}} 
\newcommand{\RN}[1]{\uppercase\expandafter{\romannumeral #1\relax}}
\newcommand{\pder}[2]{\frac{\partial#1}{\partial#2}}
\newcommand{\dA}{\,\mathrm{d}A}
\DeclareMathOperator{\sgn}{sign}
\DeclareMathOperator{\diag}{diag}
\DeclareMathOperator{\Atan2}{Atan2}
\DeclareMathOperator{\sat}{sat}
\newtheorem{thm}{Theorem}
\newtheorem{prop}[thm]{Proposition}
\newtheorem{cor}{Corollary}
\title{Quasi-static Analysis of Planar Sliding Using Friction Patches}
\author[1\thanks{\tt{mahdi.ghazaei@control.lth.se}}]{M. Mahdi Ghazaei Ardakani}
\author[2]{Joao Bimbo}
\author[3]{Domenico Prattichizzo}
\affil[1,2,3]{Istituto Italiano di Tecnologia (IIT), Italy}
\affil[3]{Department of Information Engineering, University of Siena, Italy}
\begin{document}

\twocolumn[ 
  \begin{@twocolumnfalse} 
  
\maketitle

\begin{abstract}
Planar sliding of objects is modeled and analyzed. The model can be used for non-prehensile manipulation of objects lying on a surface. We study possible motions generated by frictional contacts, such as those arising between a soft finger and a flat object on a table. Specifically, using a quasi-static analysis we are able to derive a hybrid dynamical system to predict the motion of the object. The model can be used to find fixed-points of the system and the path taken by the object to reach such configurations. Important information for planning, such as the conditions in which the object sticks to the friction patch, pivots, or completely slides against it are obtained. Experimental results confirm the validity of the model for a wide range of applications.
\end{abstract}
\keywords{Planar sliding, non-prehensile manipulation, soft finger, frictional contact} 
\vspace{0.35cm}

  \end{@twocolumnfalse} 
] 



\section{Introduction}\label{sec:intro}

An object can be manipulated using either a prehensile or a non-prehensile approach. Thin flat objects on a surface are hard to grasp, but can be manipulated by pushing or pulling. Several strategies are imaginable. If the object is small compared to a hand, it is possible to cage it within the hand. Depending on the height of the object, it could also be pushed from the side by some parts of the hand or moved by using an elevated edge or any bumps or dents on its surface. In certain cases, force closure can be achieved by pressing the object for example against the table and moving the object as if it was grasped. Nevertheless, a practically interesting case is when the hand is placed on top of the object, but the friction between the object and the hand is controlled such that the object can pivot (Fig.~\ref{fig:slidingCellphone}). This strategy is an example of exploiting environmental constraints for manipulation~\citep{malvezzi2019eval}. The benefits are immediate when force closure is  impossible, no matter how hard the object is pressed (e.g., due to a small contact area), or when hand reorientation is limited, for example due to kinematic limitations of the robotic arm.

When a robot end-effector (e.g., a hand or a soft finger) establishes a frictional contact with an object, it can transfer forces through the friction patch formed between them. Such contact, when used for pivoting, behaves similarly to a joint. However, it can transfer not only forces between the end-effector and the object but also torque. This fact can be used to control the angular velocity of the object. 

\begin{figure}
	\centering
	\includegraphics[width=0.8\linewidth]{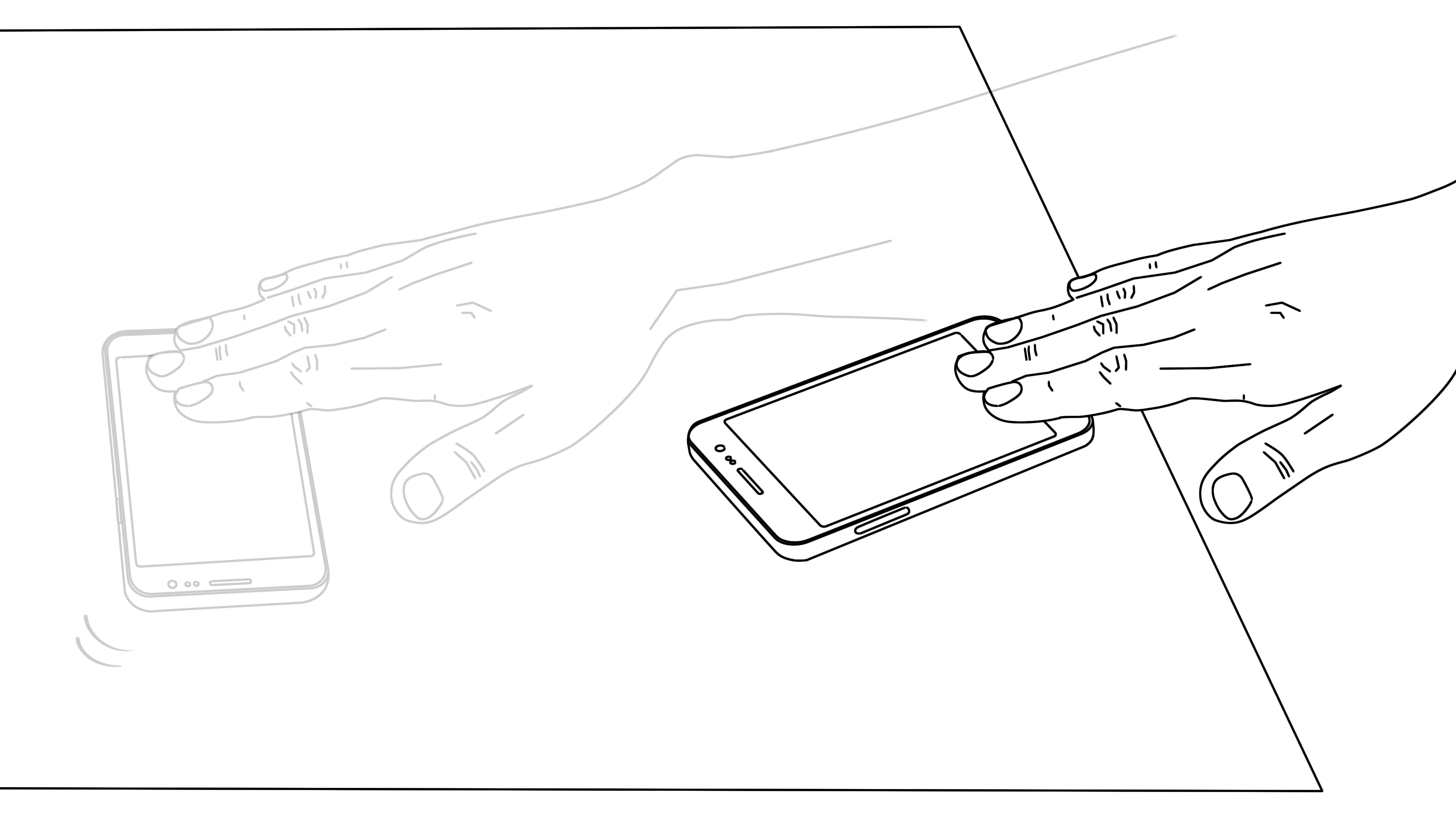}
	\caption{An example in which the friction is utilized to move and reorient the cell phone.}
	\label{fig:slidingCellphone}
\end{figure} 

Contact mechanics is a complex phenomenon~\citep{johnson_1985}.
The concept of limit surfaces has been introduced as a convenient way to characterize the friction properties of two surfaces sliding against each other~\citep{goyal1989planar}. A practical approximation of the limit surface was proposed by~\cite{howe1996practical}. The observation is that limit surfaces can be approximated by ellipsoids. This result has also been confirmed for soft finger models~\citep{xydas1999modeling,fakhari2016development}.

Planar sliding by means of pushing from a side of an object is a classic topic in robotics~\citep{mason1982manipulator,peshkin1987planning}. Active sensing and pushing using only tactile feedback was proposed by~\cite{lynch1992manipulation}. \cite{ruiz2011fast} studied pushing objects with rectangular support. Under uncertainty of the center of pressure,~\cite{huang2017exact} provides exact bounds for the motion of a sliding object. Pushing on the edge of blocks is revisited based on differential flatness for planning and control~\citep{zhou2017pushing}. In these scenarios, the contact surface between the pusher and the object has been assumed small such that no moment can be transferred through the contact area.

Exploiting extrinsic dexterity, a grasped object can be manipulated. For example, \cite{chavan2017stable,chavan-dafle2018rss} have considered scenarios where objects were pushed against various fixtures, while they were ensured to stick to the fixtures. A discrete set of hard point contacts was used to model the friction between the fixture and the object, while for the grasp the limit surface concept was utilized. For planning of stable pushes, a polyhedral approximation to motion cones was calculated~\citep{chavan2017stable}.

Using friction patches has also been studied in the context of in-hand manipulation tasks~\citep{bicchi1993experimental,shi2017dynamic}. For the task of dynamic pivoting, assuming a ``pivoting joint'' and using a simple friction model, a robust controller was proposed to cope with the uncertainty in the torsional friction~\citep{hou2016robust}. Additionally, adaptive control strategies have been considered by~\cite{karay2016adapt}. Common simplifying assumptions used also in these works are that the contact points of the fingers and the object are fixed and/or the friction is isotropic with limit surfaces described by diagonal matrices.

A generic manipulation problem with compliance and sliding was studied by~\cite{kao1992quasistatic}. In the subsequent articles,~\cite{kao1993comparison,xue1994dexterous} analyzed manipulation of a business card on a frictionless table top using symmetric motion of two soft fingers. A good match between the theory and the experiments was reported.


The problem of planar sliding when there is torque transfer between the object and the manipulator, due to the dimension of the contact area, can be regarded as an extension of pushing problems~\citep{lynch1992manipulation,zhou2017fast}. Some of the previously suggested approaches can also be adapted to this problem~\citep{kao1992quasistatic,shi2017dynamic,chavan-dafle2018rss}. Nevertheless, there has not yet been a complete analysis of this problem per se. Thus, the aim of this article is to provide an adequate mathematical model of planar sliding using friction patches for the purpose of control and planning.

\section{Modelling}\label{sec:slidingModel}

Consider the configuration shown in Figure.~\ref{fig:diagSliding}. We refer to the part moving the object as a hand, which can be any part of a robotic hand such as a soft finger or in general a part of an end-effector. We assume that the object is rigid and the hand does not roll against it. The patch, which is the part of the hand in touch with the object, may however deform. Accordingly, we assign body-fixed frames to the object and to the patch. The frame attached to the patch is designated by $\{\mathrm{H}\}$ and the frame of the object by $\{\mathrm{O}\}$. For convenience, we consider the frame of the object to be fixed at its center of mass (COM) with its $z$-axis orthogonal to the sliding surface. For the hand, we consider the frame to be fixed at the centroid of the patch. Moreover, we assume that the limit surfaces (LS) are symmetric with respect to the origin (if the direction of motion is reversed, so are friction forces). They can also be approximated by ellipsoids with respect to the Center of Pressure (COP).

\begin{figure}
	\centering
	\includegraphics[width=0.8\linewidth]{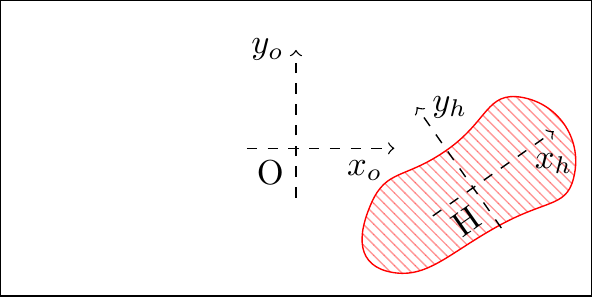}
	\caption{The coordinate frames attached to the object \{O\} and the patch \{H\}.}
	\label{fig:diagSliding}
\end{figure} 

Here is the list of the assumptions:
\begin{itemize}
    \item The surface and the object are much more rigid compared to the robotic hand.
    \item The hand is placed on top of a flat object.
    \item The hand does not roll against the object.
    \item The Coulomb friction model is used.
    \item The limit surfaces are symmetric with respect to the origin and can be approximated by ellipsoids.
    \item The velocities are low enough such that the quasi-static condition holds.
\end{itemize}

\subsection{Summary of nomenclature}
 Capital bold letters denote matrices. Vectors are denoted by an arrow above a symbol, while small bold letters represent coordinate vectors. Scalars are typeset in roman.
 
\begin{center}
\begin{supertabular}{ll}
$\vc{q}_o$	& generalized coordinates of the object \\ 
$\vc{q}_h$	& generalized coordinates of the patch \\ 
$\vc{\nu}_o$ 	& twist of the object wrt frame $\{\mathrm{O}\}$  \\
$\vc{\nu}_h$ 	& twist of the patch wrt frame $\{\mathrm{H}\}$  \\
$\vc{v}_h$ 	& linear velocity of the patch\\
$\vc{\nu}_{rel}$ 	& relative twist of the patch and the object in $\{\mathrm{H}\}$\\
$\vc{w}_o$	& wrenches exerted on the object wrt frame $\{\mathrm{O}\}$\\
$\vc{w}_h$	& wrenches exerted on the hand through the patch\\ 
$\vc{p}$		& position of the pivot point wrt frame $\{\mathrm{H}\}$\\ 
$\vc{m}$		& torque \\
$\mx{J}(\vc{r})$	& Jacobian for a point at relative coordinates $\vc{r}$ \\
$\mx{R}(\theta)$	& Rotation matrix along $z$ direction with $\theta\,$rad\\
$\mx{G}(\vc{q})$ & Jacobian for a frame with relative coordinates $\vc{q}$\\
$\mx{A}$ 		& LS of object-surface contact wrt frame $\{\mathrm{O}\}$\\
$\mx{B}$		& LS of hand-object contact wrt frame $\{\mathrm{H}\}$\\
$\hat{\mx{A}}$ 	& LS of object-surface contact wrt frame $\{\mathrm{H}\}$ \\
$\mx{\Phi}$		& principal sliding wrenches \\
$\mx{\Lambda}$		& generalized eigenvalues of $\hat{\mx{A}}$ and $\mx{B}$\\
\end{supertabular}
\end{center}

\subsection{Preliminaries}
 
The generalized coordinates of the object are denoted by
\begin{equation*}
\vc{q}_o = \left[x_o,\, y_o,\, \theta_o \right]^T
\end{equation*}
and the twist and the wrenches expressed in the body-fixed frame are
\begin{align*}
\vc{\nu}_o &= \left[v_{xo},\, v_{yo},\, \omega_o \right]^T, \\
\vc{w}_o &= \left[f_{xo},\, f_{yo},\, m_o \right]^T.
\end{align*}
Similarly, the respective quantities for the friction patch are defined and denoted by the subscript $h$.

The relative coordinates of frame $\{\mathrm{H}\}$ with respect to $\{\mathrm{O}\}$ can be written as
\begin{align*}
\vc{q}_{rel} &:= \left[x_{r},\, y_{r},\, \theta_{r} \right]^T \\ 
&\phantom{:}=\mx{R}(-\theta_o) \left(\vc{q}_{h} - \vc{q}_{o}\right)
\end{align*}
where $\mx{R}(\cdot)$ is the rotation matrix along the $z$-axis
\begin{align} \label{eq:rotz}
\mx{R}(\theta):=
	\begin{bmatrix}
		\cos(\theta)	& -\sin(\theta) & 0 \\
		\sin(\theta)	& \cos(\theta) 	& 0 \\
		0 				& 0				& 1
	\end{bmatrix}.
\end{align}

We also define for a position vector $\vc{r} = [x_r,\, y_r]^T$,
\begin{align} \label{eq:jac}
\mx{J}(\vc{r}) := 
	\begin{bmatrix}
		1 & 0 & -y_r \\
		0 & 1 &  x_r \\
		0 & 0 &  1
	\end{bmatrix}
\end{align}
with the property that $\mx{J}(\vc{r}_1)\mx{J}(\vc{r}_2) = \mx{J}(\vc{r}_1 + \vc{r}_2)$ for any $\vc{r}_1$ and $\vc{r}_2$.
Accordingly, 
\begin{align*}
\mx{J}^{-1}(\vc{r}) = \mx{J}(-\vc{r}) =
	\begin{bmatrix}
		1 & 0 &  y_r \\
		0 & 1 &  -x_r \\
		0 & 0 &  1
	\end{bmatrix}.
\end{align*}

\begin{prop}\label{thm:trans}
The relation between planar twists $\vc{\nu}_p$ given in frame $\{\mathrm{P}\}$ and $\vc{\nu}_o$ given in frame $\{\mathrm{O}\}$, with relative coordinates $\vc{q}_{rel} = \left[\vc{r},\, \theta_{r} \right]^T$ is
\begin{equation}
	\vc{\nu}_p= \mx{G} \vc{\nu}_o
\end{equation}
where 
\begin{equation}
\mx{G}: = \mx{G}(\vc{q}_{rel}) = \mx{R}^T(\theta_r) \mx{J}\left( \vc{r} \right).
\end{equation}
Similarly, the wrenches are related according to
\begin{equation}
\vc{w}_o = \mx{G}^T \vc{w}_p.
\end{equation}
\end{prop}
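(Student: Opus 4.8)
The plan is to prove the two relations separately: first the twist transformation, by a direct kinematic argument that exploits the factorization $\mx{G} = \mx{R}^T(\theta_r)\,\mx{J}(\vc{r})$, and then the wrench transformation, by invoking the invariance of power, which immediately yields the transpose map.

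First I would establish the twist relation, using the fact that $\mx{J}(\vc{r})$ and $\mx{R}^T(\theta_r)$ play two distinct geometric roles. Writing $\vc{\nu}_o = [v_{xo},\, v_{yo},\, \omega_o]^T$, a short computation gives
\begin{equation*}
\mx{J}(\vc{r})\,\vc{\nu}_o = \left[\,v_{xo}-\omega_o y_r,\; v_{yo}+\omega_o x_r,\; \omega_o\,\right]^T,
\end{equation*}
whose linear part is precisely the rigid-body velocity $[v_{xo},\,v_{yo}]^T + \omega_o\,\hat{z}\times\vc{r}$ of the body point located at $\vc{r}$, still resolved in the axes of $\{\mathrm{O}\}$; the angular component is unchanged because the angular velocity is shared by every point of a rigid body. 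Thus $\mx{J}(\vc{r})$ transfers the reference point from the origin of $\{\mathrm{O}\}$ to the origin of $\{\mathrm{P}\}$. It then remains only to re-resolve this twist in the axes of $\{\mathrm{P}\}$: since those axes are rotated by $\theta_r$ relative to $\{\mathrm{O}\}$, a planar vector expressed in $\{\mathrm{O}\}$ is converted to $\{\mathrm{P}\}$ coordinates by $\mx{R}^T(\theta_r)=\mx{R}(-\theta_r)$, which rotates the two linear components and leaves the angular component intact. Composing the two steps yields $\vc{\nu}_p = \mx{R}^T(\theta_r)\,\mx{J}(\vc{r})\,\vc{\nu}_o = \mx{G}\,\vc{\nu}_o$.

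For the wrench relation I would use duality rather than a second direct computation. Power is a frame-independent scalar, so the wrench--twist pairing must agree in both frames, $\vc{w}_p^T\vc{\nu}_p = \vc{w}_o^T\vc{\nu}_o$, for every admissible motion. Substituting $\vc{\nu}_p = \mx{G}\,\vc{\nu}_o$ gives $\vc{w}_p^T\mx{G}\,\vc{\nu}_o = \vc{w}_o^T\vc{\nu}_o$, and since this holds for all $\vc{\nu}_o$ we conclude $\vc{w}_o = \mx{G}^T\vc{w}_p$, as claimed.

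The computation itself is routine; the only real care is in the bookkeeping of conventions, which is where any error would propagate. The two delicate points are fixing the direction of the change of axes (that passing from $\{\mathrm{O}\}$ to $\{\mathrm{P}\}$ is $\mx{R}^T(\theta_r)$ and not $\mx{R}(\theta_r)$, consistent with the definition of $\vc{q}_{rel}$) and the sign in the planar cross product $\hat{z}\times\vc{r} = [-y_r,\, x_r]^T$, which must match the off-diagonal entries of $\mx{J}(\vc{r})$. To guard against a sign slip I would verify the factorization $\mx{G}=\mx{R}^T(\theta_r)\,\mx{J}(\vc{r})$ against two degenerate cases before concluding: a pure translation ($\theta_r=0$), where $\mx{G}=\mx{J}(\vc{r})$ must reduce to the point-velocity transfer above, and a pure rotation about the common origin ($\vc{r}=0$), where $\mx{G}=\mx{R}^T(\theta_r)$ must reduce to a coordinate rotation only.
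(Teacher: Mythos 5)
Your proof is correct. The twist half follows essentially the same route as the paper: the paper also derives $\vec{v}_p = \vec{v}_o + \vec{\omega}_o \times \overrightarrow{OP}$, resolves it in $\{\mathrm{O}\}$ to obtain the linear part $\vc{v}_o + [-y_r,\,x_r]^T\omega_o$ (your $\mx{J}(\vc{r})$ step), and then applies the change of basis $\mx{R}^T(\theta_r)$, exactly as you do. Where you genuinely diverge is the wrench half: the paper repeats the direct computation, shifting the moment via $\vec{m}_o = \vec{m}_p + \overrightarrow{OP}\times\vec{f}_p$ and rotating the force with $\vc{f}_o = \mx{R}\vc{f}_p$, then reads off $\mx{G}^T$ from the resulting matrix form, whereas you invoke invariance of power, $\vc{w}_p^T\vc{\nu}_p = \vc{w}_o^T\vc{\nu}_o$ for all $\vc{\nu}_o$, to get $\vc{w}_o = \mx{G}^T\vc{w}_p$ immediately. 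The duality argument is shorter and eliminates a second round of sign bookkeeping, at the cost of resting on the physical premise that the wrench--twist pairing computes the same scalar in both frames; the paper's explicit moment computation instead serves as an independent check that the force and velocity transformations are mutually consistent. Both are sound, and your verification against the two degenerate cases ($\theta_r = 0$ and $\vc{r}=0$) is a sensible safeguard that the paper does not include.
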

\begin{proof}
By changing the point of reference, we find 
\begin{subequations}
\begin{align}
	\vec{v}_p &= \vec{v}_o + \vec{\omega}_o \times \overrightarrow{OP}, \label{eq:shiftVel} \\
	\vec{\omega}_p &= \vec{\omega}_o.
\end{align}
\end{subequations}
Rewriting~\eqref{eq:shiftVel} in the frame $\{\mathrm{O}\}$, we obtain
\begin{align*}
\mx{R}  \vc{v}_p &=\vc{v}_o + \omega_o \hat{\vc{k}} \times (x_r \hat{\vc{i}} + y_r \hat{\vc{j}} ) \\
	&=\vc{v}_o + (x_r  \hat{\vc{j}} -  y_r \hat{\vc{i}}) \omega_o,
\end{align*}
where $\hat{\vc{i}}$, $\hat{\vc{j}}$, and $\hat{\vc{k}}$ denote unit coordinate vectors.
Similarly for the forces, we have
\begin{subequations}
\begin{align}
	\vec{f}_o &= \vec{f}_p \\
	\vec{m}_o &= \vec{m}_p + \overrightarrow{OP} \times \vec{f}_p. \label{eq:shiftForce}
\end{align}
\end{subequations}
Rewriting~\eqref{eq:shiftForce} in the frame $\{\mathrm{O}\}$ results in
\begin{align*}
 m_o \hat{\vc{k}} & =m_p \hat{\vc{k}} + (x_r \hat{\vc{i}} + y_r \hat{\vc{j}}) \times ( f_{xo} \hat{\vc{i}} + f_{yo} \hat{\vc{j}} ) \\
	&=(m_p  + x_r f_{yo} - y_r f_{xo} ) \hat{\vc{k}} .
\end{align*}
Additionally, the change of the frame from  $\{\mathrm{P}\}$ to  $\{\mathrm{O}\}$ requires
\begin{align*}
\vc{f}_o = \mx{R} \vc{f}_p.
\end{align*}

The proof is completed by rewriting the results in matrix form.
\end{proof}

Using the Coulomb model of friction between surfaces, the friction wrench with respect to point $o$ can be calculated  as
\begin{subequations} \label{eq:LSInts}
\begin{align} 
    \vc{f}_o &= - \int_D \dfrac{\vc{v}(\vc{r})}{\norm{\vc{v}(\vc{r})}} \mu_r p(\vc{r}) \dA, \label{eq:fo}\\
    \vc{m}_o &= - \int_D \dfrac{(\vc{r}-\vc{o}) \times \vc{v}(\vc{r})}{\norm{\vc{v}(\vc{r})}} \mu_r p(\vc{r}) \dA, \label{eq:mo}
\end{align}
\end{subequations}
where $p(\vc{r})$ denotes the pressure and $\vc{v}(\vc{r})$ denotes the relative linear velocity between sliding surfaces at position $\vc{r}$. The integral is calculated over the area $D$. 
Based on the assumed quadratic model of the limit surfaces, the relation between~\eqref{eq:fo} and~\eqref{eq:mo} can be approximated by an implicit function
\begin{align}
H(\vc{w}) := \vc{w}^T \mx{A} \vc{w} = 1, \label{eq:LS}
\end{align}
for a positive definite matrix $\mx{A} \in \mathbb{R}^{3\times 3}$.
The corresponding twist is parallel to the gradient of $H(\vc{w})$. Thus, 
\begin{align}
\vc{\nu} &= -k' \nabla H(\vc{w}) \nonumber \\
&= - k  \mx{A} \vc{w}, \quad k \geq 0. \label{eq:gradH}
\end{align}
Note that for a given $\vc{w}$ applied to an object sliding on a surface, there will be no relative motion if
\begin{align*}
H(\vc{w}) < 1,
\end{align*}
and the object will be accelerating if $H(\vc{w})$ is larger than one.
By combining~\eqref{eq:LS} and~\eqref{eq:gradH}, it is possible to eliminate $k$ and hence to find wrenches as a function of the twist
\begin{align}
\vc{w} = - \dfrac{\mx{A}^{-1} \vc{\nu}}{\sqrt{\vc{\nu}^T \mx{A}^{-1} \vc{\nu}}}. \label{eq:kLS}
\end{align}

\begin{prop}
Assume that the limit surface calculated with respect to frame  $\{\mathrm{O}\}$ can be represented by
\begin{equation}
	\vc{w}_o^T \mx{A} \vc{w}_o = 1,
\end{equation}
where $\mx{A}$ is a positive definite matrix.
Then, the limit surface with respect to frame $\{\mathrm{P}\}$, which has the relative coordinates $\left[\vc{r},\, \theta_{r} \right]^T$ is
\begin{equation}
	\vc{w}_p^T \hat{\mx{A}} \vc{w}_p = 1,
\end{equation}
where 
\begin{equation}
    \hat{\mx{A}} = \mx{G} \mx{A} \mx{G}^T
\end{equation}
is a positive definite matrix and $\mx{G} = \mx{R}^T(\theta_r)\mx{J}\left( \vc{r} \right)$.
\end{prop}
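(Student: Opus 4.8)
The plan is to reduce the claim to the wrench-transformation rule already established in Proposition~\ref{thm:trans}, followed by a standard congruence argument for positive definiteness. First I would invoke Proposition~\ref{thm:trans}, which gives the relation $\vc{w}_o = \mx{G}^T \vc{w}_p$ between the wrench expressed in frame $\{\mathrm{O}\}$ and the wrench expressed in frame $\{\mathrm{P}\}$, with $\mx{G} = \mx{R}^T(\theta_r)\mx{J}(\vc{r})$. This is precisely the ingredient needed: the limit surface is a locus of friction wrenches, so the same physical friction state is represented by $\vc{w}_o$ in one frame and by $\vc{w}_p$ in the other.

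The key step is then a direct substitution. Replacing $\vc{w}_o$ by $\mx{G}^T \vc{w}_p$ in the hypothesis $\vc{w}_o^T \mx{A} \vc{w}_o = 1$ gives
\begin{equation*}
(\mx{G}^T \vc{w}_p)^T \mx{A} (\mx{G}^T \vc{w}_p) = \vc{w}_p^T (\mx{G} \mx{A} \mx{G}^T) \vc{w}_p = 1,
\end{equation*}
so that reading off the quadratic form identifies $\hat{\mx{A}} = \mx{G} \mx{A} \mx{G}^T$, exactly as claimed.

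It remains to verify that $\hat{\mx{A}}$ is positive definite. I would first argue that $\mx{G}$ is invertible: it is the product of the rotation matrix $\mx{R}^T(\theta_r)$, whose inverse is $\mx{R}(\theta_r)$, and of $\mx{J}(\vc{r})$, which the excerpt already exhibits as invertible with $\mx{J}^{-1}(\vc{r}) = \mx{J}(-\vc{r})$. Since $\mx{A}$ is positive definite and $\mx{G}$ (hence $\mx{G}^T$) is invertible, the congruence $\hat{\mx{A}} = \mx{G} \mx{A} \mx{G}^T$ preserves positive definiteness: for any nonzero $\vc{w}_p$, the vector $\mx{G}^T \vc{w}_p$ is nonzero, so $\vc{w}_p^T \hat{\mx{A}} \vc{w}_p = (\mx{G}^T \vc{w}_p)^T \mx{A} (\mx{G}^T \vc{w}_p) > 0$. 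This closes the argument.

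I expect no serious obstacle, since the result is essentially a coordinate change applied to a quadratic form. The only point deserving care is confirming the invertibility of $\mx{G}$, on which both the preservation of positive definiteness and the well-posedness of the transformed limit surface rest; once that is in hand, everything else is a one-line substitution.
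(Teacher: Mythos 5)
Your proposal is correct and follows essentially the same route as the paper: a direct application of Proposition~\ref{thm:trans} to substitute $\vc{w}_o = \mx{G}^T \vc{w}_p$ into the quadratic form, followed by the congruence argument using the full rank of $\mx{G}$ to preserve positive definiteness. The only difference is that you spell out explicitly why $\mx{G}$ is invertible (as a product of a rotation and $\mx{J}(\vc{r})$), which the paper leaves implicit.
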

\begin{proof}
The result is achieved by the direct application of Proposition~\ref{thm:trans}. For the positive definiteness, note that 
\begin{align*}
    \vc{w}^T \hat{\mx{A}} \vc{w} = (\mx{G}^T \vc{w})^T \mx{A} (\mx{G}^T \vc{w}) \geq 0.
\end{align*}
Since $\mx{G}$ is full rank, $\mx{G}^T \vc{w}$ is zero if and only if $\vc{w} = \vc{0}$. Consequently, the matrix $\hat{\mx{A}}$ is positive definite.
\end{proof}

The following theorem shows that a limit surface characterized by any positive definite matrix can be assumed as a diagonal matrix with respect to a frame assigned at the COP.
\begin{thm}\label{thm:decomp}
	Any positive definite matrix $\mx{A} \in \mathbb{R}^{3\times 3}$ can be decomposed as 
\begin{align}
	\mx{A} = \mx{R}^T \mx{J} \mx{\Lambda} \mx{J}^T \mx{R}, \label{eq:decomp}
\end{align}
where $\mx{\Lambda}$ is a diagonal matrix, and $\mx{R}$ and $\mx{J}$ are rotation and Jacobian matrices as defined in~\eqref{eq:rotz} and~\eqref{eq:jac}, respectively.
\end{thm}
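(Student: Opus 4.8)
The plan is to read the target factorization as a congruence transformation. Writing $\mx{G} = \mx{R}^T \mx{J}$, equation~\eqref{eq:decomp} is exactly $\mx{A} = \mx{G}\mx{\Lambda}\mx{G}^T$, which is equivalent to
\begin{equation*}
\mx{R}\,\mx{A}\,\mx{R}^T = \mx{J}\,\mx{\Lambda}\,\mx{J}^T .
\end{equation*}
I would exploit the distinct geometric roles of the two factors: the rotation $\mx{R}(\theta)$ about the $z$-axis only mixes the first two coordinates and fixes the third, whereas $\mx{J}(\vc{r})$ implements a shift of the reference point. The idea is to first rotate $\mx{A}$, then cancel the residual translation--rotation coupling by a suitable choice of $\vc{r}$ (the COP), while choosing the rotation angle precisely so that this same cancellation also diagonalizes the remaining planar block. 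A parameter count is reassuring: the six free parameters $(\theta, x_r, y_r, \lambda_1, \lambda_2, \lambda_3)$ match the six degrees of freedom of a symmetric $3\times 3$ matrix.

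Next I would characterize the image of the $\mx{J}$-congruence acting on diagonal matrices. A direct multiplication gives, for $\mx{\Lambda} = \diag(\lambda_1,\lambda_2,\lambda_3)$ and $\vc{r} = [x_r,\,y_r]^T$,
\begin{equation*}
\mx{J}\mx{\Lambda}\mx{J}^T =
\begin{bmatrix}
\lambda_1 + \lambda_3 y_r^2 & -\lambda_3 x_r y_r & -\lambda_3 y_r \\
-\lambda_3 x_r y_r & \lambda_2 + \lambda_3 x_r^2 & \lambda_3 x_r \\
-\lambda_3 y_r & \lambda_3 x_r & \lambda_3
\end{bmatrix}.
\end{equation*}
Reading off the entries shows that a symmetric $\mx{M}$ with $M_{33}>0$ has this form if and only if its Schur complement with respect to the $(3,3)$ entry is diagonal, i.e.\ $M_{12}M_{33} = M_{13}M_{23}$. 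When this holds, the factors are recovered explicitly by $\lambda_3 = M_{33}$, $x_r = M_{23}/M_{33}$, $y_r = -M_{13}/M_{33}$, $\lambda_1 = M_{11}-M_{13}^2/M_{33}$, and $\lambda_2 = M_{22}-M_{23}^2/M_{33}$.

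It then remains to enforce this single structural condition using the rotation. Setting $\mx{M} = \mx{R}(\theta)\,\mx{A}\,\mx{R}^T(\theta)$, I would first observe that since $\mx{R}$ fixes $\hat{\vc{k}}$ we have $M_{33} = a_{33} > 0$ for every $\theta$, so the formulas above never divide by zero. Expanding the rotated coupling vector $[M_{13},M_{23}]^T$ and the planar off-diagonal entry $M_{12}$ in double-angle form, the requirement $M_{12}M_{33} = M_{13}M_{23}$ collapses to a scalar equation of the type $P\sin 2\theta + Q\cos 2\theta = 0$, which possesses a real solution for any coefficients $P,Q$. Choosing such a $\theta$ and then applying the previous paragraph returns $\vc{r}$ and $\mx{\Lambda}$, and substituting back yields~\eqref{eq:decomp}.

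The anticipated obstacle is mild: the only points where the argument could fail are the solvability of the angle equation and the non-degeneracy $M_{33}\neq 0$, and both are settled by the two observations just noted. I would close with the remark that, because the congruence is by the invertible matrix $\mx{G}$, Sylvester's law of inertia forces $\mx{\Lambda}$ to be positive definite, which is consistent with interpreting the $\lambda_i$ as principal sliding parameters at the COP.
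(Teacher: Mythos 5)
Your proof is correct and follows essentially the same constructive route as the paper: choose the rotation angle so that the $2\times 2$ Schur complement of $\mx{R}\mx{A}\mx{R}^T$ with respect to the $(3,3)$ entry becomes diagonal (the paper's quantities $x$ and $y$ are exactly the entries of that Schur complement), then recover $\vc{r}$ from the third column and read off $\mx{\Lambda}$ — your formulas $x_r = M_{23}/M_{33}$, $y_r=-M_{13}/M_{33}$ coincide with the paper's. Your explicit characterization of the image of the $\mx{J}$-congruence and the Sylvester-inertia remark are nice clarifications, but the argument is the same.
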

\begin{proof}
	See appendix~\ref{sec:appx2}
\end{proof}

\subsection{Force and velocity relations}

The limit surfaces and the relation between the friction wrench exerted on the hand through the patch $\mx{w}_h $ and the wrench affecting the object $\mx{w}_o $ are:
\begin{align}
\vc{w}_o^T \mx{A} \vc{w}_o &= 1, \label{eq:HWo}\\
\vc{w}_h^T \mx{B} \vc{w}_h &= 1, \label{eq:HWh} \\ 
\vc{w}_o - \mx{G}^T \vc{w}_h &= \vc{0}, \label{eq:forceBalance}
\end{align}
where $\mx{G} := \mx{G}(\vc{q}_{rel})$ denotes the Jacobian corresponding to the relative coordinates of frame $\{\mathrm{H}\}$ with respect to $\{\mathrm{O}\}$.
Equation~\eqref{eq:forceBalance} is derived from the fact that the wrenches on the object sum to zero under the assumption of quasi-static manipulation, i.e., the inertial forces are negligible.
Additionally, we have these velocity relations
\begin{align}
\vc{\nu}_o &= - k_1  \mx{A} \vc{w}_o, \quad k_1 \geq 0 \label{eq:oVelForce}\\
\vc{\nu}_{rel} &= - k_2 \mx{B} \vc{w}_h,\quad k_2 \geq 0 \label{eq:hVelForce}\\
\vc{\nu}_{rel} &=  \vc{\nu}_h - \mx{G} \vc{\nu}_o, \label{eq:Vrel}
\end{align}
where $\vc{\nu}_{rel}$ denotes the relative twist of the patch with respect to the object expressed in $\{\mathrm{H}\}$. 
Equations~\eqref{eq:oVelForce} and~\eqref{eq:hVelForce} are the counterparts of~\eqref{eq:gradH} while~\eqref{eq:Vrel} is obtained by first transforming $\vc{\nu}_o$ to the frame of the patch and then subtracting it from the twist of the patch.

\subsection{Solution}
Using~\eqref{eq:forceBalance} it is possible to rewrite~\eqref{eq:HWo} as
\begin{align}
\vc{w}_h^T \hat{\mx{A}} \vc{w}_h = 1, \label{eq:HWot}
\end{align}
where $\hat{\mx{A}} = \mx{G} \mx{A} \mx{G}^T$ characterizes the limit surface of the object at frame $\{\mathrm{H}\}$. 
By solving the generalized eigenvalue problem $\mx{B} \mx{\Phi} = \hat{\mx{A}} \mx{\Phi} \mx{\Lambda}$, we can simultaneously diagonalize $\hat{\mx{A}}$ and $\mx{B}$ such that 
\begin{align*}
\mx{\Lambda} &= \mx{\Phi}^T \mx{B} \mx{\Phi}, \\
\mx{I} &= \mx{\Phi}^T \hat{\mx{A}} \mx{\Phi},
\end{align*}
where $\mx{I} \in \mathbb{R}^{3\times 3}$ denotes the identity matrix.
Thus, by applying $\vc{w}_h = \mx{\Phi} \vc{w}$ we transform~\eqref{eq:HWh} and~\eqref{eq:HWot} to 
\begin{subequations} \label{eq:intersecES}
\begin{align}
\vc{w}^T \mx{\Lambda} \vc{w} &= 1, \label{eq:ellips}\\
\vc{w}^T \vc{w} &= 1. \label{eq:sphere}
\end{align}
\end{subequations}
Moreover, by subtracting~\eqref{eq:sphere} from \eqref{eq:ellips}, we find the normal form
\begin{subequations} \label{eq:intersec}
\begin{align}
\vc{w}^T \mx{C} \vc{w} &= 0, \label{eq:intersec1}\\
\vc{w}^T \vc{w} &= 1,  \label{eq:intersec2}
\end{align}
\end{subequations}
where $\mx{C} := \mx{\Lambda} - \mx{I}$ is a diagonal matrix.
Note that if there is a solution to~\eqref{eq:intersec}, it is possible to recover the wrenches at the patch and the object frames using the following relations
\begin{subequations} \label{eq:transform}
\begin{align}
	\vc{w}_h &= \mx{\Phi} \vc{w}, \label{eq:Wh}\\
	\vc{w}_o &= \mx{G}^T \vc{w}_h
\end{align}
\end{subequations}

In view of~\eqref{eq:intersecES}, feasible wrenches $\vc{w}$ lie on the intersection of an ellipsoid with the unit sphere.
Accordingly, there are several possible cases:
\begin{itemize}
	\item The limit surface of the object lies entirely inside the limit surface of the patch, hence $\mx{C} \prec 0$. Since any required forces for sliding can be provided through the patch, the hand sticks to the object ($\vc{\nu}_{rel} = \vc{0}$). The only possible mode in this case is called \emph{sticking}.
	\item The limit surface of the patch is entirely contained in the limit surface of the object, hence $\mx{C} \succ 0$. In this case, the hand cannot provide enough force through the patch for sliding the object against the surface, hence the object remains still and the patch slides against it ($\vc{\nu}_{o} = \vc{0}$). We call the corresponding mode \emph{slipping}.
	\item Otherwise, there exists a $\vc{\nu}_h$ for which the hand can move the object while allowing it to pivot. We call this mode \emph{pivoting}.  An example in which pivoting is possible is illustrated in Figure~\ref{fig:interSphere}.
\end{itemize}

\begin{figure}
	\begin{center}
		\includegraphics[width=0.8\linewidth]{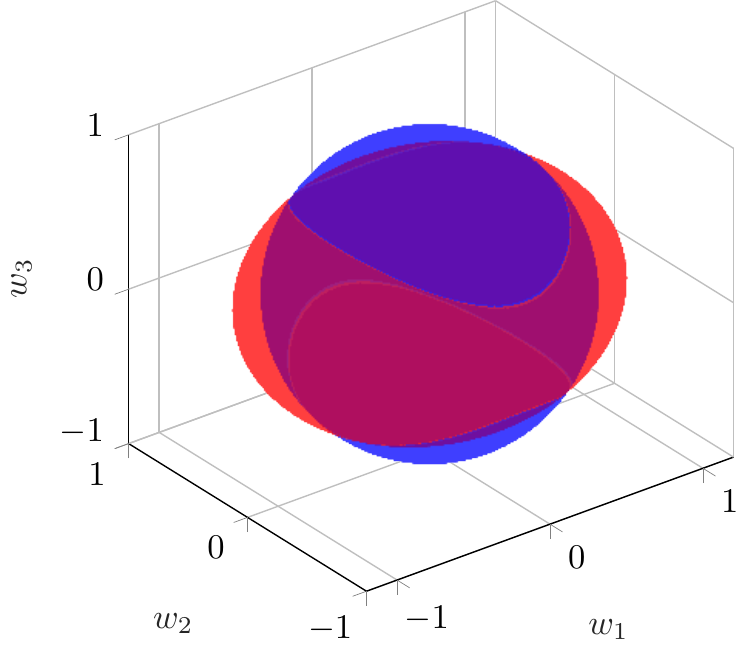}
		\caption{Visualization of Eq.~\eqref{eq:intersecES} for an example where pivoting mode is possible. The vector $\vc{w}$ is unitless.}
		\label{fig:interSphere}
	\end{center}
\end{figure}

Using the transformations~\eqref{eq:transform}, it is also possible to rewrite ~\eqref{eq:oVelForce}--\eqref{eq:Vrel} to obtain
\begin{align}
	\mx{\Phi}^T \mx{G} \vc{\nu}_o &= - k_1  \vc{w}, \quad k_1 \geq 0 \label{eq:oVelForceT} \\
	\mx{\Phi}^T \vc{\nu}_{rel} &= - k_2 \mx{\Lambda} \vc{w},\quad k_2 \geq 0 \label{eq:rVelForceT}\\
	\vc{\nu}_{rel} &=  \vc{\nu}_h - \mx{G} \vc{\nu}_o. \label{eq:VrelT}
\end{align}
From~\eqref{eq:oVelForceT}--\eqref{eq:VrelT}, we conclude
\begin{align*}
\tilde{\vc{\nu}}_h  = - (k_1 \mx{I} + k_2 \mx{\Lambda}) \vc{w},
\end{align*}
where $\tilde{\vc{\nu}}_h := \mx{\Phi}^T \vc{\nu}_h$.
Let us define $\alpha = \dfrac{k_2}{k_1} \geq 0$. Accordingly, 
\begin{align}
	\vc{w} = -\dfrac{1}{k_1} (\mx{I} + \alpha \mx{\Lambda})^{-1} \tilde{\vc{\nu}}_h. \label{eq:wvh}
\end{align}
Substituting~\eqref{eq:wvh} into~\eqref{eq:intersec1} results in
\begin{align}
    \tilde{\vc{\nu}}_h^T \mx{C} (\mx{I} + \alpha \mx{\Lambda})^{-2} \tilde{\vc{\nu}}_h = 0, \label{eq:alpha4vh}
\end{align}
which is equivalent to
\begin{multline}
	    c_1 \left(\dfrac{\tilde{v}_{xh}}{\alpha \lambda_1 + 1} \right)^2 +
    c_2 \left(\dfrac{\tilde{v}_{yh}}{\alpha \lambda_2 + 1} \right)^2 +
    c_3 \left(\dfrac{\tilde{\omega}_{h}}{\alpha \lambda_3 + 1} \right)^2 \\= 0,  \label{eq:solAlpha}
\end{multline}
where $c_i$ and $\lambda_i$ are the diagonal elements of $\mx{C}$ and $\mx{\Lambda}$, respectively.
Equation~\eqref{eq:solAlpha} can be solved for $\alpha$. Afterwards, by substituting~\eqref{eq:wvh} into~\eqref{eq:intersec2}, it is possible to calculate $k_1$.  

A relation between $\vc{\nu}_o$ and $\vc{\nu}_h$ can also be found by substituting~\eqref{eq:wvh} back to~\eqref{eq:oVelForceT}
\begin{align*}
{_h}\tilde{\vc{\nu}}_o =  (\mx{I} + \alpha \mx{\Lambda})^{-1} \tilde{\vc{\nu}}_h.
\end{align*}
After some algebraic manipulations, we have
\begin{align}
{_h}\vc{\nu}_o &= \hat{\mx{A}} (\hat{\mx{A}} + \alpha \mx{B})^{-1} \vc{\nu}_h \nonumber \\
	&= (\mx{I} + \alpha \mx{B} \hat{\mx{A}}^{-1})^{-1} \vc{\nu}_h, \label{eq:hvovh}
\end{align}
where ${_h}\vc{\nu}_o := \mx{G} \vc{\nu}_o$ is the twist of the object expressed in $\{\mathrm{H}\}$.
Using~\eqref{eq:VrelT}, we find the relative twist to be
\begin{align}
	\vc{\nu}_{rel} &= \left(\mx{I} + (\alpha \mx{B} \hat{\mx{A}}^{-1})^{-1} \right)^{-1} \vc{\nu}_h \nonumber \\
		&= \alpha (\alpha \mx{I} + \hat{\mx{A}}\mx{B}^{-1})^{-1} \vc{\nu}_h. \label{eq:vrvh}
\end{align}

When the patch slides against the object, there is a pivot point, which can be determined by finding the point where the object and the patch have the same velocity. In other words, the \emph{pivot point} is the instantaneous center of rotation (COR) between the patch and the object. Using the velocity transfer relation according to Proposition~\ref{thm:trans}, we conclude the location of the pivot point in the hand frame is
\begin{align}
	\vc{p} := [x_p,\, y_p]^T =\dfrac{1}{ \omega_r} [-v_{yr},\, v_{xr}]^T, \label{eq:ppoint}
\end{align}
where $\vc{\nu}_{rel} = \left[v_{xr},\, v_{yr},\, \omega_r \right]^T $ denotes the relative twist of the patch with respect to the object expressed in $\{\mathrm{H}\}$.

In sticking mode, the pivot point is indeterminate and we may choose any point, e.g., the origin of $\{\mathrm{H}\}$. However, at the boundary of pivoting and sticking modes, it is possible to make the pivot point a continuous function by evaluating the limit as $\alpha \to 0$. In view of~\eqref{eq:vrvh}, this is equivalent of substituting~$\vc{\nu}_{rel}$ in~\eqref{eq:ppoint} with
\begin{align*}
	\bar{\vc{\nu}}_{rel} = \mx{B}\hat{\mx{A}}^{-1} \vc{\nu}_h.
\end{align*}

\subsection{Regions of validity} \label{sec:regVal}
If there is $\alpha > 0$ to satisfy~\eqref{eq:solAlpha}, the pivoting mode is active, which implies having a finite pivot point. Otherwise, the wrenches can be calculated to identify which mode is valid. In sticking mode, from the twist of the patch and the fact that the object slides on the surface, we can easily calculate $\vc{w}$
\begin{subequations}
\begin{align}
\tilde{\vc{\nu}}_h &= -k_1 \vc{w}, \label{eq1:objSlide} \\
1 &= \vc{w}^T \vc{w}. \label{eq2:objSlide}
\end{align}
\end{subequations}
Then, the sticking mode is valid if the contact between the patch and the object can be sustained by the friction, i.e.,
\begin{align}
\vc{w}^T \mx{\Lambda}\vc{w}  < 1. \label{eq:condStickW}
\end{align}
Subtracting~\eqref{eq2:objSlide} from~\eqref{eq:condStickW} results in
\begin{align}
\vc{w}^T \mx{C} \vc{w} < 0. \label{eq:stickCond}
\end{align}
Using~\eqref{eq1:objSlide}, it is possible rewrite the condition as
\begin{align}
\tilde{\vc{\nu}}_h^T \mx{C} \tilde{\vc{\nu}}_h < 0. \label{eq:cond0Stick}
\end{align}

Note that whenever $\alpha = 0$, the relative velocity is zero and hence the mode is sticking. Since in this case Equation~\eqref{eq:alpha4vh} degenerates to condition~\eqref{eq:cond0Stick} with an equality sign, we extend the condition to include also its boundary. Accordingly, in sticking mode
\begin{align}
\tilde{\vc{\nu}}_h^T \mx{C} \tilde{\vc{\nu}}_h \leq 0, \label{eq:condStick}
\end{align}
or equivalently
\begin{align}
	\vc{\nu}_h^T \hat{\mx{A}}^{-1} \left( \mx{B} - \hat{\mx{A}}  \right)  \hat{\mx{A}}^{-1}\vc{\nu}_h
	 \leq 0. \label{eq:motionCone}
\end{align}
Similarly, in slipping mode
\begin{subequations}
\begin{align}
	\tilde{\vc{\nu}}_h &= -k_2 \mx{\Lambda} \vc{w}, \label{eq1:handSlide} \\
	1 &=\vc{w}^T \mx{\Lambda}\vc{w}. \label{eq2:handSlide} 
\end{align}
\end{subequations}And the mode is valid if 
\begin{align}
	\vc{w}^T \vc{w} < 1. \label{eq:condSlipW}
\end{align}
Subtracting~\eqref{eq:condSlipW} from~\eqref{eq2:handSlide} results in
\begin{align*}
	\vc{w}^T \mx{C} \vc{w} > 0. 
\end{align*}
Using a similar argument as mentioned before, we extend the boundary to include the case $\alpha \to \infty$ and express the condition using~\eqref{eq1:handSlide} as
\begin{align}
	\tilde{\vc{\nu}}_h^T \mx{C}  \mx{\Lambda}^{-2}  \tilde{\vc{\nu}}_h = \tilde{\vc{\nu}}_h^T \mx{C}  (\mx{C} +\mx{I})^{-2}  \tilde{\vc{\nu}}_h \geq 0, \label{eq:condSlip}
\end{align}
or equivalently
\begin{align}
	\vc{\nu}_h^T  \mx{B}^{-1} \left( \hat{\mx{A}} - \mx{B}  \right)  \mx{B}^{-1} \vc{\nu}_h \geq 0.
\end{align}

\section{Dynamical system}\label{sec:dynSys}

The quasi-static behavior of the system is determined based on  the modes of the system. There are three modes depending on which contact surface can provide enough friction to avoid slippage. To determine the active mode, i.e., the discrete state of the system, when $\vc{\nu}_h \neq 0$ first we solve~\eqref{eq:solAlpha}. If there is a positive real solution, then the pivoting mode is selected. Otherwise, depending on whether the condition~\eqref{eq:condStick} or~\eqref{eq:condSlip} is fulfilled, sticking or slipping mode is selected, respectively. When $\vc{\nu}_h = 0$, the definition of the mode is somewhat arbitrary. Hence, if neither $\mx{C} \prec 0$ nor $\mx{C} \succ 0$, we set the mode to pivoting. This procedure is summarized in Algorithm~\ref{alg:mode}.

\begin{algorithm}[b]
	\caption{Mode selection}\label{alg:mode}
	\begin{minipage}[t]{.57\columnwidth}
	$\vc{\nu}_h \neq 0$
		\begin{algorithmic}[1]
		\State Solve for $\alpha$ in \eqref{eq:solAlpha}
		\If{ $\exists \alpha > 0$ }
			\State mode $\gets$ pivoting
		\ElsIf { $\tilde{\vc{\nu}}_h^T \mx{C} \tilde{\vc{\nu}}_h \leq 0$ }
			\State mode $\gets$ sticking
		\Else
			\State mode $\gets$ slipping
		\EndIf
		\end{algorithmic}
	\end{minipage}%
	\begin{minipage}[t]{.43\columnwidth}
	$\vc{\nu}_h = 0$
	\begin{algorithmic}[1]
		\If{ $\mx{C} \prec 0$ }
			\State mode $\gets$ sticking
		\ElsIf { $\mx{C} \succ 0$ }
			\State mode $\gets$ slipping
		\Else
			\State mode $\gets$ pivoting
		\EndIf
	\end{algorithmic}
	\end{minipage}
\end{algorithm}

By integrating the twists in the global frame, we find the generalized coordinates of the patch and the object. This implies the dynamical system
\begin{subequations} \label{eq:dynSys1}
\begin{align}
	\dot{\vc{q}}_h &= \mx{R}(\theta_h) \vc{\nu}_h, \\
	\dot{\vc{q}}_o &= \mx{R}(\theta_o) \vc{\nu}_o. \label{eq:dynO}
\end{align}
\end{subequations}

In pivoting mode, i.e., when $\exists \alpha > 0$, using~\eqref{eq:hvovh} we find
\begin{align}
\vc{\nu}_o  =  \mx{G}^{-1} \left( \mx{I} + \alpha \mx{B} \hat{\mx{A}}^{-1}\right)^{-1} \vc{\nu}_h,\label{eq:voPivot}
\end{align}
where $\mx{G} := \mx{G}(\vc{q}_{rel})$ and $\vc{q}_{rel} = \mx{R}(-\theta_o) \left(\vc{q}_{h} - \vc{q}_{o}\right)$.
Otherwise, in sticking mode, using~\eqref{eq:VrelT} and the fact that $\vc{\nu}_{rel} = 0$ we obtain
\begin{align}
	\vc{\nu}_o = \mx{G}^{-1} \vc{\nu}_h. \label{eq:voStick}
\end{align}
In slipping mode, the velocity of the object is zero,
\begin{align}
	\vc{\nu}_o = \vc{0}.
\end{align}

The wrenches at the hand $\vc{w}_h$ can also be expressed in terms of the states and $\vc{\nu}_o$, which in turn is a function of the input. Using~\eqref{eq:kLS} together with~\eqref{eq:forceBalance}, we conclude
\begin{align}
	\vc{w}_h = - \dfrac{ \mx{G}^{-T} \mx{A}^{-1} \vc{\nu}_o}{\sqrt{\vc{\nu}_o^T \mx{A}^{-1} \vc{\nu}_o}},
\end{align}
Accordingly, in pivoting mode using~\eqref{eq:voPivot} we have
\begin{align}
	\vc{w}_h = - \dfrac{1}{k_1} \left( \hat{\mx{A}} + \alpha \mx{B}\right)^{-1} \vc{\nu}_h,
\end{align}
where
\begin{align*}
	k_1 = \left({\vc{\nu}_h^T \left( \hat{\mx{A}} + \alpha \mx{B}\right)^{-1} \hat{\mx{A}} \left( \hat{\mx{A}} + \alpha \mx{B}\right)^{-1} \vc{\nu}_h}\right)^{1/2}.
\end{align*}
And in sticking mode,
\begin{align}
	\vc{w}_h = - \dfrac{ \hat{\mx{A}}^{-1} \vc{\nu}_h}{\sqrt{\vc{\nu}_h^T \hat{\mx{A}}^{-1} \vc{\nu}_h}}.
\end{align}
In slipping mode, $\vc{\nu}_r = \vc{\nu}_h$. Therefore, using~\eqref{eq:kLS} it is concluded that
\begin{align}
	\vc{w}_h = - \dfrac{\mx{B}^{-1} \vc{\nu}_h}{\sqrt{\vc{\nu}_h^T \mx{B}^{-1} \vc{\nu}_h}}.
\end{align}

\subsection{Effect of normal forces}\label{sec:normalforce}
Our formulation is generic with respect to $\mx{A}$ and $\mx{B}$ describing the limit surfaces, as long as the matrices are positive definite. In fact, both matrices can be time-varying, specifically when the COP of the object does not have a fixed transformation to its COM or when the patch is deforming as a result of variations in normal or tangential forces.

For surfaces with homogeneous friction coefficients and symmetrical pressure distributions, with no deformation of contact areas as a result of varying normal forces, the trajectory depends only on the ratio between normal forces at the Hand-Object (HO) and the Object-Environment (OE) contacts, and not their absolute values. The reason is that given these assumptions, the normal forces as well as the friction coefficients can be factorized from $\mx{A}$ and $\mx{B}$, and in the solution only the ratio will appear. Nevertheless, the friction forces will be scaled.

In general, when the normal force at the patch is changed, the frictions related to HO and OE are not proportionally changed. Firstly, the lower surface has to additionally support the weight of the object, secondly the pressure distribution may vary and become stronger closer to the patch, and thirdly deformation of the patch may increase its contact area.

To exactly model the effect of normal force, it is required to know the pressure distributions and their variation. This is not a simple task as the pressure distribution depends in general on the stiffness of the contact surfaces, geometry of the contact, and relative velocities. Particularly, the friction patch may go through large deformations as a function of normal forces.

To get an understanding of the effect of normal force, consider a special case where a flat object and a sphere-shaped soft finger following a Hertzian law are in contact. Denoting the normal force on the sphere by $f_n$, the pressure distribution at radius $r$ is~\citep{johnson_1985} 
\begin{align}
	p(r) = p_0\left(1-\dfrac{r^2}{a^2}\right)^{1/2},
\end{align}
where
\begin{align}
	p_0 = \dfrac{3}{2\pi a^2} f_n
\end{align}
and $a$ is the radius of the contact area.
Using this pressure distribution, Equation~\eqref{eq:LSInts} allows us to calculate the maximum friction force and torque
\begin{align}
	f_{max} &= \mu f_n, \\
	m_{max} &= \mu \dfrac{3 \pi}{16}  a f_n.
\end{align}
By changing the normal force, the radius of the contact area increases according to
\begin{align}
	a = \left(\dfrac{3}{4} \dfrac{R}{E^*} f_n\right)^{1/3},
\end{align}
where $R$ is the radius of the sphere and $E^*$ is the effective elastic modulus.
As it can be seen, while tangential forces depend linearly on the normal force, the torque has a nonlinear dependence because of the increase in contact area. Accordingly, it is possible to change the ratio of the torsional to tangential friction of the patch.

Another observation is that by pressing the patch harder, the COP of the object shifts more toward the patch. Although modeling the exact physical phenomenon is complicated, we can easily incorporate this effect using a computational model. For example, define $s \in \mathbb{R}$ to be a value between zero and one characterizing the percentage of the shift of the COP
\begin{align}
	s = 1 - (c \dfrac{f_n}{m g} + 1)^{-\delta}, \label{eq:COPmodel}
\end{align}
where $c$ and $\delta$ are model parameters and $m g$ is the weight of the object.
Then, if the limit surface at the COP of the object is characterized by $\mx{A}_{COP}$ and~$\vc{r}$ denotes the relative position of the hand frame w.r.t. the object frame, the limit surface at the object frame is
\begin{align}
	\mx{A} =  \mx{J}(-s \vc{r}) \mx{A}_{COP} \mx{J}^T(-s \vc{r}).
\end{align}
A similar approach can be used to compensate for the shift of COPs due to relative velocities.
See Appendix~\ref{sec:appx3} for experimental validation of the proposed model for the shift of COP.

\section{Properties of the solution}

In this section, we describe a number of properties of the quasi-static sliding motion. Some are general properties concerning the well posedness of the problem and some are useful results for planning and control.

If none of the diagonal elements of $\mx{C}$ are zero, i.e, the matrix is full rank, we can prove that there is always a unique solution to the system. First we prove that there could exist at most one solution in pivoting mode. Secondly, we prove that there exists at least one active mode and it is impossible to have any two modes active at the same time. Note that when $\mx{C}$ is rank deficient, $\vc{\nu}_o$ is indeterminate for certain $\vc{\nu}_h$ since the forces can be balanced for any value of $\alpha \geq 0$. For example, in one-dimensional space, as a consequence of Coulomb friction model, if the friction force at HO contact is exactly the same as the friction at OE contact, the object can have any velocity in the range from zero to the hand velocity.

\begin{thm}\label{thm:no2alpha}
If $\mx{C}$ is full rank and $\vc{\nu}_h \neq 0$, there is at most one positive solution ($\alpha > 0$) to Equation~\eqref{eq:solAlpha}.
\end{thm}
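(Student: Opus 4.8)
The plan is to recast \eqref{eq:solAlpha} as a level-set condition for a strictly monotone function and thereby bound the number of positive roots by one. Writing $d_i := \alpha \lambda_i + 1$ and $b_i \geq 0$ for the squared components of $\tilde{\vc{\nu}}_h = \mx{\Phi}^T \vc{\nu}_h$, and using $c_i = \lambda_i - 1$ (which holds since $\mx{C} = \mx{\Lambda} - \mx{I}$), the left-hand side of \eqref{eq:solAlpha}, equivalently the quadratic form in \eqref{eq:alpha4vh}, becomes $f(\alpha) = \sum_i (\lambda_i - 1) b_i / d_i^2 = Q(\alpha) - K(\alpha)$, where $Q(\alpha) := \sum_i \lambda_i b_i / d_i^2$ and $K(\alpha) := \sum_i b_i / d_i^2$. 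Because $\hat{\mx{A}}$ and $\mx{B}$ are positive definite, all generalized eigenvalues satisfy $\lambda_i > 0$, so $d_i > 0$ and $f$ is smooth on $[0,\infty)$. Since $\vc{\nu}_h \neq 0$ and $\mx{\Phi}$ is invertible, $\tilde{\vc{\nu}}_h \neq 0$, hence at least one $b_i > 0$ and $K(\alpha) > 0$ for every $\alpha \geq 0$.

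First I would introduce $R(\alpha) := Q(\alpha)/K(\alpha)$, which is a convex combination of the $\lambda_i$ with positive weights $b_i/d_i^2$. Because $K > 0$, the factorization $f = K\,(R-1)$ shows that \eqref{eq:solAlpha} holds if and only if $R(\alpha) = 1$; it therefore suffices to prove that $R$ is strictly decreasing on $[0,\infty)$, since a strictly monotone continuous function attains the value $1$ at most once. The intuition is that increasing $\alpha$ down-weights the larger eigenvalues faster (the factor $d_i^{-2}$ decays faster for larger $\lambda_i$), pulling the weighted mean downward.

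The key step is to sign $R' = (Q'K - QK')/K^2$. I would differentiate $Q$ and $K$ termwise, expand $Q'K - QK'$ as a double sum over pairs $(i,j)$, and symmetrize by pairing $(i,j)$ with $(j,i)$. The pairing collapses through the identity $\lambda_i d_j - \lambda_j d_i = \lambda_i - \lambda_j$, yielding $Q'K - QK' = -2\sum_{i<j} b_i b_j\,(\lambda_i-\lambda_j)^2\, d_i^{-3} d_j^{-3} \le 0$. Thus $R$ is nonincreasing, and strictly decreasing unless every pair of indices with $b_i, b_j > 0$ shares a common eigenvalue.

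The remaining obstacle, and the only place where the full-rank hypothesis on $\mx{C}$ is genuinely needed, is the degenerate case in which strict monotonicity fails, i.e.\ all indices $i$ with $b_i > 0$ carry one common value $\lambda$. I would dispose of it by direct inspection: there $f(\alpha) = (\lambda - 1)\big(\sum_{b_i>0} b_i\big) / (\alpha\lambda+1)^2$, and since $\mx{C}$ is full rank we have $\lambda - 1 = c \neq 0$, so $f$ never vanishes and \eqref{eq:solAlpha} has \emph{no} positive solution. In every non-degenerate case $R$ is strictly decreasing, so $R = 1$ is met at most once. Combining the two cases gives at most one $\alpha > 0$, as claimed. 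The main subtlety is the symmetrization producing the manifestly signed sum of squares; the degenerate bookkeeping is the secondary point requiring care.
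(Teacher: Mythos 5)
Your proof is correct, but it takes a genuinely different route from the paper's. The paper argues by contradiction: assuming two positive roots $\alpha_1 < \alpha_2$ of \eqref{eq:solAlpha}, it multiplies the equation at each root by $(\alpha_i\lambda_1+1)^2$ to eliminate the term carrying the largest eigenvalue, subtracts, and exploits the monotonicity of the ratios $f_{1,j}(\alpha)=\bigl((\alpha\lambda_1+1)/(\alpha\lambda_j+1)\bigr)^2$ together with a case split on which eigenvalues lie above and below $1$ to force $\tilde{\vc{\nu}}_h=\vc{0}$. You instead prove a stronger global statement: factoring the left-hand side as $K(\alpha)\,(R(\alpha)-1)$ with $K>0$ and $R$ a $d_i^{-2}$-weighted mean of the $\lambda_i$, the identity $\lambda_i d_j-\lambda_j d_i=\lambda_i-\lambda_j$ collapses the symmetrized numerator of $R'$ to $-2\sum_{i<j}b_i b_j(\lambda_i-\lambda_j)^2 d_i^{-3}d_j^{-3}\le 0$, so $R$ is strictly decreasing except in the degenerate case where all active indices share one eigenvalue, and that case is disposed of outright by the full-rank hypothesis ($c\neq 0$ makes the left-hand side nowhere zero). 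I have checked both the symmetrization computation and the degenerate-case bookkeeping; they are sound. Both arguments ultimately rest on the same monotonicity of the ratios $d_i/d_j$, but yours avoids the sign case analysis entirely, isolates exactly where full rank of $\mx{C}$ is needed, and shows as a by-product that any positive root is a simple crossing of the level set $R=1$ of a strictly monotone function --- which is also of practical value when solving \eqref{eq:solAlpha} numerically in Algorithm~\ref{alg:mode}.
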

\begin{proof}
See Appendix~\ref{sec:appx0}.
\end{proof}

\begin{thm} \label{thm:unique}
If $\mx{C}$ is full rank and $\vc{\nu}_h \neq \vc{0}$, the mode is uniquely determined. 
\end{thm}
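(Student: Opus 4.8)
The plan is to collapse the three mode–validity tests into sign statements about a single scalar function of the scalar parameter $\alpha$ and then show these signs partition the space of inputs. Writing $\tilde{\vc{\nu}}_h=\mx{\Phi}^T\vc{\nu}_h$, which is nonzero because $\mx{\Phi}$ is invertible and $\vc{\nu}_h\neq\vc{0}$, I set $\vc{w}(\alpha)=(\mx{I}+\alpha\mx{\Lambda})^{-1}\tilde{\vc{\nu}}_h$ and
\[
h(\alpha):=\vc{w}(\alpha)^T\mx{C}\,\vc{w}(\alpha)=\sum_i c_i\,\frac{\tilde\nu_i^2}{(1+\alpha\lambda_i)^2},
\]
where $\tilde\nu_i$ are the components of $\tilde{\vc{\nu}}_h$ and $c_i=\lambda_i-1$. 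By~\eqref{eq:alpha4vh} pivoting is active iff $h$ has a root $\alpha>0$; by~\eqref{eq:condStick} sticking is active iff $h(0)\le0$; and by~\eqref{eq:condSlip} slipping is active iff $g_\infty:=\lim_{\alpha\to\infty}\alpha^2h(\alpha)=\tilde{\vc{\nu}}_h^T\mx{C}\mx{\Lambda}^{-2}\tilde{\vc{\nu}}_h\ge0$. Thus the theorem reduces to proving that, for full-rank $\mx{C}$, exactly one of these holds, which in turn requires comparing the signs of $h(0)$ and $g_\infty$ and controlling the existence of an interior root.

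For the sticking--slipping pair I would use the elementary identity
\[
h(0)-g_\infty=\sum_i \tilde\nu_i^2\,\frac{(\lambda_i-1)^2(\lambda_i+1)}{\lambda_i^2}>0,
\]
which is strict because $\mx{C}$ is full rank (so $\lambda_i\neq1$ for all $i$), $\tilde{\vc{\nu}}_h\neq\vc{0}$ (so some $\tilde\nu_i\neq0$), and $\lambda_i>0$. Hence $g_\infty<h(0)$ always; in particular $h(0)\le0$ forces $g_\infty<0$, so sticking and slipping can never be active simultaneously.

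The pivoting part, and with it existence, I would settle through the quotient $\rho(\alpha):=\vc{w}^T\mx{\Lambda}\vc{w}/\vc{w}^T\vc{w}$, a weighted mean of the active $\lambda_i$ with positive weights $w_i(\alpha)=\tilde\nu_i^2/(1+\alpha\lambda_i)^2$, satisfying $\operatorname{sign}h(\alpha)=\operatorname{sign}(\rho(\alpha)-1)$ since $h=(\vc{w}^T\vc{w})(\rho-1)$; likewise $\operatorname{sign}h(0)=\operatorname{sign}(\rho(0)-1)$ and $\operatorname{sign}g_\infty=\operatorname{sign}(\rho_\infty-1)$ with $\rho_\infty:=\lim_{\alpha\to\infty}\rho(\alpha)$. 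The key lemma is that $\rho$ is non-increasing on $[0,\infty)$, strictly so unless all active $\lambda_i$ coincide: differentiating yields $\rho'(\alpha)$ proportional to $-\mathrm{Cov}_{w}(\lambda,s)$ with $s_i=\lambda_i/(1+\alpha\lambda_i)$ an increasing function of $\lambda_i$, so the weighted covariance is nonnegative. Consequently $\rho-1$ is monotone and crosses zero at most once and transversally, leaving exactly three possibilities: $\rho(\alpha)>1$ for all finite $\alpha$ (then $h>0$, giving only slipping, since $g_\infty\ge0$ and there is no root), $\rho(\alpha)<1$ throughout (only sticking), or $\rho(0)>1>\rho_\infty$ (a single positive root, hence pivoting, with $h(0)>0$ and $g_\infty<0$ excluding the other two). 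I would finish by treating the boundary equalities $h(0)=0$ and $g_\infty=0$ separately: strict monotonicity places the unique zero of $\rho-1$ exactly at $\alpha=0$ or in the limit, so no positive root exists and the mode is sticking or slipping, respectively.

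The \emph{main obstacle} is the monotonicity lemma for $\rho$ together with the degenerate bookkeeping it entails---a single active component (where $\rho$ is constant and one checks directly that $c_i\neq0$ fixes the sign of $h$), repeated eigenvalues, and the two boundary equalities---since these are precisely the situations in which a naive root count would leave room for a spurious tangential intersection. This monotonicity refines Theorem~\ref{thm:no2alpha}, which already guarantees at most one positive root and may be invoked to corroborate transversality.
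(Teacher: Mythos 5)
Your proposal is correct, but it takes a genuinely different route from the paper. The paper splits the claim into two separate arguments: existence of an active mode is obtained by an intermediate-value argument on $g(\alpha)=f(1,\alpha)$ after observing that $\sgn f(\beta,\gamma)$ depends only on $\gamma/\beta$, while mutual exclusivity is proved pairwise by the same elimination trick used for Theorem~\ref{thm:no2alpha}: under an assumed ordering such as $\lambda_1>1>\lambda_2>\lambda_3$, the term carrying the largest eigenvalue is cancelled by multiplying by $\lambda_1^2$ (resp.\ $(\alpha_i\lambda_1+1)^2$), and the remaining two terms are shown to have a fixed sign, with the other eigenvalue orderings handled "similarly." Your argument instead packages everything into the single function $\rho(\alpha)=\vc{w}^T\mx{\Lambda}\vc{w}/\vc{w}^T\vc{w}$ and its monotonicity (the weighted Chebyshev/covariance inequality), which yields existence, exclusivity, \emph{and} the at-most-one-root statement of Theorem~\ref{thm:no2alpha} in one stroke, without any case split over the sign pattern of $\mx{C}$; your identity $h(0)-g_\infty=\sum_i\tilde\nu_i^2(\lambda_i-1)^2(\lambda_i+1)/\lambda_i^2>0$ is a particularly clean substitute for the paper's computation in~\eqref{eq:slip-stick}. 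What the paper's approach buys is elementariness — pure algebraic sign-chasing with no differentiation — at the cost of repeating the elimination for each eigenvalue ordering and for each pair of modes; what yours buys is uniformity and a structural explanation (a monotone Rayleigh quotient crossing the level $1$ at most once), at the cost of the covariance computation and the degenerate bookkeeping you correctly flag (constant $\rho$ when all active $\lambda_i$ coincide, and the boundary equalities $h(0)=0$, $g_\infty=0$), all of which your setup does dispose of as described.
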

\begin{proof}
See Appendix~\ref{sec:appx}.
\end{proof}
Note that the transition between modes can happen smoothly. For example, the pivot point can gradually shift to infinity as the patch starts to slide against the object or go to a limit value as the patch and the object approach the same velocity.

\begin{thm}\label{thm:vScale}
Scaling the twist of the patch by a constant results in the twist of the object being scaled by the same constant, i.e.,
if $\vc{\nu}_h$ results in  $\vc{\nu}_o$, then $c \vc{\nu}_h$ results in  $ c \vc{\nu}_o$.
\end{thm}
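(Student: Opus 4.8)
The plan is to exploit the fact that all configuration-dependent quantities appearing in the solution---namely $\mx{G}$, $\hat{\mx{A}}$, $\mx{B}$, and hence the simultaneous diagonalizer $\mx{\Phi}$ together with $\mx{\Lambda}$ and $\mx{C} = \mx{\Lambda}-\mx{I}$---are functions of the relative coordinates $\vc{q}_{rel}$ alone and carry no dependence on the input twist $\vc{\nu}_h$. Replacing $\vc{\nu}_h$ by $c\vc{\nu}_h$ therefore leaves every one of these matrices untouched, so the entire dependence on the magnitude of $\vc{\nu}_h$ is funnelled through the scalar $\alpha$ and the linear maps in \eqref{eq:voPivot} and \eqref{eq:voStick}. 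I would organise the argument around the mode-selection procedure of Algorithm~\ref{alg:mode}, treating each branch separately after first establishing that the branch selected is itself insensitive to scaling.

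The key step---on which everything else rests---is that the value of $\alpha$ solving \eqref{eq:solAlpha} is invariant under $\vc{\nu}_h \mapsto c\vc{\nu}_h$. Writing the equation in the compact form \eqref{eq:alpha4vh}, namely $\tilde{\vc{\nu}}_h^T \mx{C}(\mx{I}+\alpha\mx{\Lambda})^{-2}\tilde{\vc{\nu}}_h = 0$ with $\tilde{\vc{\nu}}_h = \mx{\Phi}^T\vc{\nu}_h$, I note that scaling $\vc{\nu}_h$ by $c$ scales $\tilde{\vc{\nu}}_h$ by $c$ and hence scales the left-hand side by $c^2$. Since the right-hand side is zero, the set of admissible $\alpha$ is unchanged: the solution depends only on the direction of $\vc{\nu}_h$, not on its length (and when $\mx{C}$ is full rank this $\alpha$ is moreover unique by Theorem~\ref{thm:no2alpha}). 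The same $c^2$-homogeneity shows that the sign-definite conditions \eqref{eq:condStick} and \eqref{eq:condSlip} are preserved, since for $c\neq 0$ multiplying a quadratic form by $c^2>0$ cannot flip a $\leq$ or a $\geq$; hence Algorithm~\ref{alg:mode} returns the identical discrete mode for $c\vc{\nu}_h$ as for $\vc{\nu}_h$.

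With the mode and $\alpha$ fixed, I would finish by reading off the object twist in each branch. In pivoting mode, \eqref{eq:voPivot} gives $\vc{\nu}_o = \mx{G}^{-1}(\mx{I}+\alpha\mx{B}\hat{\mx{A}}^{-1})^{-1}\vc{\nu}_h$; with $\alpha$ fixed the bracketed matrix is a constant, so the map $\vc{\nu}_h \mapsto \vc{\nu}_o$ is linear and scaling the input by $c$ scales the output by $c$. In sticking mode \eqref{eq:voStick} reads $\vc{\nu}_o = \mx{G}^{-1}\vc{\nu}_h$, already linear, and in slipping mode $\vc{\nu}_o = \vc{0}$ scales trivially. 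Collecting the three cases yields $c\vc{\nu}_h \mapsto c\vc{\nu}_o$, which is the claim; note that the argument is valid for negative $c$ as well, consistent with the assumed origin-symmetry of the limit surfaces.

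The only delicate point, and the one I would state carefully rather than the routine linear algebra at the end, is this scale-invariance of $\alpha$: it is precisely because \eqref{eq:solAlpha} is a homogeneous equation set to zero---rather than to a nonzero constant---that the magnitude of $\vc{\nu}_h$ drops out. One might worry that the magnitude re-enters through the normalisation constant $k_1$ fixed by the unit-sphere constraint \eqref{eq:intersec2}; but $k_1$ appears only in the wrench expression \eqref{eq:wvh} and cancels in passing to the velocity relation \eqref{eq:hvovh}, so it plays no role in the twist map. I would make this cancellation explicit to rule out any hidden magnitude dependence.
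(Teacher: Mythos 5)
Your proposal is correct and follows essentially the same route as the paper: observe that $\alpha$ solving~\eqref{eq:solAlpha} is unchanged under scaling of $\vc{\nu}_h$ (because the equation is a homogeneous quadratic set to zero), and then read off linearity of the twist map from~\eqref{eq:voPivot} and~\eqref{eq:voStick}. Your additional remarks --- that the selected mode itself is scale-invariant and that $k_1$ cancels out of the velocity relation --- are details the paper leaves implicit, and they strengthen rather than alter the argument.
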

\begin{proof}
Note that $\alpha$ which is the solution to~\eqref{eq:solAlpha} is not affected by scaling $\vc{\nu}_h$. The proof is completed by considering the relations~\eqref{eq:voPivot} and~\eqref{eq:voStick}.
\end{proof}
\begin{cor} \label{cor:revTwist}
Reversing the twist of the patch reverses the twist of the object, which is proved by setting $c = -1$.
\end{cor}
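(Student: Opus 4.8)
The plan is to exploit the homogeneity of the governing equations in $\vc{\nu}_h$. First I would note that the transformed twist $\tilde{\vc{\nu}}_h = \mx{\Phi}^T \vc{\nu}_h$ is linear in $\vc{\nu}_h$, so replacing $\vc{\nu}_h$ by $c\vc{\nu}_h$ replaces $\tilde{\vc{\nu}}_h$ by $c\tilde{\vc{\nu}}_h$; in particular each component $\tilde{v}_{xh}$, $\tilde{v}_{yh}$, $\tilde{\omega}_h$ is scaled by $c$. The case $c = 0$ is trivial, since then both twists vanish, so I would assume $c \neq 0$ throughout.

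The central observation concerns the equation that determines $\alpha$. Inspecting~\eqref{eq:solAlpha}, every term carries a factor $\tilde{v}_{\cdot h}^2$, so scaling the twist by $c$ multiplies the entire left-hand side by $c^2$. Since $c^2 > 0$, the scaled equation has exactly the same solution set, and hence the value of $\alpha$ is invariant under the scaling. This is the key identity on which the whole theorem rests, and it is what the brief remark in the statement refers to.

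Before invoking the velocity relations, I would verify that scaling does not change which mode is active, since the closed-form map from $\vc{\nu}_h$ to $\vc{\nu}_o$ differs by mode. This follows because each mode-selection test is itself homogeneous of degree two: the pivoting test is precisely the solvability of~\eqref{eq:solAlpha} just shown to be invariant, while the sticking and slipping conditions~\eqref{eq:condStick} and~\eqref{eq:condSlip} are quadratic forms in $\tilde{\vc{\nu}}_h$ and therefore scale by $c^2$, leaving their signs---and hence the selected mode---unchanged. This holds for negative $c$ as well, which is exactly what makes Corollary~\ref{cor:revTwist} fall out immediately.

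With both the mode and $\alpha$ fixed, the conclusion is then immediate from linearity. In pivoting mode,~\eqref{eq:voPivot} expresses $\vc{\nu}_o$ as a fixed linear operator, depending only on $\mx{G}$, $\mx{B}$, $\hat{\mx{A}}$, and the now-fixed $\alpha$, applied to $\vc{\nu}_h$, so $c\vc{\nu}_h$ yields $c\vc{\nu}_o$; in sticking mode~\eqref{eq:voStick} gives the linear relation $\vc{\nu}_o = \mx{G}^{-1}\vc{\nu}_h$; and in slipping mode $\vc{\nu}_o = \vc{0}$ scales trivially. The only step I would state explicitly rather than gloss over is the mode-invariance argument, since it is the sole place where the proof could fail if any selection threshold were not homogeneous of degree two; everything else reduces to the linearity of the velocity maps once $\alpha$ is known to be constant.
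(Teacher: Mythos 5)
Your argument is correct and follows essentially the same route as the paper, which proves Corollary~\ref{cor:revTwist} by setting $c=-1$ in Theorem~\ref{thm:vScale}, whose proof likewise rests on the invariance of $\alpha$ under scaling of $\vc{\nu}_h$ (the left-hand side of~\eqref{eq:solAlpha} being homogeneous of degree two) followed by the linearity of~\eqref{eq:voPivot} and~\eqref{eq:voStick}. The only difference is that you explicitly verify that the mode-selection tests~\eqref{eq:condStick} and~\eqref{eq:condSlip} are also quadratic forms and hence sign-invariant under scaling, a step the paper leaves implicit; this is a worthwhile addition of rigor rather than a different approach.
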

\begin{cor} \label{cor:perTwist}
When $\omega_h = 0$, the location of the pivot point as a function of the velocity direction is periodic with period $\pi$. The proof is straightforward by considering~\eqref{eq:vrvh} and~\eqref{eq:ppoint}.
\end{cor}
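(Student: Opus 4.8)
The plan is to recognize that advancing the velocity direction by $\pi$ is nothing but reversing the patch twist, and then to check that the pivot-point formula is invariant under that reversal. Since $\omega_h = 0$, I would parameterize the direction by an angle $\phi$ and write $\vc{\nu}_h = \norm{\vc{v}_h}\,[\cos\phi,\, \sin\phi,\, 0]^T$, so that the patch twist is a function of $\phi$ (and of a magnitude which, by Theorem~\ref{thm:vScale}, will drop out of the pivot point). The key observation is that replacing $\phi$ by $\phi+\pi$ negates $[\cos\phi,\,\sin\phi]^T$, hence $\vc{\nu}_h(\phi+\pi) = -\vc{\nu}_h(\phi)$; that is, a half-turn in direction coincides exactly with the twist reversal of Corollary~\ref{cor:revTwist}.

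Next I would push this reversal through to the relative twist and then to the pivot point. The solution $\alpha$ of~\eqref{eq:solAlpha} is unaffected by the sign flip $\vc{\nu}_h \to -\vc{\nu}_h$, since~\eqref{eq:solAlpha} depends only on the squares $\tilde{v}_{xh}^2$, $\tilde{v}_{yh}^2$, $\tilde{\omega}_{h}^2$ (and $\tilde{\vc{\nu}}_h = \mx{\Phi}^T\vc{\nu}_h$ is linear). With $\alpha$ fixed, relation~\eqref{eq:vrvh} is a linear map applied to $\vc{\nu}_h$, so $\vc{\nu}_{rel}(\phi+\pi) = -\vc{\nu}_{rel}(\phi)$; writing $\vc{\nu}_{rel} = [v_{xr},\, v_{yr},\, \omega_r]^T$, all three components change sign. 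Substituting into~\eqref{eq:ppoint}, the numerator $[-v_{yr},\, v_{xr}]^T$ and the denominator $\omega_r$ each pick up a factor of $-1$, which cancel, giving $\vc{p}(\phi+\pi) = \vc{p}(\phi)$. This is precisely periodicity with period~$\pi$.

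There is no genuine obstacle here, as the statement is essentially an odd-symmetry argument, but the one point to handle cleanly is the invariance of $\alpha$ (and of the active mode) under the reversal, which is what makes~\eqref{eq:vrvh} applicable with the same transformation on both sides; once that is noted, the linearity of~\eqref{eq:vrvh} in $\vc{\nu}_h$ and the odd symmetry of~\eqref{eq:ppoint} do the rest. I would also remark that the same cancellation shows the pivot point is independent of the magnitude $\norm{\vc{v}_h}$, so it is indeed a function of direction alone and the period-$\pi$ claim is well posed. The boundary and non-pivoting cases pose no difficulty, since the relevant expression for $\vc{\nu}_{rel}$ (including the limit $\bar{\vc{\nu}}_{rel} = \mx{B}\hat{\mx{A}}^{-1}\vc{\nu}_h$) remains linear in $\vc{\nu}_h$ and the identical cancellation applies.
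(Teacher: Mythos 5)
Your proposal is correct and follows exactly the route the paper indicates: identifying a $\pi$-shift in direction with the twist reversal $\vc{\nu}_h \to -\vc{\nu}_h$, noting that $\alpha$ in~\eqref{eq:solAlpha} is even in $\tilde{\vc{\nu}}_h$ so that~\eqref{eq:vrvh} gives $\vc{\nu}_{rel} \to -\vc{\nu}_{rel}$, and observing that the sign cancels between numerator and denominator in~\eqref{eq:ppoint}. Your explicit remarks on the invariance of $\alpha$ and on the magnitude dropping out are exactly the details the paper leaves implicit.
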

Note that Corollary~\ref{cor:revTwist} does not guarantee that reversing a trajectory will bring the object back to its initial state. This is due to the sensitivity of the dynamical system to perturbations, e.g., reversing from the vicinity of a fixed-point is very sensitive to noise.

\begin{thm}
Given the quasi-static assumption for sliding, the path of the object is invariant under scaling of the patch twist according to $c \vc{\nu}_h(ct)$ for any constant $c \in \mathbb{R}  \neq 0$. 
\end{thm}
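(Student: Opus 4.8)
The plan is to exhibit the trajectory generated by the scaled input $\tilde{\vc{\nu}}_h(t) := c\,\vc{\nu}_h(ct)$ explicitly as a time-reparameterization of the original one, and then invoke uniqueness of solutions to conclude. I regard the full state as $(\vc{q}_h,\vc{q}_o)$, governed by the dynamical system~\eqref{eq:dynSys1} together with the configuration-dependent map $\vc{\nu}_h \mapsto \vc{\nu}_o$ furnished by~\eqref{eq:voPivot} and~\eqref{eq:voStick}, the active mode being fixed by~\eqref{eq:solAlpha}. Let $(\vc{q}_h(t),\vc{q}_o(t))$ be the trajectory produced by $\vc{\nu}_h(t)$ from a given initial state, and define the candidate
\[
\tilde{\vc{q}}_h(t) := \vc{q}_h(ct), \qquad \tilde{\vc{q}}_o(t) := \vc{q}_o(ct).
\]

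First I would observe that the relative coordinates are preserved under this reparameterization. Since $\vc{q}_{rel} = \mx{R}(-\theta_o)(\vc{q}_h - \vc{q}_o)$ and both bodies are reparameterized by the same factor, one has $\tilde{\vc{q}}_{rel}(t) = \vc{q}_{rel}(ct)$. Consequently every configuration-dependent quantity, namely $\mx{G}$, $\hat{\mx{A}}$, $\mx{B}$, the scalar $\alpha$ solving~\eqref{eq:solAlpha}, and the active mode, agrees with the original system evaluated at time $ct$. Here I use that $\alpha$ and the mode are invariant under scaling of the patch twist: Equation~\eqref{eq:solAlpha} is homogeneous of degree two in $\tilde{\vc{\nu}}_h$, so its positive root is unaffected for any $c \neq 0$, and likewise the sign conditions~\eqref{eq:condStick} and~\eqref{eq:condSlip} selecting the mode are unchanged. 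This is exactly the reasoning behind Theorem~\ref{thm:vScale}.

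Next I would verify that the candidate satisfies the system driven by $\tilde{\vc{\nu}}_h$. Differentiating the patch equation gives $\dot{\tilde{\vc{q}}}_h(t) = c\,\dot{\vc{q}}_h(ct) = c\,\mx{R}(\theta_h(ct))\,\vc{\nu}_h(ct) = \mx{R}(\tilde{\theta}_h(t))\,\tilde{\vc{\nu}}_h(t)$, which is precisely the required patch dynamics. For the object, Theorem~\ref{thm:vScale} applied at the common relative configuration $\vc{q}_{rel}(ct)$ yields $\tilde{\vc{\nu}}_o(t) = c\,\vc{\nu}_o(ct)$, whence $\dot{\tilde{\vc{q}}}_o(t) = c\,\mx{R}(\theta_o(ct))\,\vc{\nu}_o(ct) = \mx{R}(\tilde{\theta}_o(t))\,\tilde{\vc{\nu}}_o(t)$, matching~\eqref{eq:dynO}. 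Since the candidate shares the initial state at $t=0$, and since Theorems~\ref{thm:no2alpha} and~\ref{thm:unique} guarantee that the mode and hence $\vc{\nu}_o$ are single-valued functions of the input and configuration, uniqueness of solutions to~\eqref{eq:dynSys1} forces the scaled-input object trajectory to coincide with $\vc{q}_o(ct)$.

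Finally, path invariance follows immediately, because reparameterization leaves the image of the curve unchanged: as $c \neq 0$ merely rescales, and for $c<0$ reverses, the time axis, the set $\{\vc{q}_o(ct)\}$ equals $\{\vc{q}_o(s)\}$. I expect the main obstacle to be the self-consistency step of the second and third paragraphs. The object twist is not a function of $\vc{\nu}_h$ alone but of the co-evolving configuration, so the argument closes only because $\vc{q}_h$ and $\vc{q}_o$ are reparameterized in lockstep, keeping $\vc{q}_{rel}$ aligned with the original trajectory. Uniqueness is then what elevates the purely instantaneous homogeneity of Theorem~\ref{thm:vScale} into a statement about entire trajectories.
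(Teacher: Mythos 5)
Your proposal is correct and follows essentially the same route as the paper: define the time-reparameterized trajectory $\tilde{\vc{q}}_o(t) = \vc{q}_o(ct)$, use Theorem~\ref{thm:vScale} to show the scaled input produces the scaled object twist, verify the reparameterized curve solves~\eqref{eq:dynO} with the scaled input, and conclude by parametrization-independence of the path. Your version is somewhat more careful than the paper's, in that you make explicit the preservation of $\vc{q}_{rel}$ under joint reparameterization of both bodies and the appeal to uniqueness of solutions, both of which the paper leaves implicit.
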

\begin{proof}
Let us define $\tilde{\vc{q}}_o(t) := \vc{q}_o(c t)$ where $\vc{q}_o(t)$ is the solution of~\eqref{eq:dynO}. Accordingly,  we conclude
\begin{align}
	\dot{\tilde{\vc{q}}}_o = \mx{R}(\tilde{\theta}_o) c\vc{\nu}_o(c t).	
\end{align}

From Theorem~\ref{thm:vScale}, it is clear that $c \vc{\nu}_h( c t)$ results in $c \vc{\nu}_o(c t)$. Thus, $\tilde{\vc{q}}_o(t)$ is the solution of the system with the scaled input velocity. Since a path is independent of its parametrization, the path defined by $\vc{q}_o(t)$ is the same as $\tilde{\vc{q}}_o(t)$.
\end{proof}
\begin{cor}
Starting from the same initial configuration and moving the patch along a line at a constant velocity, the distance required for the patch to move for causing a certain change in the orientation of the object, $\Delta \theta_o$, is independent of the velocity.
\end{cor}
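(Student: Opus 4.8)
The plan is to reduce the corollary to the path-invariance theorem that immediately precedes it, converting the statement about patch \emph{distance} into a statement about \emph{time} along a rescaled trajectory. First I would pin down the hypothesis: moving the patch along a line at constant velocity means the patch twist $\vc{\nu}_h = [v_{xh},\,v_{yh},\,0]^T$ is constant and has $\omega_h = 0$. Then $\theta_h$ is constant, so by~\eqref{eq:dynSys1} the patch velocity $\dot{\vc{q}}_h = \mx{R}(\theta_h)\vc{\nu}_h$ is a constant vector whose norm equals the patch speed $s := \sqrt{v_{xh}^2 + v_{yh}^2}$, since the rotation $\mx{R}(\theta_h)$ preserves this norm. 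Hence the distance travelled by the patch up to time $t$ is simply $d(t) = s\,t$.

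Next I would compare two runs that start from the same configuration but use speeds differing by a factor $c > 0$, i.e. constant patch twists $\vc{\nu}_h$ and $c\,\vc{\nu}_h$. Because both twists are constant, the faster run is exactly the time reparametrization $c\,\vc{\nu}_h(ct)$ to which the preceding path-invariance theorem applies, so the object traces the \emph{same} path in configuration space. Writing $\tilde{\vc{q}}_o(t) = \vc{q}_o(ct)$ for the object trajectory of the faster run (as in that theorem's proof), the orientation component satisfies $\tilde{\theta}_o(t) = \theta_o(ct)$.

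Finally I would read off the two distances. Suppose the slower run first attains the target orientation $\theta_o = \theta_o(0) + \Delta\theta_o$ at time $t_1$; its required patch displacement is $d_1 = s\,t_1$. The faster run attains the same orientation when $\tilde{\theta}_o(t_2) = \theta_o(c\,t_2)$ equals the target, i.e. at $t_2 = t_1/c$, and its patch speed is $c\,s$, so its required displacement is $d_2 = (c\,s)(t_1/c) = s\,t_1 = d_1$. Thus the displacement needed to produce $\Delta\theta_o$ is identical at every speed.

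The only point that needs care is the bookkeeping of the reparametrization: one must check that the constant-velocity assumption survives the scaling $c\,\vc{\nu}_h(ct)$ (it does, since a constant rescaled in time is still constant) and that the factor $c$ coming from the higher speed exactly cancels the factor $1/c$ coming from the shortened time. Everything else follows directly from Theorem~\ref{thm:vScale} and the path-invariance theorem, with no new estimate or case analysis required.
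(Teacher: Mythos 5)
Your proof is correct and takes the route the paper intends: the corollary is stated without proof as an immediate consequence of the preceding path-invariance theorem, and you have simply made that reduction explicit (constant twist with $\omega_h=0$ gives distance $=$ speed $\times$ time, and the factor $c$ in speed cancels the factor $1/c$ in the time reparametrization $\tilde{\vc{q}}_o(t)=\vc{q}_o(ct)$). No gaps; the bookkeeping you flag is handled correctly.
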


It is possible to find the set of patch twists corresponding to each mode of the system. As we will prove, these sets define cones in twist space. This allows defining a similar concept to \emph{motion cones} appeared in~\citep{mason1986mechanics,lynch1996stable}, defining the set of possible velocities for stable pushing, i.e., when the pusher does not slide against the object. In this article, we use the term more generically to refer to twist subspaces corresponding to any modes, i.e., we may use slipping, sticking, and pivoting motion cones. However, unless it is specified, by a motion cone we also mean the subspace corresponding to sticking mode.  

For constant twists of the friction patch, fixed-points of the system are in sticking mode, i.e., when the patch configuration does not change with respect to the object. This is analogous to stable pushing.  For straight line motions, starting from the pivoting mode, a possible fixed-point is reached at the limit when $\omega_o = 0$.

\begin{thm} \label{thm:coneReg}
The boundary of each mode in twist space of the patch is a conical surface. The set of patch motions for which the object sticks to the patch or slips against it defines a cone. Accordingly, the pivoting mode is the intersection of the complement of these two cones.
\end{thm}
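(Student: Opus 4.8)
The plan is to read the defining conditions of the three modes directly off Section~\ref{sec:regVal} and Algorithm~\ref{alg:mode}, observe that each is a homogeneous quadratic inequality in $\vc{\nu}_h$, and exploit the resulting scale invariance; the conical structure of the sticking and slipping sets is then essentially immediate, and the only substantive work is to identify the pivoting region with the intersection of the complements of those two cones.

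First I would recall that the linear change of variable $\tilde{\vc{\nu}}_h = \mx{\Phi}^T \vc{\nu}_h$ is invertible, so regions defined in the $\tilde{\vc{\nu}}_h$ coordinates correspond under a congruence to regions in the $\vc{\nu}_h$ coordinates. By~\eqref{eq:condStick} the sticking set is $\{\vc{\nu}_h : \tilde{\vc{\nu}}_h^T \mx{C}\tilde{\vc{\nu}}_h \le 0\}$ and by~\eqref{eq:condSlip} the slipping set is $\{\vc{\nu}_h : \tilde{\vc{\nu}}_h^T \mx{C}\mx{\Lambda}^{-2}\tilde{\vc{\nu}}_h \ge 0\}$. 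Each is of the form $\vc{\nu}_h^T \mx{M}\vc{\nu}_h \le 0$ (respectively $\ge 0$) for a fixed symmetric matrix $\mx{M}$, namely $\mx{\Phi}\mx{C}\mx{\Phi}^T$ and $\mx{\Phi}\mx{C}\mx{\Lambda}^{-2}\mx{\Phi}^T$, equivalently the forms appearing in~\eqref{eq:motionCone} and its slipping analogue. Replacing $\vc{\nu}_h$ by $c\vc{\nu}_h$ multiplies the form by $c^2 \ge 0$, which preserves its sign and hence membership in the region; this proves that the sticking and slipping sets are cones. Their boundaries are the equality loci $\vc{\nu}_h^T \mx{M}\vc{\nu}_h = 0$, i.e., zero sets of homogeneous quadratic forms, which are conical surfaces. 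Applying the same homogeneity observation to the boundary of any mode establishes the first sentence of the theorem.

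For the final assertion I would characterize pivoting through the solvability of~\eqref{eq:solAlpha}. Setting $g(\alpha) := \tilde{\vc{\nu}}_h^T \mx{C}(\mx{I}+\alpha\mx{\Lambda})^{-2}\tilde{\vc{\nu}}_h$, Equation~\eqref{eq:solAlpha} reads $g(\alpha)=0$, and $g$ is continuous on $[0,\infty)$ with $g(0) = \tilde{\vc{\nu}}_h^T \mx{C}\tilde{\vc{\nu}}_h$ (the sticking form) and $\lim_{\alpha\to\infty}\alpha^2 g(\alpha) = \tilde{\vc{\nu}}_h^T \mx{C}\mx{\Lambda}^{-2}\tilde{\vc{\nu}}_h$ (the slipping form), so that the sign of $g$ for large $\alpha$ matches the slipping form. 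Hence a positive root exists precisely when $g(0) > 0$ and $g$ is eventually negative, that is, when the strict sticking inequality fails \emph{and} the strict slipping inequality fails. This says exactly that the pivoting set is $\{\vc{\nu}_h : \tilde{\vc{\nu}}_h^T \mx{C}\tilde{\vc{\nu}}_h > 0\}\cap\{\vc{\nu}_h : \tilde{\vc{\nu}}_h^T \mx{C}\mx{\Lambda}^{-2}\tilde{\vc{\nu}}_h < 0\}$, i.e., the intersection of the complements of the sticking and slipping cones, as claimed.

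The main obstacle is the pivoting step: I must show that opposite signs of $g(0)$ and its large-$\alpha$ limit are not only sufficient for a positive root (by the intermediate value theorem) but also necessary. Necessity is where Theorem~\ref{thm:no2alpha} does the work, since $g$ has at most one positive zero when $\mx{C}$ is full rank, so equal signs of $g(0)$ and the slipping form force $g$ to have no positive zero (the only alternative being a tangential double root, which is handled as a boundary point). Alternatively, and more briefly, I would invoke Theorem~\ref{thm:unique}: the three modes partition the twist space, the sticking and slipping sets are the cones identified above, and De~Morgan's law converts ``pivoting equals the complement of their union'' into the stated intersection of complements.
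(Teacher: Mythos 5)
Your proposal is correct and follows essentially the same route as the paper: read the sticking and slipping conditions \eqref{eq:condStick} and \eqref{eq:condSlip} as homogeneous quadratic inequalities in $\tilde{\vc{\nu}}_h$ (hence cones with conical-surface boundaries), and identify the pivoting region with the intersection of their complements, pulling everything back through the invertible map $\tilde{\vc{\nu}}_h = \mx{\Phi}^T\vc{\nu}_h$. The only difference is one of bookkeeping: the paper's proof simply cites Section~\ref{sec:regVal} and delegates the existence/exclusivity of a positive root of \eqref{eq:solAlpha} to Theorem~\ref{thm:unique}, whereas you reproduce that intermediate-value and uniqueness argument inline --- which is fine, and your fallback of invoking Theorem~\ref{thm:unique} plus De~Morgan is exactly what the paper implicitly relies on.
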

\begin{proof}
The theorem is a direct result of  the discussions of the regions of validity in Section~\ref{sec:regVal}.
According to~\eqref{eq:condStick}, sticking mode is valid if
\begin{align}
\tilde{\vc{\nu}}_h^T \mx{C} \tilde{\vc{\nu}}_h \leq 0
\end{align}
and the boundary is obtained with the equality sign.
Similarly, for slipping mode from~\eqref{eq:condSlip} we have
\begin{align}
\tilde{\vc{\nu}}_h^T \mx{C} (\mx{C} + \mx{I})^{-2} \tilde{\vc{\nu}}_h \geq 0.
\end{align}
These equations define closed cone sets in the form of the interior and the exterior of elliptic cones, depending on the signs of the diagonal elements of $\mx{C}$. The intersection of the complement of these sets defines the region of validity for pivoting motion, i.e., 
\begin{align}
\left\{ \tilde{\vc{\nu}}_h :
\tilde{\vc{\nu}}_h^T \mx{C} \tilde{\vc{\nu}}_h > 0 \wedge
\tilde{\vc{\nu}}_h^T \mx{C} (\mx{C} + \mx{I})^{-2} \tilde{\vc{\nu}}_h < 0 \right\}.
\end{align}
By changing the basis according to $\tilde{\vc{\nu}}_h = \mx{\Phi}^T \vc{\nu}_h$, the proof is completed.
\end{proof}

The following Theorem allows us to define bounds for the direction of the patch velocity that leads to pivoting mode.
\begin{thm}
If $\omega_h=0$, there is either no $\vc{v}_h$ for pivoting mode, or $\vc{v}_h$ lies inside a planar cone or two planar cones.
\end{thm}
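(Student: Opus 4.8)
The plan is to intersect the pivoting cone of Theorem~\ref{thm:coneReg} with the plane $\omega_h = 0$ and then reduce the question to elementary trigonometry on the circle of velocity directions. First I would recall that, by Theorem~\ref{thm:coneReg}, pivoting holds exactly when
\begin{align*}
\tilde{\vc{\nu}}_h^T \mx{C} \tilde{\vc{\nu}}_h > 0 \quad\wedge\quad \tilde{\vc{\nu}}_h^T \mx{C}(\mx{C}+\mx{I})^{-2}\tilde{\vc{\nu}}_h < 0,
\end{align*}
with $\tilde{\vc{\nu}}_h = \mx{\Phi}^T \vc{\nu}_h$. Imposing $\omega_h = 0$ restricts $\vc{\nu}_h = [v_{xh},v_{yh},0]^T$ to a two-dimensional subspace, and since $\mx{\Phi}^T$ is a fixed invertible linear map, each of the two quadratic forms above, composed with $\vc{v}_h \mapsto \tilde{\vc{\nu}}_h$, becomes a genuine $2\times2$ quadratic form in $(v_{xh},v_{yh})$. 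Call these $\bar Q_1$ and $\bar Q_2$; the admissible set is then $\{\bar Q_1 > 0\}\cap\{\bar Q_2 < 0\}$, and since the conditions are scale-invariant, only the direction of $\vc{v}_h$ matters.

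Next I would parametrize that direction by $(v_{xh},v_{yh}) = (\cos\phi,\sin\phi)$. Any $2\times2$ quadratic form evaluated on the unit circle collapses, via the double-angle identities, to a single sinusoid with a constant offset,
\begin{align*}
\bar Q_i(\phi) = \gamma_i + A_i\cos(2\phi - \delta_i).
\end{align*}
Because the forms are even, $\bar Q_i(\phi+\pi)=\bar Q_i(\phi)$, consistent with the period-$\pi$ structure of Corollary~\ref{cor:perTwist}, so it suffices to work on the reduced direction circle $\phi\in[0,\pi)$, over which the argument $2\phi$ sweeps exactly one period. A single-frequency sinusoid meets any fixed level at most twice per period, so each of $\{\bar Q_1 > 0\}$ and $\{\bar Q_2 < 0\}$ is bounded by at most two points and is therefore either empty, the whole circle, or one connected arc.

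Finally I would invoke the elementary fact that the intersection of two connected arcs on a circle has at most two connected components. Hence the set of admissible directions is empty, a single arc, or a pair of arcs, corresponding in the velocity plane to no $\vc{v}_h$, one planar cone, or two planar cones; restoring homogeneity in $\norm{\vc{v}_h}$ promotes each admissible arc to a solid planar cone, completing the argument. The pivotal step — and where care is needed — is the reduction to a \emph{single}-harmonic sinusoid: it is precisely the absence of higher harmonics (guaranteed by the degree-two nature of the forms after restriction) that forces each inequality to carve out one arc and thereby caps the component count at two. The only remaining subtlety is bookkeeping of the degenerate boundary cases, where a restricted form is semidefinite (a tangent, measure-zero boundary) or definite (the empty/full-circle extremes) on the $\omega_h=0$ plane; these collapse into the ``no $\vc{v}_h$'' or single-cone situations and never increase the count.
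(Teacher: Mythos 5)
Your proof is correct and follows the same route as the paper: restrict the pivoting region of Theorem~\ref{thm:coneReg} to the plane $\omega_h=0$ and examine the resulting planar slice. The paper states this in two sentences and leaves the component count implicit; your single-harmonic/arc-intersection argument correctly supplies the missing justification that the slice consists of at most two (antipodal-pair) planar cones.
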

\begin{proof}
According to Theorem~\ref{thm:coneReg}, the region of pivoting mode in twist space is the intersection of two cone sets with vertices at the origin. Considering the intersection of this set with the plane $\omega_h=0$ the proof is completed.
\end{proof}
Since the motion cones depend on the position of a patch, we always draw them with respect to $\{\mathrm{H}\}$. Note that the motion cones are three dimensional objects. In two-dimensional space, we draw the intersection of the cone with the $\omega_h$ plane. When $\omega_h=0$ the resulting intersection is a planar cone, but when $\omega_h \neq 0$, the intersection can be any conic section. This implies that changing the magnitude of the linear velocity may result in a mode change.

\begin{figure}
	\begin{center}
		\includegraphics[width=\linewidth]{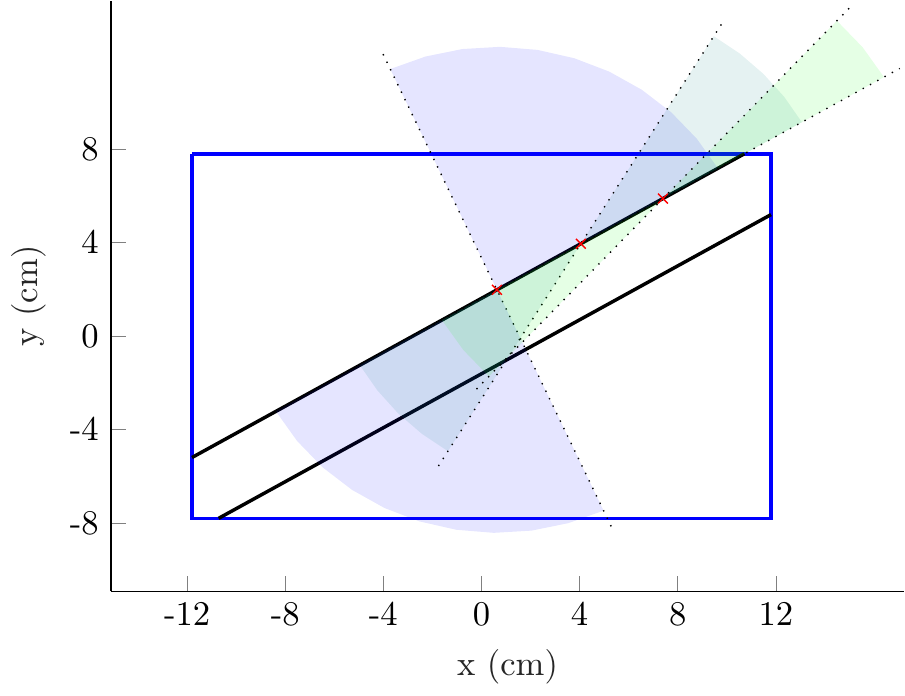}
		\caption{Locus of the patch with $\vc{v}_h = [\sqrt{3},\, 1]^T$ characterizing the boundary of the planar motion cone. The two black lines visualize the locus. For three different hand positions on one of the lines, the sticking motion cones are shaded.}
		\label{fig:hLocus}
	\end{center}
\end{figure}

\begin{thm}\label{thm:hLocus}
Assume variations of $\mx{A}$ with respect to hand placements are negligible. Then, the locus of patch positions with the same $\vc{v}_h$ characterizing the boundary of the planar motion cone ($\omega_h=0$) is composed of at most two parallel lines.
\end{thm}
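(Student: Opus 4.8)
The plan is to begin from the motion-cone boundary, which by~\eqref{eq:motionCone} (with equality) reads
\begin{equation*}
\vc{\nu}_h^T \hat{\mx{A}}^{-1}\mx{B}\,\hat{\mx{A}}^{-1}\vc{\nu}_h = \vc{\nu}_h^T\hat{\mx{A}}^{-1}\vc{\nu}_h ,
\end{equation*}
and to trace how each side depends on the patch position $\vc{r}$ while the relative orientation $\theta_r$ and the linear velocity $\vc{v}_h$ are held fixed and $\omega_h=0$. Using $\hat{\mx{A}}=\mx{G}\mx{A}\mx{G}^T$ together with $\mx{G}^{-1}=\mx{J}(-\vc{r})\mx{R}(\theta_r)$, the first thing I would examine is $\mx{G}^{-1}\vc{\nu}_h=\mx{J}(-\vc{r})\mx{R}(\theta_r)\vc{\nu}_h$. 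Because $\omega_h=0$, the vector $\mx{R}(\theta_r)\vc{\nu}_h$ has vanishing angular component, and the $\vc{r}$-dependence of $\mx{J}(-\vc{r})$ lives entirely in its third column; hence $\mx{G}^{-1}\vc{\nu}_h$ is \emph{independent} of $\vc{r}$. This already shows the right-hand side, $\vc{\nu}_h^T\hat{\mx{A}}^{-1}\vc{\nu}_h=(\mx{G}^{-1}\vc{\nu}_h)^T\mx{A}^{-1}(\mx{G}^{-1}\vc{\nu}_h)$, is a position-independent constant, call it $\rho$.

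Next I would reduce the left-hand side. Writing the ($\vc{r}$-independent) vector $\vc{b}:=\mx{A}^{-1}\mx{G}^{-1}\vc{\nu}_h$, one has $\hat{\mx{A}}^{-1}\vc{\nu}_h=\mx{G}^{-T}\vc{b}$, so the left-hand side equals $\vc{b}^T\mx{J}(-\vc{r})\tilde{\mx{B}}\,\mx{J}^T(-\vc{r})\vc{b}$ with $\tilde{\mx{B}}:=\mx{R}(\theta_r)\mx{B}\mx{R}^T(\theta_r)$ constant. The key observation is that $\mx{J}^T(-\vc{r})\vc{b}$ keeps its first two entries fixed and only its third entry, the affine scalar $\phi(\vc{r}):=b_3+b_1 y_r-b_2 x_r$, varies with position. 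Consequently the entire left-hand side is a quadratic polynomial in the single scalar $\phi$.

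Combining the two sides, the boundary equation collapses to a scalar quadratic $\tilde{B}_{33}\,\phi^2+2(\tilde{B}_{13}b_1+\tilde{B}_{23}b_2)\,\phi+\gamma-\rho=0$ for a constant $\gamma$. Since $\tilde{\mx{B}}$ is positive definite, $\tilde{B}_{33}>0$, so this is a genuine quadratic with at most two real roots $\phi_1,\phi_2$. Each root fixes $\phi(\vc{r})=\phi_k$, i.e.\ $-b_2 x_r+b_1 y_r=\phi_k-b_3$, a straight line in the $(x_r,y_r)$-plane; all such lines share the common normal $(-b_2,b_1)$ and are therefore parallel. I would close the argument by ruling out degeneracy: if $(b_1,b_2)=(0,0)$ then $\mx{A}^{-1}\mx{G}^{-1}\vc{\nu}_h$ would be purely angular, forcing $A_{33}=0$ and contradicting positive definiteness of $\mx{A}$ (for $\vc{v}_h\neq\vc{0}$). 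Hence each admissible root yields a proper line, giving at most two parallel lines.

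The main obstacle is the reduction in the second step: recognizing that, once $\omega_h=0$, the entire position dependence of the boundary condition funnels through the single affine scalar $\phi(\vc{r})$, turning the condition into a quadratic in $\phi$. This hinges on exploiting the column structure of $\mx{J}(\pm\vc{r})$ against the vanishing angular component of $\vc{\nu}_h$; after that reduction the parallel-line conclusion is essentially immediate.
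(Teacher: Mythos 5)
Your proof is correct and follows essentially the same route as the paper's: reduce the equality form of~\eqref{eq:motionCone} using $\mx{G}^{-1}\vc{\nu}_h=\mx{R}(\theta_r)\vc{\nu}_h$ (valid since $\omega_h=0$), observe that all position dependence enters through the single affine scalar $u_1 y_r-u_2 x_r+u_3$ via the third column of $\mx{J}(-\vc{r})$, and conclude a quadratic in that scalar whose roots give at most two parallel lines. Your added checks — that the leading coefficient $\tilde{B}_{33}$ is positive so the quadratic is genuine, and that $(b_1,b_2)\neq(0,0)$ by positive definiteness of $\mx{A}$ — are small refinements the paper leaves implicit.
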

\begin{proof}
The boundary of the planar motion cone is characterized by $\vc{\nu}_h := \left[\vc{v}^T_h,\, 0 \right]^T$ that solves~\eqref{eq:motionCone} with the equality sign. 
Expanding the expression, we find
\begin{align}
\vc{\nu}_h^T \hat{\mx{A}}^{-1} \mx{B} \hat{\mx{A}}^{-1}\vc{\nu}_h = \vc{\nu}_h^T \hat{\mx{A}}^{-1} \vc{\nu}_h. \label{eq:MCexp}
\end{align}

The Jacobian matrix  $\mx{G}(\vc{q}_{rel})$ is a function of the relative position of the patch and the object $\vc{r} = \left[x_r,\, y_r \right]$. As a result
\begin{align}
	\hat{\mx{A}}^{-1} =  \mx{G}^{-T} \mx{A}^{-1} \mx{G}^{-1}
\end{align}
varies with the change of the patch position.
Since $\omega_h$ is zero, 
\begin{align}
\mx{G}^{-1} \vc{\nu}_h = \mx{R} \vc{\nu}_h.
\end{align}
Thus, the right-hand side of~\eqref{eq:MCexp} is independent of $\vc{r}$.
The left-hand side can be written as
\begin{align}
\vc{u}^T \mx{J}^{-1} \mx{R} \mx{B} \mx{R}^T \mx{J}^{-T} \vc{u}, \label{eq:lhsMC}
\end{align}
where $\vc{u} = \mx{A}^{-1} \mx{G}^{-1} \vc{\nu}_h$ is also independent of $\vc{r}$ because of $\omega_h = 0$. Assuming $\vc{u} = \left[u_1, u_2, u_3\right]^T$, we find 
\begin{align}
\vc{u}^T \mx{J}(-\vc{r}) = \left[u_1, u_2, u_1 y_r -u_2 x_r + u_3 \right]. \label{eq:uTJ}
\end{align}
Accordingly,~\eqref{eq:MCexp} defines a second order equation in $u_1 y_r -u_2 x_r + u_3$. This completes the proof that if there are any solutions for $\left[x_r,\, y_r \right]$, they can form at most two lines with the slope $-u_2/u_1$.
\end{proof}
\begin{cor}\label{cor:hLocus}
If $\mx{A}$ characterizes an isotropic friction, moving $\{\mathrm{H}\}$ along one boundary of a motion cone keeps that boundary intact for the new patch position. 
\end{cor}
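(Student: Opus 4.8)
The plan is to show that, under isotropy, the locus line produced by Theorem~\ref{thm:hLocus} points in the same direction as the velocity $\vc{v}_h$ that defines the cone boundary, so that translating $\{\mathrm{H}\}$ along a boundary ray of the motion cone slides the relative position $\vc{r}$ exactly along that locus. I would first fix what isotropy means: a rotationally symmetric object--surface contact has $\mx{A} = \diag(a,\,a,\,c)$, whose upper-left $2\times 2$ block is a multiple of the identity. Consequently $\mx{A}^{-1}$ commutes with every planar rotation, $\mx{A}^{-1}\mx{R}(\theta) = \mx{R}(\theta)\mx{A}^{-1}$, which is the single property the whole argument rests on.

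Next I would return to the vector $\vc{u} = \mx{A}^{-1}\mx{G}^{-1}\vc{\nu}_h$ from the proof of Theorem~\ref{thm:hLocus}, whose in-plane components $(u_1,u_2)$ fix the direction of the locus line. Since $\omega_h = 0$ gives $\mx{G}^{-1}\vc{\nu}_h = \mx{R}(\theta_r)\vc{\nu}_h$ and $\vc{\nu}_h = [\vc{v}_h^T,\, 0]^T$, the commutation property collapses $\vc{u}$ to $\vc{u} = \mx{R}(\theta_r)\mx{A}^{-1}\vc{\nu}_h = \tfrac{1}{a}\mx{R}(\theta_r)\vc{\nu}_h$, because $\mx{A}^{-1}\vc{\nu}_h = \tfrac{1}{a}\vc{\nu}_h$ once the angular entry vanishes. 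Thus $(u_1,u_2)$ is nothing but $\vc{v}_h$ rotated by $\theta_r$, i.e. $\vc{v}_h$ re-expressed in $\{\mathrm{O}\}$, and $u_3 = 0$.

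It then remains to match this direction to the physical repositioning of the hand. Moving $\{\mathrm{H}\}$ along the boundary ray means translating it in the direction $\vc{v}_h$ of its own frame while, with $\omega_h = 0$ and the object held as reference, $\theta_h$, $\theta_o$, and hence $\theta_r$ stay constant. Using $\dot{\vc{q}}_h = \mx{R}(\theta_h)\vc{\nu}_h$ together with $\vc{q}_{rel} = \mx{R}(-\theta_o)(\vc{q}_h - \vc{q}_o)$, the relative coordinates evolve as $\dot{\vc{q}}_{rel} = \mx{R}(\theta_r)\vc{\nu}_h$, so the position part satisfies $\dot{\vc{r}} \propto (u_1,u_2)$. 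The hand therefore travels along the locus line of Theorem~\ref{thm:hLocus}, and that theorem guarantees $\vc{v}_h$ stays on the boundary of the planar motion cone at every new placement.

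I expect the crux to be the frame bookkeeping of the last paragraph, since the motion cone and its boundary live in the twist space attached to $\{\mathrm{H}\}$ while the locus lives in the position space $\vc{r}$ expressed in $\{\mathrm{O}\}$; one has to verify that the same rotation $\mx{R}(\theta_r)$ both carries $\vc{v}_h$ into $\{\mathrm{O}\}$ and governs the relative displacement. Isotropy enters exactly here: a non-scalar $2\times 2$ block of $\mx{A}^{-1}$ would rotate $(u_1,u_2)$ away from the displacement direction and destroy the alignment, which is why the conclusion fails for anisotropic $\mx{A}$.
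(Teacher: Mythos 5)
Your proposal is correct and follows the paper's own argument: under isotropy $u_3=0$ and $(u_1,u_2)$ is $\vc{v}_h$ expressed in $\{\mathrm{O}\}$, so displacing the patch along the boundary direction moves $\vc{r}$ exactly along the locus line $[x_0+ku_1,\,y_0+ku_2]^T$ of Theorem~\ref{thm:hLocus}, leaving the quantity $u_1 y_r - u_2 x_r + u_3$ and hence the boundary condition unchanged. You simply make explicit two steps the paper leaves implicit, namely the commutation $\mx{A}^{-1}\mx{R}=\mx{R}\mx{A}^{-1}$ and the frame bookkeeping $\dot{\vc{q}}_{rel}=\mx{R}(\theta_r)\vc{\nu}_h$.
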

\begin{proof}
In this case, $\vc{u} = \left[u_1, u_2, 0\right]^T$ and is aligned with the velocity of the patch in the object frame. From~\eqref{eq:uTJ}, we find that if $\left[u_1, u_2\right]^T$ denotes the unit vector for the boundary of the motion cone of a patch at $\left[x_0, y_0\right]^T$, then $\vc{r} = \left[x_0 + k u_1, y_0 + k u_2 \right]^T$ for $k \in \mathbb{R}$ is the locus of the patch positions with $\left[u_1, u_2\right]^T$ as the boundary of their planar motion cones.
\end{proof}
An example to illustrate the result of Corollary~\ref{cor:hLocus} is given in Figure~\ref{fig:hLocus}.

Note that when the pivot point does not shift significantly, an approximation of the maximum amount of rotation can be achieved by calculating the angle between the direction of $\vc{v}_h$ and the stable boundary of the planar motion cone. Knowing the direction of possible displacements of the patch with respect to the object, the result of Theorem~\ref{thm:hLocus} makes it possible to answer whether the approximation is an upper bound or a lower bound.

\begin{thm}\label{thm:ppLocus}
In a given configuration, the locus of pivot points for all hand velocities is a conic section.
\end{thm}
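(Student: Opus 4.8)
The plan is to exhibit the pivot-point map, restricted to a fixed configuration, as a projective transformation applied to the cone $\vc{w}^T\mx{C}\vc{w}=0$, and then to read off the locus as the affine chart of the resulting projective conic. Fix the configuration, so that $\mx{G}$, $\hat{\mx{A}}$, $\mx{B}$, $\mx{\Phi}$, $\mx{\Lambda}$ and $\mx{C}=\mx{\Lambda}-\mx{I}$ are all constant. The first observation is that the pivot point in~\eqref{eq:ppoint} is invariant under scaling $\vc{\nu}_{rel}\mapsto \lambda\vc{\nu}_{rel}$, so $\vc{p}$ depends only on the direction (projective class) of $\vc{\nu}_{rel}$. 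In pivoting mode the transformed wrench $\vc{w}$ satisfies the cone equation~\eqref{eq:intersec1}, $\vc{w}^T\mx{C}\vc{w}=0$, which, read projectively in the direction of $\vc{w}$, is a conic; as $\vc{\nu}_h$ ranges over all pivoting velocities, $\vc{w}$ sweeps out exactly this conic, the sphere constraint~\eqref{eq:intersec2} only fixing a representative on each ray and hence being irrelevant to $\vc{p}$.

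Next I would follow the chain of linear maps from $\vc{w}$ to $\vc{p}$. From~\eqref{eq:rVelForceT}, $\mx{\Phi}^T\vc{\nu}_{rel}=-k_2\mx{\Lambda}\vc{w}$, hence $\vc{\nu}_{rel}=-k_2\,\mx{\Phi}^{-T}\mx{\Lambda}\,\vc{w}$, an invertible linear map since $\mx{\Phi}$ is invertible (it simultaneously diagonalizes $\hat{\mx{A}}$ and $\mx{B}$) and $\mx{\Lambda}\succ 0$. Writing $\vc{\nu}_{rel}=[v_{xr},\,v_{yr},\,\omega_r]^T$ and introducing homogeneous coordinates $[X,\,Y,\,Z]^T:=[-v_{yr},\,v_{xr},\,\omega_r]^T$ --- themselves an invertible linear relabelling of $\vc{\nu}_{rel}$ --- the definition~\eqref{eq:ppoint} becomes simply $x_p=X/Z$ and $y_p=Y/Z$. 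Composing these stages gives $[X,\,Y,\,Z]^T\propto \mx{N}\vc{w}$ for a fixed invertible matrix $\mx{N}\in\mathbb{R}^{3\times3}$.

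Finally I would pull the cone back through $\mx{N}$. Substituting $\vc{w}\propto\mx{N}^{-1}[X,\,Y,\,Z]^T$ into $\vc{w}^T\mx{C}\vc{w}=0$ yields the homogeneous quadratic $[X,\,Y,\,Z]\,\mx{N}^{-T}\mx{C}\mx{N}^{-1}\,[X,\,Y,\,Z]^T=0$; dividing by $Z^2$ and inserting $x_p=X/Z$, $y_p=Y/Z$ produces a single second-degree polynomial equation in $(x_p,y_p)$, i.e. the locus of pivot points is a conic section. The directions with $\omega_r=0$, corresponding to an infinite pivot point, are precisely the $Z=0$ points of this projective conic, consistent with the locus possibly being unbounded.

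The main obstacle is conceptual rather than computational: one must justify that $\vc{p}$ genuinely depends only on the projective class of $\vc{\nu}_{rel}$, so that the sphere normalization~\eqref{eq:intersec2} can be discarded in favour of the cone~\eqref{eq:intersec1}, and that every linear stage in the chain is invertible, so that $\mx{C}$ pulls back to a genuine (possibly degenerate) quadratic form rather than collapsing to lower degree. The specific type --- ellipse, parabola, or hyperbola --- is then dictated by the signature of $\mx{N}^{-T}\mx{C}\mx{N}^{-1}$ together with the position of the line $Z=0$ relative to it, but for the statement it suffices that the defining level set is of degree two.
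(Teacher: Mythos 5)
Your proof is correct and follows essentially the same route as the paper's: both push the cone $\vc{w}^T\mx{C}\vc{w}=0$ forward through the invertible linear map of~\eqref{eq:rVelForceT} into $\vc{\nu}_{rel}$-space, observe that the pivot point depends only on the ray of $\vc{\nu}_{rel}$, and obtain the locus by slicing the resulting quadric cone with the plane $\omega_r=1$ (your dehomogenization $x_p=X/Z$, $y_p=Y/Z$ is exactly that slice). Your version merely makes the projective bookkeeping and the invertibility of each stage explicit, which the paper leaves implicit.
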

\begin{proof}
The set of possible wrenches $\vc{w}$ for pivoting mode lives on the intersection of a cone and the unit sphere according to~\eqref{eq:intersec}. Transforming this set to $\vc{\nu}_{rel}$ using~\eqref{eq:rVelForceT} results into a new cone for relative twists. Since all $\vc{\nu}_{rel}$ on a generatrix of the cone result in the same pivot point, it is enough to consider the intersection of this cone with the plane $\omega_r=1$ to find $x_p$ and $y_p$. This completes the proof.
\end{proof}
According to Theorem~\ref{thm:ppLocus}, the locus of pivot points partitions the space into three regions, which are possible to be mapped to the three modes of the system. For example, assuming the locus is an ellipsoid, if the hand slips against the object the center of rotation will be inside the ellipsoid, while in pivoting mode the COR is on the ellipsoid.  Figure~\ref{fig:ppLocus} illustrates the result of Theorem~\ref{thm:ppLocus} for an example where normal forces are varied.

\begin{figure}
	\begin{center}
		\includegraphics[width=0.85\linewidth]{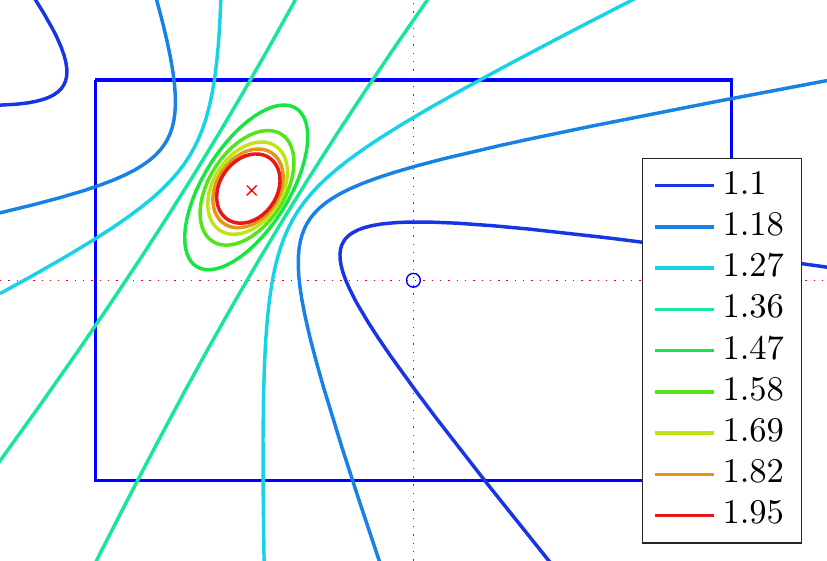}
		\caption{An example demonstrating the locus of possible pivot points for various hand velocities and normal forces. For higher normal forces exerted by the hand, the pivot points get closer to the cross which shows the COP of the friction patch.}
		\label{fig:ppLocus}
	\end{center}
\end{figure}

\begin{thm} \label{thm:stickMaxMargin}
The twists for sticking with the largest margin corresponds to the generalized eigenvector of $\hat{\mx{A}}$ and $\mx{B}$ with the smallest eigenvalue through
\begin{align}
\vc{\nu}_h = k \hat{\mx{A}} \vc{\phi}_{min},\quad k\in \mathbb{R}. \label{eq:vMaxMarigin}
\end{align}
Similarly, the twists corresponding to slipping mode with the largest margin can be found using the eigenvector with the largest eigenvalue
\begin{align}
\vc{\nu}_h = k \hat{\mx{A}} \mx{\Lambda} \vc{\phi}_{max}.
\end{align}
\end{thm}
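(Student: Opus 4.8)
The plan is to first make the phrase ``largest margin'' precise, since the statement only records the optimal twists and not the objective being optimized. By Theorem~\ref{thm:vScale} the active mode is invariant under scaling of $\vc{\nu}_h$, so any sensible margin must be a function of the \emph{direction} of $\vc{\nu}_h$ only. The natural choice is the normalized gap between the two limit-surface quadratic forms that defines each validity condition in Section~\ref{sec:regVal}. Writing $\tilde{\vc{\nu}}_h = \mx{\Phi}^T \vc{\nu}_h$, the sticking condition~\eqref{eq:condStick} is $\tilde{\vc{\nu}}_h^T \mx{C}\, \tilde{\vc{\nu}}_h \le 0$, and I would define the sticking margin as $-\,\tilde{\vc{\nu}}_h^T \mx{C}\, \tilde{\vc{\nu}}_h / (\tilde{\vc{\nu}}_h^T \tilde{\vc{\nu}}_h)$, i.e. the amount by which the sticking inequality is strictly satisfied once the twist is normalized. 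This is exactly (minus) the Rayleigh quotient of the diagonal matrix $\mx{C} = \mx{\Lambda}-\mx{I}$.

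Second, I would solve the resulting eigenvalue problem. Maximizing the sticking margin is minimizing the Rayleigh quotient of $\mx{C}$, whose minimizer is the standard basis vector $\vc{e}_{min}$ attached to the smallest diagonal entry of $\mx{C}$, equivalently to the smallest generalized eigenvalue $\lambda_{min}$. The back-transformation to the patch frame is then purely algebraic: from $\mx{\Phi}^T \hat{\mx{A}} \mx{\Phi} = \mx{I}$ one obtains $\mx{\Phi}^{-T} = \hat{\mx{A}} \mx{\Phi}$, so $\vc{\nu}_h = \mx{\Phi}^{-T} \tilde{\vc{\nu}}_h = \hat{\mx{A}} \mx{\Phi}\, \tilde{\vc{\nu}}_h$, and substituting $\tilde{\vc{\nu}}_h = k\, \vc{e}_{min}$ gives $\vc{\nu}_h = k\,\hat{\mx{A}}\, \vc{\phi}_{min}$, which is~\eqref{eq:vMaxMarigin}. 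For the slipping case I would run the mirror-image argument on~\eqref{eq:condSlip}, $\tilde{\vc{\nu}}_h^T \mx{C}(\mx{C}+\mx{I})^{-2}\tilde{\vc{\nu}}_h \ge 0$. The cleanest route is the change of variables $\vc{z} = \mx{\Lambda}^{-1}\tilde{\vc{\nu}}_h$ (using $\mx{C}+\mx{I} = \mx{\Lambda}$), which collapses the slipping form to $\vc{z}^T \mx{C}\, \vc{z} \ge 0$; the slipping margin is then maximized at the largest diagonal entry of $\mx{C}$, i.e. at the largest generalized eigenvalue $\lambda_{max}$, whose eigenvector back-transforms to the claimed twist.

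The hard part will be tracking the exact prefactor in the slipping formula through the back-transformation. Because $\mx{C}$, $\mx{\Lambda}$ and $\mx{I}$ are simultaneously diagonal in the $\tilde{\vc{\nu}}_h$ coordinates, every margin built from them is extremized at a single standard basis vector $\vc{e}_{max}$, so the optimal twist is necessarily proportional to $\hat{\mx{A}}\,\vc{\phi}_{max}$; the matrix $\mx{\Lambda}$ displayed in the statement enters only through the wrench--twist relation $\tilde{\vc{\nu}}_h = -k_2\,\mx{\Lambda}\,\vc{w}$ used to pass between the wrench ellipsoid and the twist, where it contributes the scalar eigenvalue $\lambda_{max}$ that is then absorbed into the free constant $k$. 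I would verify this reduction explicitly and, in the same step, dispose of the degenerate case of repeated extremal eigenvalues, in which the optimum is attained on an entire eigenspace rather than a single eigenvector and the ``largest margin'' twist is no longer unique (only its margin value is).
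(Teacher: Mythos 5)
Your argument is correct and is essentially the paper's own proof: the paper minimizes/maximizes the same diagonal form $\vc{w}^T \mx{C} \vc{w}$ in wrench coordinates, subject to $\vc{w}^T\vc{w}=1$ for sticking and $\vc{w}^T\mx{\Lambda}\vc{w}=1$ for slipping, obtains a coordinate-vector optimizer because $\mx{C}$ and $\mx{\Lambda}$ are simultaneously diagonal, and maps back to twists through \eqref{eq1:objSlide}, \eqref{eq1:handSlide} and $\mx{\Phi}^{-T}=\hat{\mx{A}}\mx{\Phi}$ exactly as you do. Your slipping normalization ($\vc{z}^T\vc{z}=1$, i.e.\ $\vc{w}^T\vc{w}=\mathrm{const}$) differs from the paper's ($\vc{w}^T\mx{\Lambda}\vc{w}=1$), but both objectives reduce to maximizing a monotone increasing function of $\lambda_i$ over the coordinate index, so the argmax---and hence the stated twist, with the scalar $\lambda_{max}$ absorbed into the free constant $k$ exactly as you observe---is the same.
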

\begin{proof}
According to~\eqref{eq:stickCond}, sticking with the largest margin is obtained at the minimum of
\begin{align}
c(\vc{w}) = \vc{w}^T \mx{C} \vc{w}. \label{eq:mode}
\end{align}
under the constraint of \eqref{eq2:objSlide}, that is
\begin{align}
\vc{w}^T \vc{w} = 1. \label{eq:constW}
\end{align}
Similarly, slipping with the largest margin is obtained at the maximum of \eqref{eq:mode} under the constraint of~\eqref{eq2:handSlide}, that is
\begin{align}
\vc{w}^T \mx{\Lambda} \vc{w} = 1.  
\end{align}
Given the facts that $\mx{C}$ is diagonal and $\mx{\Lambda} = \mx{C} + \mx{I}$, the optimization problems have trivial solutions with $\vc{w}$ having only one non-zero element in the row corresponding to the smallest or largest element of $\mx{\Lambda}$. Applying the relations~\eqref{eq1:objSlide} and~\eqref{eq1:handSlide} completes the proof.
\end{proof}
The amount of sticking margin can be characterized as the minimum external wrench disturbance at the object frame, $\vc{w}_{dist}$ that can cause a mode change from sticking mode. Certainly, if $\vc{\nu}_h$ is close to an unstable boundary of the motion cone, the effect of the mode change is more drastic.
\begin{thm}
The largest sticking margin is
\begin{align}
	\vc{w}_{dist} =  \sqrt{\abs{\dfrac{\lambda_{min} - 1}{\lambda_{max} - 1}}} \mx{G}^T \vc{\phi}_{max}
\end{align}
where the smallest and largest generalized eigenvalues of $\hat{\mx{A}}$ and $\mx{B}$ are denoted by $\lambda_{min}$ and $\lambda_{max}$, respectively.
\end{thm}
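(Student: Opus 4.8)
The plan is to build on Theorem~\ref{thm:stickMaxMargin} and to measure the sticking margin through the indefinite quadratic form $c(\vc{w}) = \vc{w}^T \mx{C} \vc{w}$ that governs the mode in the diagonalizing coordinates $\vc{w}_h = \mx{\Phi}\vc{w}$ of~\eqref{eq:Wh}. By Theorem~\ref{thm:stickMaxMargin}, the sticking state with the largest margin is the one whose transformed wrench points along the generalized eigenvector of the smallest eigenvalue, i.e.\ $\vc{w} = \vc{e}_{min}$ (so that $\vc{w}_h \propto \vc{\phi}_{min}$), at which $c = \lambda_{min} - 1 < 0$. Since $\mx{C} = \mx{\Lambda} - \mx{I}$ is diagonal, the boundary separating sticking from the adjacent pivoting/slipping region is the cone $c(\vc{w}) = 0$ (cf.~\eqref{eq:stickCond}). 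The margin is therefore the smallest perturbation of $\vc{w}$ that raises $c$ from $\lambda_{min}-1$ up to $0$.

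First I would translate an external disturbance at the object frame into a perturbation of $\vc{w}$. A wrench $\vc{w}_{dist}$ applied in $\{\mathrm{O}\}$ enters the force balance~\eqref{eq:forceBalance} as an additive term, so by Proposition~\ref{thm:trans} it shifts the patch wrench by $\mx{G}^{-T}\vc{w}_{dist}$ and hence shifts the diagonalized wrench by $\Delta\vc{w} = \mx{\Phi}^{-1}\mx{G}^{-T}\vc{w}_{dist}$. Because $\mx{C}$ is diagonal, moving $\vc{w}$ away from $\vc{e}_{min}$ raises $c$ most efficiently along the coordinate carrying the largest diagonal entry $c_{max} = \lambda_{max}-1$, i.e.\ along $\vc{e}_{max}$; this is exactly the maximal-margin slipping direction $\vc{\phi}_{max}$ identified in Theorem~\ref{thm:stickMaxMargin}. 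I would therefore take $\Delta\vc{w} = t\,\vc{e}_{max}$ and argue that it realizes the minimum-magnitude crossing of the boundary.

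With $\vc{w} = \vc{e}_{min} + t\,\vc{e}_{max}$ and $\mx{C}$ diagonal, the cross term vanishes and $c(\vc{w}) = (\lambda_{min}-1) + t^2(\lambda_{max}-1)$; setting this to zero yields $t = \sqrt{\abs{(\lambda_{min}-1)/(\lambda_{max}-1)}}$, where the ratio is positive in the genuinely mixed case $\lambda_{min} < 1 < \lambda_{max}$ for which pivoting is the adjacent mode. Undoing the coordinate change through $\mx{\Phi}^{-1}\mx{G}^{-T}\vc{w}_{dist} = t\,\vc{e}_{max}$ and using $\mx{\Phi}\vc{e}_{max} = \vc{\phi}_{max}$ gives $\vc{w}_{dist} = t\,\mx{G}^T\vc{\phi}_{max}$, which is the claimed expression.

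The step I expect to be the main obstacle is the optimality claim of the second paragraph: showing that, among all disturbances that push the state out of the sticking cone $\{c < 0\}$, the minimum-magnitude one is aligned with $\vc{e}_{max}$ rather than with some mixed direction, and that its magnitude is correctly read off as $t$. This hinges on the diagonal structure of $\mx{C}$ and on measuring the disturbance in the norm that is consistent with how $\vc{w}_{dist}$ is mapped through $\mx{G}$ and $\mx{\Phi}$; once the direction is pinned down, the remaining frame-transformation bookkeeping via Proposition~\ref{thm:trans} is routine.
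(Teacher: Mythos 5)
Your proposal follows essentially the same route as the paper's proof: start from $\vc{w}=\vc{e}_{min}$ given by Theorem~\ref{thm:stickMaxMargin}, perturb only along the coordinate of $\lambda_{max}$, solve the scalar quadratic $(\lambda_{min}-1)+t^2(\lambda_{max}-1)=0$ for $t$, and map the result back to the object frame via $\mx{G}^T\mx{\Phi}$. The optimality claim you single out as the main obstacle --- that the minimum-magnitude boundary crossing is aligned with $\vc{e}_{max}$ --- is exactly the step the paper also leaves unargued, asserting only that since $\mx{C}$ is diagonal the shortest distance to the cone ``is simply obtained'' from that scalar equation.
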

\begin{proof}
From the proof of Theorem~\ref{thm:stickMaxMargin}, we know that the largest margin is achieved for $\vc{w}$ that is zero except at the row corresponding to $\lambda_{min}$. This ensures that $\vc{w}^T\mx{C}\vc{w} < 0$ is minimized given the constraint that $\norm{w} = 1$. To change the mode of the system, the additional wrench has to fulfill
\begin{align}
	(\vc{w}+\Delta\vc{w})^T\mx{C}(\vc{w}+\Delta\vc{w}) = 0.
\end{align}
Since $\mx{C} = \mx{\Lambda} - \mx{I}$ is diagonal, the shortest distance to the cone is simply obtained from
\begin{align}
	(\lambda_{min} - 1) + \Delta w^2 (\lambda_{max} - 1) = 0,
\end{align}
where $\Delta w$ is the only non-zero element of $\Delta \vc{w}$ corresponding to the largest eigenvalue. Accordingly, the solution is
\begin{align}
	\Delta w  = \sqrt{\abs{\dfrac{\lambda_{min} - 1}{\lambda_{max} - 1}}}.
\end{align}
The proof is completed by transforming $\Delta \vc{w}$ to the object frame using~\eqref{eq:transform}.

\end{proof}

When the velocity of the patch is limited to linear velocities ($\omega_h = 0$), possible wrenches generated by the patch lie on a plane. Consequently, the optimum of~\eqref{eq:mode} must be obtained in that subspace.

\begin{thm} \label{thm:linMotionStick}
If the friction is isotropic, for a linear motion, sticking with the largest margin is obtained when the velocity is aligned with the line connecting the COP of the object to the COP of the patch.
\end{thm}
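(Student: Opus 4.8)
The plan is to recast ``sticking with the largest margin'' as a constrained eigenvalue problem and then exploit the reflection symmetry that isotropy creates. Combining Theorem~\ref{thm:stickMaxMargin} with the motion-cone form~\eqref{eq:motionCone}, for a linear twist $\vc{\nu}_h = \left[\vc{v}_h^T,\,0\right]^T$ the margin is governed by the normalized quadratic form
\begin{equation*}
M(\vc{\nu}_h) = \frac{\vc{\nu}_h^T \hat{\mx{A}}^{-1}\left(\mx{B}-\hat{\mx{A}}\right)\hat{\mx{A}}^{-1}\vc{\nu}_h}{\vc{\nu}_h^T \hat{\mx{A}}^{-1}\vc{\nu}_h},
\end{equation*}
which follows from $\vc{w}\propto\tilde{\vc{\nu}}_h$ together with $\mx{\Phi}\mx{C}\mx{\Phi}^T = \hat{\mx{A}}^{-1}(\mx{B}-\hat{\mx{A}})\hat{\mx{A}}^{-1}$; maximizing the margin means minimizing $M$. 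Since the denominator form is positive definite and we are restricted to the two-dimensional subspace $\omega_h=0$, $M$ is a generalized Rayleigh quotient there, so it has at most two critical directions.

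Next I would use isotropy to fix convenient coordinates. Isotropy lets me write $\mx{B}=\diag(\beta,\beta,\gamma)$ and, at the object COP, $\mx{A}=\diag(a,a,b)$; because the linear blocks are multiples of the identity, both matrices are invariant under a rotation of $\{\mathrm{H}\}$, so without loss of generality I place the object COP on the $x$-axis of $\{\mathrm{H}\}$ at distance $d$, making the COP--COP line the $x$-axis. Evaluating $\hat{\mx{A}} = \mx{G}\mx{A}\mx{G}^T$ with $\mx{G}=\mx{R}^T(\theta_r)\mx{J}(\vc{r})$ and using that the linear part of $\mx{A}$ is rotation invariant gives the block structure
\begin{equation*}
\hat{\mx{A}} = \begin{bmatrix} a & 0 & 0 \\ 0 & a+bd^2 & -bd \\ 0 & -bd & b \end{bmatrix},
\end{equation*}
in which the $x$-direction decouples and only the $y$--$\omega$ pair couples.

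The key observation is then a reflection symmetry. Both $\hat{\mx{A}}$ and $\mx{B}$ are invariant under $\mx{P}=\diag(1,-1,-1)$, the reflection across the COP--COP line, so $\mx{P}\hat{\mx{A}}^{-1}(\mx{B}-\hat{\mx{A}})\hat{\mx{A}}^{-1}\mx{P}$ equals itself and hence $M(\mx{P}\vc{\nu}_h)=M(\vc{\nu}_h)$. Writing $\vc{v}_h=(\cos\phi,\sin\phi)$, the function $M(\phi)$ is therefore even, and since it admits only two critical directions, these must be $\phi=0$ (aligned with the COP--COP line) and $\phi=\pi/2$ (perpendicular to it). It remains to decide which one minimizes $M$.

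Finally I would compare the two candidates directly. From the block form, $\hat{\mx{A}}^{-1}(1,0,0)^T=(1/a,0,0)^T$ and $\hat{\mx{A}}^{-1}(0,1,0)^T=(0,1/a,d/a)^T$, which yields
\begin{equation*}
M_\parallel = \frac{\beta}{a}-1, \qquad M_\perp = \frac{\beta}{a}-1+\frac{\gamma d^2}{a}.
\end{equation*}
Since $\gamma>0$, $a>0$ and $d\ge 0$, we have $M_\perp \ge M_\parallel$, with equality only when the COPs coincide; hence the minimizer of $M$, i.e.\ the direction of largest sticking margin, is $\vc{v}_h$ along the line joining the two COPs, which is the claim. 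I expect the only genuine work to be the coordinate evaluation of $\hat{\mx{A}}$ and the justification that isotropy permits aligning the object COP with the $x$-axis while keeping $\mx{B}$ diagonal; once the block structure is secured, the symmetry argument and the two-point comparison are immediate.
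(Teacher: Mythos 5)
Your proposal is correct and follows essentially the same route as the paper: restrict to the $\omega_h=0$ subspace, use isotropy to align the coordinate axes with the line joining the two COPs, compute the resulting block structure of $\hat{\mx{A}}$, and observe that the perpendicular direction picks up an extra nonnegative term $\gamma d^2/a$ in the Rayleigh quotient (the paper's $\tfrac{b_3}{b}r^2$ term in~\eqref{eq:mxAiBAi}). The reflection-symmetry argument you add to locate the critical directions is a nice touch but is equivalent to the paper's observation that the relevant $2\times 2$ blocks are diagonal.
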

\begin{proof}	
Minimizing~\eqref{eq:mode} under the constraint~\eqref{eq:constW} is equivalent to minimizing 
\begin{align}
	\vc{\nu}_h^T \hat{\mx{A}}^{-1} \mx{B} \hat{\mx{A}}^{-1} \vc{\nu}_h \label{eq:vStickCond}
\end{align}	
under the constraint.
\begin{align}
\vc{\nu}_h^T \hat{\mx{A}}^{-1} \vc{\nu}_h = 1. \label{eq:vObjConst}
\end{align}
Additionally, limiting to linear motions we must impose
\begin{align}
\omega_h = 0.
\end{align}

Since the friction is assumed to be isotropic, we can reorient $\{\mathrm{O}\}$ such that the $y$-axis points to the COP of the patch. Additionally, we align $\{\mathrm{H}\}$ with $\{\mathrm{O}\}$. Now assume $\mx{A} = \diag(\left[a,\, a,\, a_3 \right])$ and $\mx{B} = \diag(\left[b,\, b,\, b_3 \right])$. Accordingly, 
\begin{align}
	\hat{\mx{A}} = \mx{J}\mx{A} \mx{J}^T,
\end{align}
where $\mx{J}:=\mx{J}^T([0\, ,r]^T)$ and $r$ is the distance between the COP of the object and the patch. Since $\omega_h = 0$, for evaluating~\eqref{eq:vStickCond} and~\eqref{eq:vObjConst} only the first two rows and columns of $\hat{\mx{A}}^{-1}$ and  $\hat{\mx{A}}^{-1} \mx{B} \hat{\mx{A}}^{-1}$ are important. These submatrices are equal to 
\begin{align}
	(\hat{\mx{A}}^{-1})_{2 \times 2} &= \dfrac{1}{a} \mx{I}, \\
	(\hat{\mx{A}}^{-1} \mx{B} \hat{\mx{A}}^{-1})_{2 \times 2} &= \dfrac{b}{a^2}
		\begin{bmatrix}
			1 + \dfrac{b_3}{b} r^2	& 0  \\
			0					& 1
	 \end{bmatrix}. \label{eq:mxAiBAi}
\end{align}
The minimum of~\eqref{eq:vStickCond} is obtained along the longest axis of the ellipse defined by~\eqref{eq:mxAiBAi}, that is the $y$-axis.
\end{proof}
Note that the result of Theorem~\ref{thm:linMotionStick} applies to the case when no angular velocity for the patch is permitted. In fact, if it is allowed, under the same assumption of isotropic frictions, the sticking direction with the largest margin will be perpendicular to the line passing through the COPs of the object and the patch. This can be confirmed by evaluating~\eqref{eq:vMaxMarigin} or by evaluating the  generalized eigenvalues of $\hat{\mx{A}}^{-1}$ and $\mx{B}^{-1}$, which gives the solution to~\eqref{eq:vStickCond}--\eqref{eq:vObjConst}.

All the velocities in the motion cone corresponding to sticking mode are fixed-points of the system. However, half of the fixed-points lying on the surface of the cone are unstable. The next theorem proves this.
\begin{thm} \label{thm:stablity}
If a certain twist on the boundary of the motion cone results in a stable fixed-point, then the fixed-point resulted from the opposite twist is unstable.
\end{thm}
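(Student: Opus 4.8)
The plan is to exploit the time-reversal symmetry of the relative dynamics that is already encoded in Corollary~\ref{cor:revTwist}. For a constant patch twist $\vc{\nu}_h$, the relative configuration $\vc{q}_{rel}$ evolves under an autonomous vector field $\dot{\vc{q}}_{rel} = \vc{F}(\vc{q}_{rel};\vc{\nu}_h)$: the matrices $\hat{\mx{A}}$, $\mx{B}$, $\mx{C}$ and the relative twist $\vc{\nu}_{rel}$ all depend on $\vc{q}_{rel}$ only through $\mx{G}(\vc{q}_{rel})$, and the kinematic map from $\vc{\nu}_{rel}$ to $\dot{\vc{q}}_{rel}$ is linear in $\vc{\nu}_{rel}$. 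First I would show that this field is odd in the twist, i.e. $\vc{F}(\vc{q}_{rel};-\vc{\nu}_h) = -\vc{F}(\vc{q}_{rel};\vc{\nu}_h)$ at every configuration. This holds because the scalar $\alpha$ solving~\eqref{eq:solAlpha} depends on $\vc{\nu}_h$ only through the squares $\tilde v_{xh}^2,\tilde v_{yh}^2,\tilde\omega_h^2$ and is therefore unchanged under $\vc{\nu}_h\mapsto-\vc{\nu}_h$; with $\alpha$ fixed,~\eqref{eq:vrvh} makes $\vc{\nu}_{rel}$ linear in $\vc{\nu}_h$, so it flips sign, and by linearity of the velocity kinematics so does $\dot{\vc{q}}_{rel}$.

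Second, I would pin down the geometry of the fixed-point. A boundary fixed-point $\vc{q}_{rel}^*$ is a configuration at which $\vc{\nu}_h$ sits exactly on the motion-cone boundary, i.e. $\tilde{\vc{\nu}}_h^T\mx{C}\tilde{\vc{\nu}}_h = 0$ with $\mx{C}=\mx{C}(\vc{q}_{rel}^*)$. The surface $\Sigma$ in configuration space separating the sticking region ($\dot{\vc{q}}_{rel}=\vc{0}$) from the pivoting region is defined by this same equation, and since $\mx{C}$ depends only on $\hat{\mx{A}},\mx{B}$ (hence on $\vc{q}_{rel}$) while the defining condition is even in $\vc{\nu}_h$, the partition of configuration space into sticking and pivoting regions is identical for $\vc{\nu}_h$ and $-\vc{\nu}_h$; in particular $\vc{q}_{rel}^*$ lies on $\Sigma$ and, by the oddness, is a fixed-point for both twists. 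The stability of a boundary fixed-point is governed by the sign of the normal component of the pivoting-side field $\vc{F}$ at $\Sigma$: if it points back into the frozen sticking region a perturbed trajectory re-enters and halts near $\vc{q}_{rel}^*$ (stable), whereas if it points outward the trajectory escapes (unstable).

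The conclusion then falls out immediately: reversing the twist leaves $\Sigma$ and the two regions fixed but, by the first step, negates $\vc{F}$ and hence its normal component at $\Sigma$. So if the pivoting flow under $\vc{\nu}_h$ drives perturbations back toward sticking (stable fixed-point), the flow under $-\vc{\nu}_h$ drives them away (unstable), and vice versa. Equivalently, the oddness $\vc{F}(\cdot;-\vc{\nu}_h)=-\vc{F}(\cdot;\vc{\nu}_h)$ gives $\phi_t^{-\vc{\nu}_h}=\phi_{-t}^{\vc{\nu}_h}$ for the induced flows wherever both are defined, so the phase portrait of the reversed twist is the time-reversal of the original and attracting/repelling behaviour is interchanged.

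The main obstacle I anticipate is not the symmetry argument itself but making stability rigorous for this non-smooth, hybrid system: the field is one-sided at $\Sigma$ (identically zero throughout the sticking region, nonzero only on the pivoting side), so a continuum of fixed-points fills the sticking cone and the usual Jacobian-eigenvalue test does not apply directly. The care has to go into (i) showing the transverse dynamics across $\Sigma$ is well defined as a one-sided limit from the pivoting region, and (ii) verifying that the tangential (center) directions along $\Sigma$ and into the sticking region are shared by both twists and do not spoil the dichotomy, so that the sign of the single transverse component decides stability and the twist reversal flips exactly that sign.
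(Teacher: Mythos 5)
Your proposal is correct and rests on exactly the same symmetry that drives the paper's proof -- namely that $\alpha$, being determined by~\eqref{eq:solAlpha} only through $\tilde v_{xh}^2$, $\tilde v_{yh}^2$, $\tilde\omega_h^2$, is even in $\vc{\nu}_h$, while the relative velocity (hence $\dot{\vc{q}}_{rel}$) is then odd in $\vc{\nu}_h$ -- but you convert that symmetry into a stability statement by a genuinely different mechanism. The paper linearizes the relative dynamics $\dot{\vc{q}}_{rel}=\mx{G}^{-1}\bigl(\mx{I}+(\alpha\mx{B}\hat{\mx{A}}^{-1})^{-1}\bigr)^{-1}\vc{\nu}_h$ at the sticking/pivoting boundary where $\alpha=0$, observes that the Jacobian $\mx{M}=\mx{G}^{-1}\mx{B}\hat{\mx{A}}^{-1}\vc{\nu}_h\,\nabla\alpha$ is rank one with single nonzero eigenvalue $\nabla\alpha\,\vc{u}$ for $\vc{u}=\mx{G}^{-1}\mx{B}\hat{\mx{A}}^{-1}\vc{\nu}_h$, and concludes that negating $\vc{\nu}_h$ negates $\vc{u}$ but not $\nabla\alpha$, flipping the sign of that eigenvalue. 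You instead keep the argument global: the full pivoting-side vector field is odd in $\vc{\nu}_h$ while the sticking/pivoting partition of configuration space is even, so the flow for $-\vc{\nu}_h$ is the time-reversal of the flow for $\vc{\nu}_h$ and the transverse attraction/repulsion at the boundary surface is exchanged. Your route buys robustness -- it does not require differentiability of $\alpha$ at the boundary or an explicit Jacobian, and it sidesteps the awkwardness that the field is identically zero on the sticking side so the standard eigenvalue test is only a one-sided statement (a difficulty you correctly flag and which the paper's linearization also glosses over); the paper's route buys an explicit, computable stability criterion, the sign of $\nabla\alpha\,\vc{u}$ in~\eqref{eq:eigM}, which the authors then use in the subsequent discussion to test whether a given direction of motion is stable. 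Both arguments also share the implicit reliance on Theorem~\ref{thm:vScale}/Corollary~\ref{cor:revTwist} for the oddness of the object twist, and your explicit check that the mode partition is invariant under $\vc{\nu}_h\mapsto-\vc{\nu}_h$ is a detail the paper leaves unstated.
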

\begin{proof}
At a fixed-point of the system, the patch will not displace with respect to the object. Accordingly, we first derive the dynamics of the relative pose $\vc{q}_{rel} = \mx{R}(-\theta_o) \left(\vc{q}_{h} - \vc{q}_{o}\right)$. By taking the derivative of the expression, we find
\begin{align}
	\dot{\vc{q}}_{rel} &= \mx{G}^{-1} \vc{\nu}_{rel} \nonumber \\
		&=  \mx{G}^{-1} \left(\mx{I} + (\alpha \mx{B} \hat{\mx{A}}^{-1})^{-1} \right)^{-1} \vc{\nu}_h.
\end{align}
We linearize the system at the boundary of pivoting and sticking modes, i.e., at $\vc{q}_{rel} = \vc{q}_0$ where $\alpha = 0$.
The eigenvalues of the linearized system will tell us about stability of fixed-points when the system is perturbed away from the sticking region. Denoting the variation in $\vc{q}_{rel}$ by $\Delta \vc{q}_{rel}$, the linearized system can be written as
\begin{align}
	\dot{\Delta \vc{q}}_{rel} = \mx{M} {\Delta \vc{q}}_{rel},
\end{align}
where
\begin{align}
	\mx{M} =  \mx{G}^{-1} (\vc{q}_0) \mx{B}\hat{\mx{A}}^{-1} \vc{\nu}_h \nabla \alpha.
\end{align}
Here, we have considered $\alpha = \alpha(x_r,y_r,\theta_r)$, which is the solution of~\eqref{eq:solAlpha}.

First, we observe that $\mx{M}$ is a matrix of rank 1, since its columns are scaled versions of the same vector. 
Accordingly, $\mx{M}$ has at most one nonzero eigenvalue, which can be expressed as
\begin{align}
	\vc{u} := \mx{G}^{-1} \mx{B} \hat{\mx{A}}^{-1} \vc{\nu}_h, \label{eq:eigvM}
\end{align}
and the corresponding eigenvalue is 
\begin{align}
	 \nabla \alpha \vc{u} = [\pder{\alpha}{x_r},\,\pder{\alpha}{y_r},\,\pder{\alpha}{\theta_r}] \vc{u}. \label{eq:eigM}
\end{align}

Note that $\alpha$ is an even function of $\vc{\nu}_h$, i.e., the solution of~\eqref{eq:solAlpha} is the same for both~$\vc{\nu}_h$ and $-\vc{\nu}_h$. Hence, the gradient of $\alpha$ is unaffected by changing the direction of motion while $\vc{u}$ according to~\eqref{eq:eigvM} will be negated. Therefore, we conclude that for opposite twists, the signs of the eigenvalues of the linearized system are different. This implies that if a fixed-point resulted from a twist on the boundary of pivoting and slipping regions is stable, the fixed-point due to the opposite twist cannot be stable.
\end{proof}
As it can be concluded from the proof of Theorem~\ref{thm:stablity}, the relative position does not have an asymptotic stability. Thus, the patch can gradually drift with respect to the object unless the system is forced into sticking mode with some margin.
Moreover, to find if a direction of motion is stable, we have to check whether the sign of expression~\eqref{eq:eigM} is negative or not. Although it is possible to derive an analytic expression, a practical way is to check what direction the object would rotate toward if it is perturbed away from the fixed-point. For linear motions this approach works even if the object is far from the fixed-point, since the object rotates always toward a stable fixed-point.

\section{Approximate solution}\label{sec:approxSol}

\begin{figure*}
	\begin{center}
		\includegraphics[height=5cm]{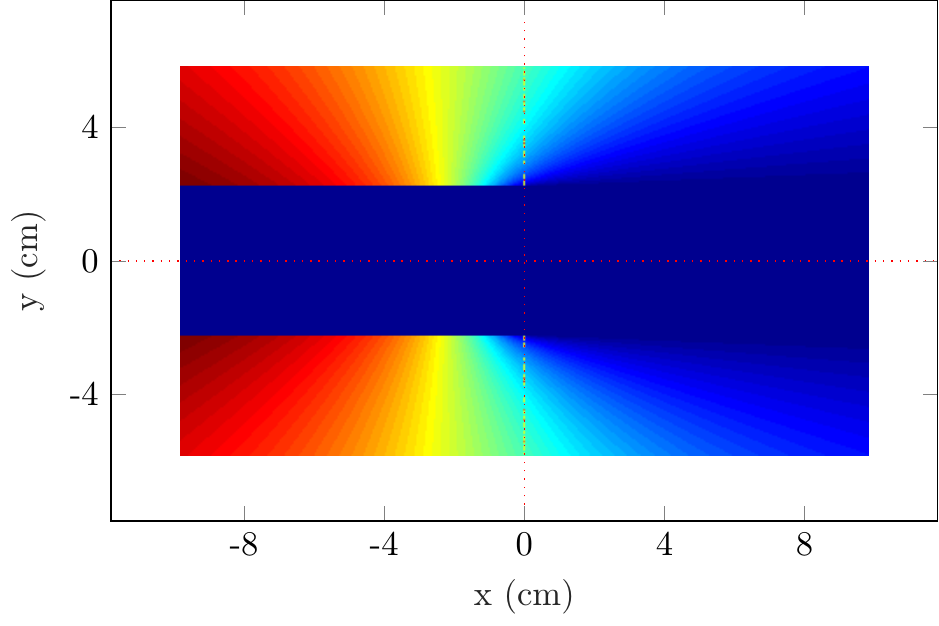}~
		\includegraphics[height=5cm]{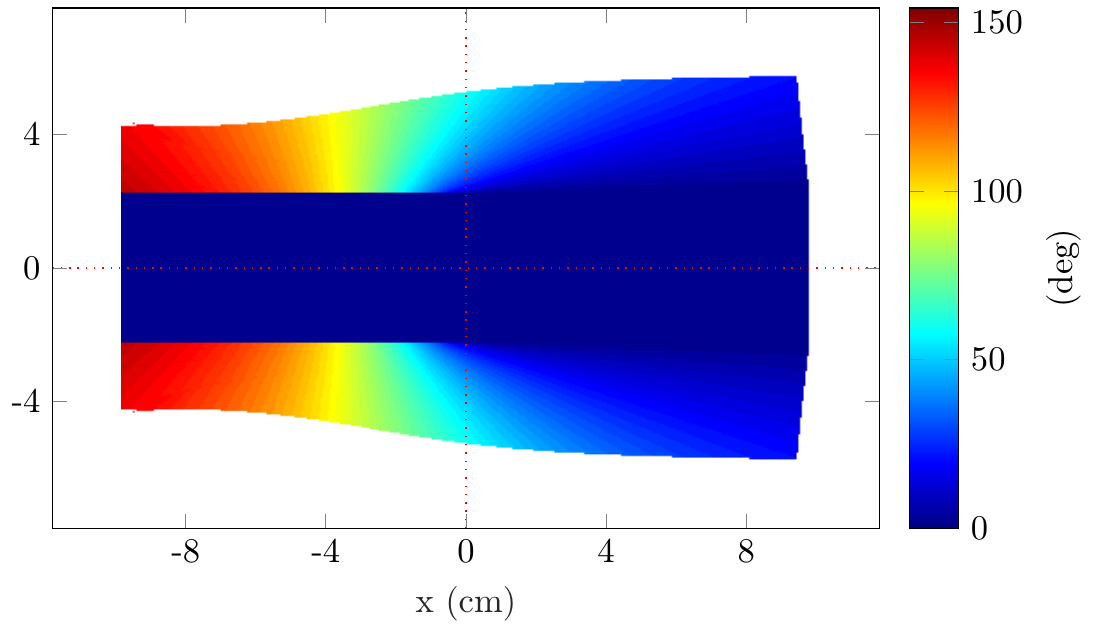}
		\caption{Visualization of maximum amount of rotation as a function of initial patch location for $\vc{\nu}_h = [1,\, 0,\, 0]^T$. On the left, the approximate solution based on the angle between the direction of hand motion and the stable boundary of the motion cone, on the right the final amount of rotation based on simulated sliding. If the hand is initially placed in the white area, the patch will touch an edge of the object or goes outside of the object boundary. In the dark blue area in the middle, the hand and the object stick together.}
		\label{fig:maxRot}
	\end{center}
\end{figure*}
In this section, we develop an approximate solution to the problem of determining the velocity of the object in pivoting mode assuming that the variation in the pivot point is limited, e.g., when the torsional friction of the patch is small. The approximate solution has to solve the same set of equations as before, i.e.,~\eqref{eq:intersec} and~\eqref{eq:oVelForceT}--\eqref{eq:VrelT}, but we relax some of the constraints assuming a known estimate of the pivot point. In practice, having such an estimate is realistic, since it is desired to keep the pivot point close the center of the patch. Moreover, we can devise an iterative algorithm to update the position of the pivot point in order to improve the accuracy of the solution.

The condition for the pivot point can be expressed as
\begin{align}
    \mx{J}_p \vc{\nu}_{rel} = 0, \label{eq:pivotPoint}
\end{align}
where $\mx{J}_p \in \mathbb{R}^{2 \times 3}$ corresponds to the first two rows of $\mx{J}(\vc{p})$, which is the Jacobian of the position of the presumed pivot point $\vc{p}$ defined in the hand coordinate.
Using the definition of $\vc{\nu}_{rel}$ in~\eqref{eq:VrelT}, combined with~\eqref{eq:oVelForceT}, we find 
\begin{align}
\mx{J}_p \hat{\mx{A}} \mx{\Phi} \mx{w} =  -\dfrac{\vc{v}_p}{k_1}, \label{eq:ppConst}
\end{align}
where we have defined $\vc{v}_p:= \mx{J}_p \vc{\nu}_h$.

Considering~\eqref{eq:ppConst}, the general solution for $\vc{w}$ given $\vc{v}_p$ is
\begin{align}
\vc{w} =  -\dfrac{1}{k_1} \mx{\Phi}^T \left(\mx{J}_p ^\dagger \vc{v}_p + \gamma \vc{n} \right), \label{eq:ppConstSol}
\end{align}
where $\vc{n} =[y_p,\, -x_p,\, 1]^T$ spans the nullspace of $\mx{J}_p$ and $\gamma \in \mathbb{R} $ is an arbitrary constant. By introducing $\bar{\vc{w}}: = \mx{\Phi}^T \mx{J}_p^\dagger \vc{v}_p$ and $\vc{w}_0 = \mx{\Phi}^T \vc{n}$, we rewrite~\eqref{eq:ppConstSol} as
\begin{align}
\vc{w} =  -\dfrac{1}{k_1} \left( \bar{\vc{w}} + \gamma \vc{w}_0 \right). \label{eq:sliderW}
\end{align}
Note that $\mx{J}_p$ is a function of the pivot point, and hence $\vc{w}$. 

Substituting~\eqref{eq:sliderW} into~\eqref{eq:intersec1} results in
\begin{align}
    \gamma^2 \vc{w}_0^T \mx{C} \vc{w}_0 + 2 \gamma  \vc{w}^T_0 \mx{C} \bar{\vc{w}} + \bar{\vc{w}}^T \mx{C} \bar{\vc{w}} = 0, \label{eq:gamma}
\end{align}
where we have used the facts that $k_1 > 0$ and $\mx{C}$ is diagonal.
From~\eqref{eq:gamma}, we can solve for 
\begin{align}
    \gamma =  \dfrac{-\vc{w}^T_0 \mx{C} \bar{\vc{w}} \pm \sqrt{\Delta}}{\vc{w}_0^T \mx{C} \vc{w}_0}, \label{eq:solAlphaOld}
\end{align}
where
\begin{align}
    \Delta := (\vc{w}^T_0 \mx{C} \bar{\vc{w}})^2 - (\vc{w}_0^T \mx{C} \vc{w}_0)(\bar{\vc{w}}^T \mx{C} \bar{\vc{w}}).
\end{align}
After determining $\gamma$, we can substitute~\eqref{eq:sliderW} into~\eqref{eq:intersec2} to find $k_1$. Consequently,
\begin{align}
    k_1 = \norm{\bar{\vc{w}} + \gamma \vc{w}_0}. \label{eq:apxK1}
\end{align}

Moreover, irrespective of $k_1$, by putting $\vc{w}$ back to~\eqref{eq:oVelForceT} it is possible to conclude
\begin{align}
\vc{\nu}_o =  \mx{G}^{-1} \left(\mx{J}^\dagger_p \mx{J}_p \vc{\nu}_h + \gamma \vc{n} \right), \label{eq:voAppx}
\end{align}
where the pivot point and $\gamma$ depend on the friction parameters and $\vc{\nu}_h$. 
Also using~\eqref{eq:Vrel}, we obtain
\begin{align}
	\omega_r  = \dfrac{\vc{n}^T \vc{\nu}_h}{\norm{\vc{n}}^2} + \gamma.
\end{align}

So far, we have ignored Equation~\eqref{eq:rVelForceT}. From this equation, we find
\begin{align}
	k_2  = - \dfrac{\omega_r}{[0\,, 0\,, 1]  \mx{B} \mx{\Phi} \vc{w}}. \label{eq:k2Appx}
\end{align}
There are two solutions to~\eqref{eq:gamma}. Using Equation~\eqref{eq:k2Appx}, we choose the one which results in a positive value for $k_2$.
Now, we can calculate $\vc{\nu}_{rel}$ from~\eqref{eq:rVelForceT} and use~\eqref{eq:ppoint} to update the pivot point. 

Note that by choosing the frame of the hand at the pivot point, the relation simplifies to
\begin{align}
	\vc{\nu}_o =  \mx{G}^{-1}
	\begin{bmatrix} v_{hx}\\ v_{hy}\\ \gamma \end{bmatrix}.
\end{align}
In this case, the angular velocity of the object is equal to $\gamma$.

Other possibility that we can study using this formulation is when there is a fixed amount or no torsional friction. In this case,~\eqref{eq:intersec1} must be replaced by
\begin{align}
	\vc{w}_0^T \vc{w} = m_p, \label{eq:mpFixed}
\end{align}
which is a consequence of Proposition~\ref{thm:trans}. 
Subsequently, we substitute~\eqref{eq:sliderW} into~\eqref{eq:mpFixed} to get
\begin{align}
\gamma \vc{w}_0^T \vc{w}_0  = -\vc{w}_0^T\bar{\vc{w}} -k_1 m_p. \label{eq:gamma2}
\end{align}
If $m_p \neq 0$, Equation~\eqref{eq:gamma2} can be solved together with~\eqref{eq:apxK1} which results in a quadratic equation. However, if $m_p = 0$ we find immediately
\begin{align}
\gamma  = - \dfrac{\vc{w}_0^T\bar{\vc{w}}}{\norm{\vc{w}_0}^2} = \dfrac{\vc{n}^T \hat{\mx{A}}^{-1} \mx{J}^\dagger_p \vc{v}_p}{\vc{n}^T \hat{\mx{A}}^{-1} \vc{n}}.
\end{align}
This result reveals the connection to pulling/pushing scenarios in which there is no torsional friction at the pivot point.

\section{Strategies for sliding}

In this section, we consider an application of the model developed in previous sections. Specifically, we are interested in using a robotic soft finger to reorient a flat object on a table. This can be done similarly to how a human would manipulate a cell phone on a table, as illustrated in Figure.~\ref{fig:slidingCellphone}.

When a part of the hand is used for reorienting an object, we leverage on the fact that there is a pivot point in the vicinity of the contact region. The instantaneous pivot point is the point where the patch and the object have the same velocity, i.e., the relative velocity is zero. Note that a pivot point at infinity is not interesting since it implies that the patch slides completely against the object, hence cannot move it. See Figure~\ref{fig:estParamsPath} for an example of the locus of the pivot point and the origin of the hand frame during sliding motion.

\begin{figure*}
	\begin{center}
		\includegraphics[width=0.48\linewidth]{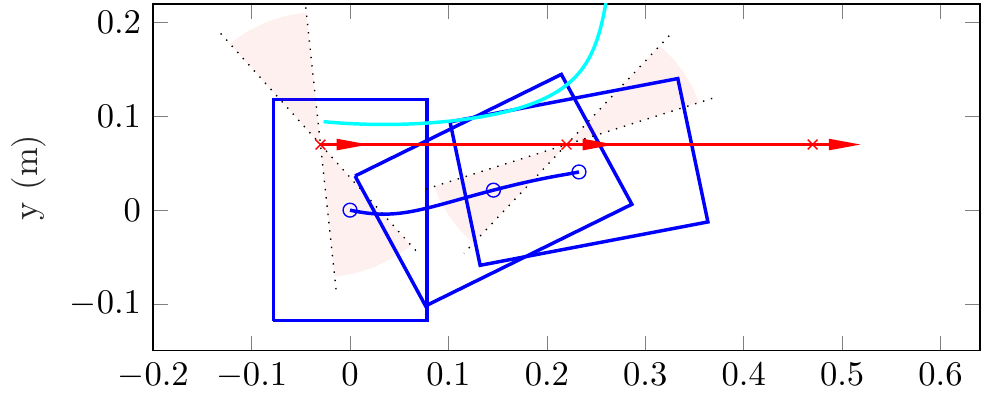}~
		\includegraphics[width=0.45\linewidth]{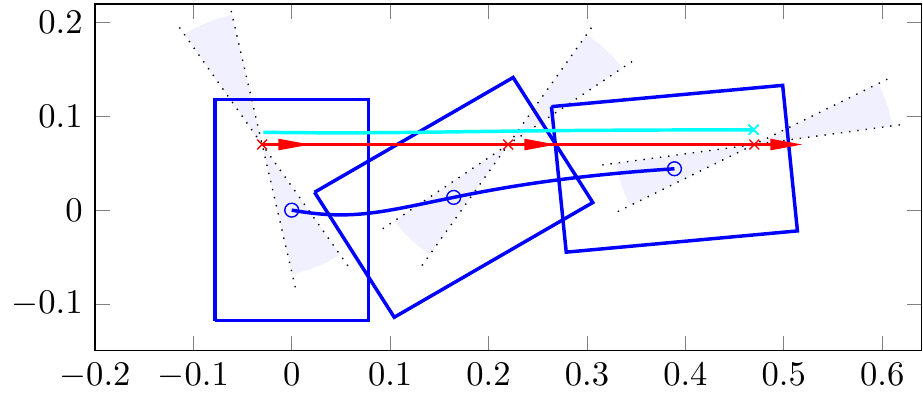}\vspace*{10pt}
		\includegraphics[width=0.48\linewidth]{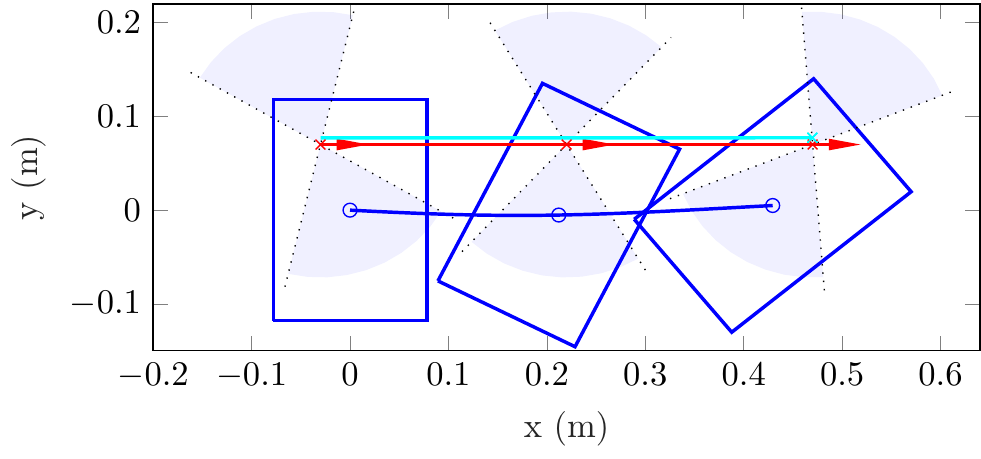}~
		\includegraphics[width=0.45\linewidth]{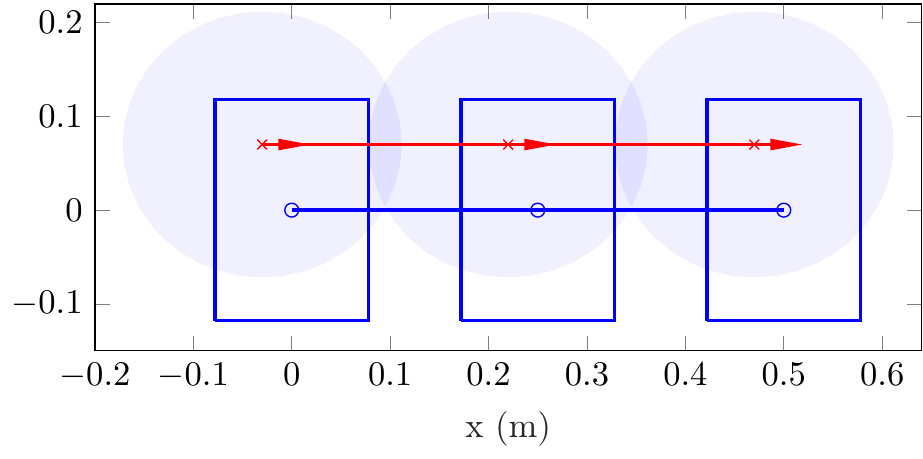}
		\caption{Sliding an object along a straight line from left to right for several fixed normal forces. The origin of the object frame, the origin of the hand frame and the pivot point are shown in blue, red, and cyan, respectively. The dashed lines represent the boundary of motion cones. The slipping and sticking cones are shaded in light red and light blue, respectively. In the top-left figure, the hand slips after some time completely outside the boundary of the object, while in the bottom-right figure the mode is always sticking because of the increased normal force.}
		\label{fig:slidingfn}
	\end{center}
\end{figure*}

For sliding, the placement of the hand is constrained by 
\begin{itemize}
    \item the direction and amount of rotation,
    \item final constraints such as not causing the object to topple at the edge of the surface,
    \item the weight of the object to avoid flipping it,
    \item the requirement for pivoting, i.e., the object should not be caged.
\end{itemize}
If the trajectory of the patch and its final relative pose with respect to the object are given, then there is not much freedom left for the initial placement of the hand. Otherwise, the placement can be optimized for different criteria, such as the shortest distance to achieve a certain rotation or the lowest uncertainty, e.g., by ensuring that the desired pose is a stable fixed-point.

When using a linear motion to reorient an object, the upper limit for the amount of object rotation is the angle between the velocity vector of the patch and the line passing through the COPs of the patch and the object, provided that the friction is isotropic and there is no torsional friction~\citep{huang2017exact}. A more accurate estimation can be obtained by measuring the angle between the velocity vector and the stable boundary of the planar motion cone for a given hand placement. Figure~\ref{fig:maxRot} visualizes the amount of expected rotation as a function of the initial hand placement from simulation and compares it with the approximation based on motion cones. 

If the goal is to rotate the object to a desired orientation on the edge of a table, a possible strategy could be to place the hand on the side where a straight trajectory gives a feasible solution for a long enough surface. The friction patch should be close to the edge to shorten the distance, but not exactly at the edge to allow for pressing the object downward even when the object has reached the edge of the surface. The placement should also leave some margin to the borders of the object since the relative position of the patch and the object may vary during the motion.

From the dynamical system described in Section~\ref{sec:dynSys}, we know that only two out of the three following quantities can simultaneously be controlled:
\begin{enumerate*}[label=(\roman*)]
	\item normal force,
	\item velocity of the hand,
	\item the position of the pivot point.
\end{enumerate*}
Moreover, possible feedback signals for a controller are $\vc{w}_h$, $\vc{\nu}_o$, $\vc{q}_o$, and $\vc{q}_{rel}$. 

For example, we may keep the normal force at a minimum amount required for stopping rotation. Then, using the feedback from the actual angle of the object, the normal force can be decreased if the angular velocity has to be increased.
Accordingly, a simple proportional controller can be designed as
 \begin{align}
	f_n =  -K \sat(\theta_{ref} - \theta_{meas}) + f_{U},
	\label{eq:controller}
 \end{align} 
where $\theta_{ref}$ is the reference angle and $f_{U}$ an upper limit for the normal force, which can be obtained from the model,  $\sat(\cdot)$ is the saturation function with tunable upper and lower limits, $K$ is the proportional gain. 

A lower bound for the normal force can be chosen such that there will be no slipping mode and a fixed-point, i.e., switching to the sticking mode is guaranteed. More advanced schemes might include an integral action to reduce the sensitivity to $f_{U}$, and a feedforward term.

\section{Experiments and results}

We performed a number of simulations and robotic experiments to validate the model and its application. 
The simulations were carried out in Matlab, and the robotic experiments were performed by a Kuka LBR iiwa7 robot.

\subsection{Simulations}

\begin{figure}
	\begin{center}
		\includegraphics[width=1.0\linewidth]{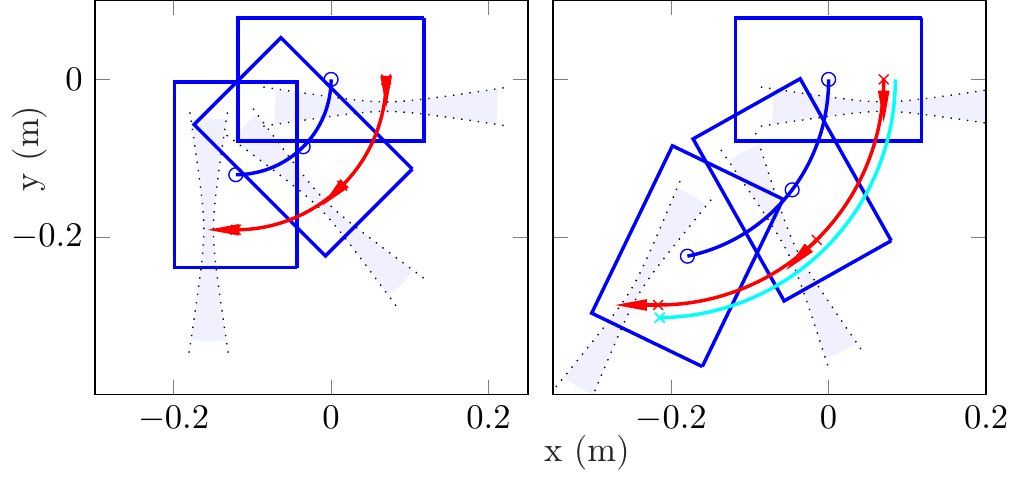}
		\caption{Sliding an object with $\omega_h \neq 0$. On the left, the tip of the velocity arrow is inside the motion cone, thus the patch sticks to the object. On the right side, the liner velocity has increased and the object is pivoting against the patch.}
		\label{fig:cones_w}
	\end{center}
\end{figure}

\begin{figure*}
	\begin{center}
		\includegraphics[width=\linewidth]{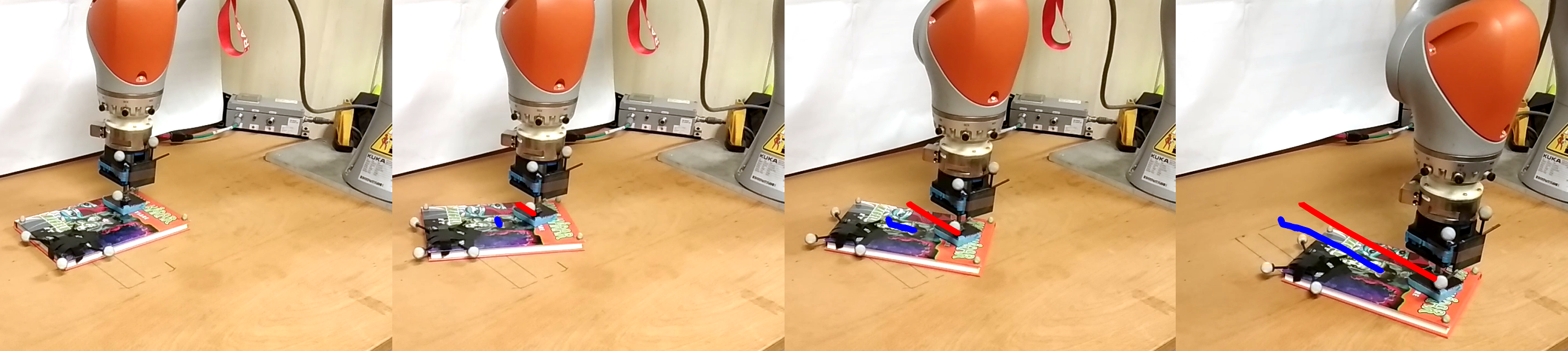}
		\caption{Sliding motion experiment: a soft finger is attached to the KUKA LBR iiwa robot. The book is being dragged toward the edge. Trajectories of the center of the object are shown in blue and of the friction patch in red.}
		\label{fig:sliding_exp}
	\end{center}
\end{figure*}
\begin{figure}
	\begin{center}
		\includegraphics[width=0.9\linewidth]{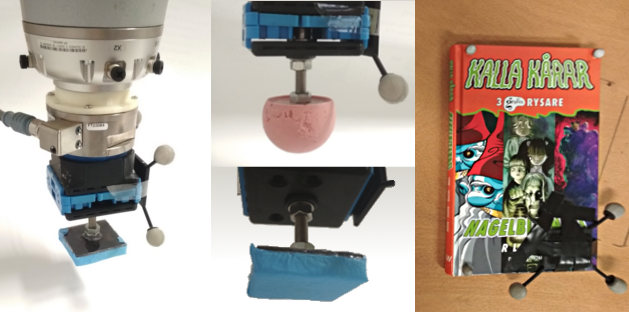}
		\caption{Experimental setup. From left to right: robot end-effector with optical markers, spherical and square soft fingers, and the object (book) with optical markers.}
		\label{fig:probes}
	\end{center}
\end{figure}

We present the results of simulations of the model with the parameters specified here.
The box dimensions are $15.6 \times 23.6\,$cm. The patch is circular with a radius of $2.0\,$cm. The weight of the box is $450\,$g. The coefficient of friction between the box and the surface and the soft finger and the box are $\mu_{oe} = 0.2$ and $\mu_{ho} = 0.8$, respectively. We assume a uniform pressure distribution between the box and the surface when the box is not pressed and a Hertzian pressure distribution for the soft finger. To account for the shift of COP, we use~\eqref{eq:COPmodel} with $c = 0.6$ and $\delta = 2$.

In Figure~\ref{fig:slidingfn}, simulation experiments in which the blue box is being moved from the left to the right by a soft finger are illustrated. The initial placement of the soft finger is $\vc{r} = [-3,\, 7]\,$cm. The end-effector moves at $1\,$cm/s in the $x$ direction. The simulation runs for $50\,$ seconds. Each subplot corresponds to a certain constant normal force. Considering left-right top-bottom ordering, the normal forces are $1.43$, $1.7$, $4$, and $6\,$N, respectively. The trajectory of the patch, pivot point, and object are visualized.
Note that for generating Figures~\ref{fig:hLocus},~\ref{fig:ppLocus}, and~\ref{fig:maxRot} presented in previous sections, we have used ${f_n = 2.5\,}$N, $\vc{r} = [-3.5,\, 6]\,$cm, and $f_n = 2.5\,$N, respectively while remaining parameters were set according to the values given in this subsection.

When the soft finger is moving with an angular velocity, the mode depends on the magnitude of the linear velocity. Thus, for the same normal force, angular velocity, and direction of velocity of the finger, different modes might arise. This is because of the fact that when $\omega_h \neq 0$, the modes are no longer mapped to planar cones. Instead, the boundaries of the regions in two dimensions can be represented by the intersection of the motion cone described in Theorem \ref{thm:coneReg} with the plane corresponding to the angular velocity $\omega_h$.
Figure \ref{fig:cones_w} shows an example where the finger is rotating at $-\pi/80$ rad/s for 40 seconds.
On the left side, the linear velocity is chosen so that it is within the sticking region. In this case, the object is rotated by 90$^\circ$. On the right, the linear velocity is slightly increased such that it enters the pivoting region, resulting in a faster rotation of the object, exceeding the 90$^\circ$ rotation by the end of the simulation.

\subsection{Robotic experiments}
The experimental setup consisted of a KUKA light weight iiwa7 robot arm, with an ATI Gamma force-torque sensor mounted at the wrist.
A number of soft fingers were manufactured, and are shown in Figure~\ref{fig:probes}, together with the end-effector of the robot and the object used in the experiments (a hard-cover book).
The positions of the robot and of the object were recorded using an Optitrack motion capture system.

\begin{figure*}[t]
	\begin{center}
		\includegraphics[width=1.0\linewidth]{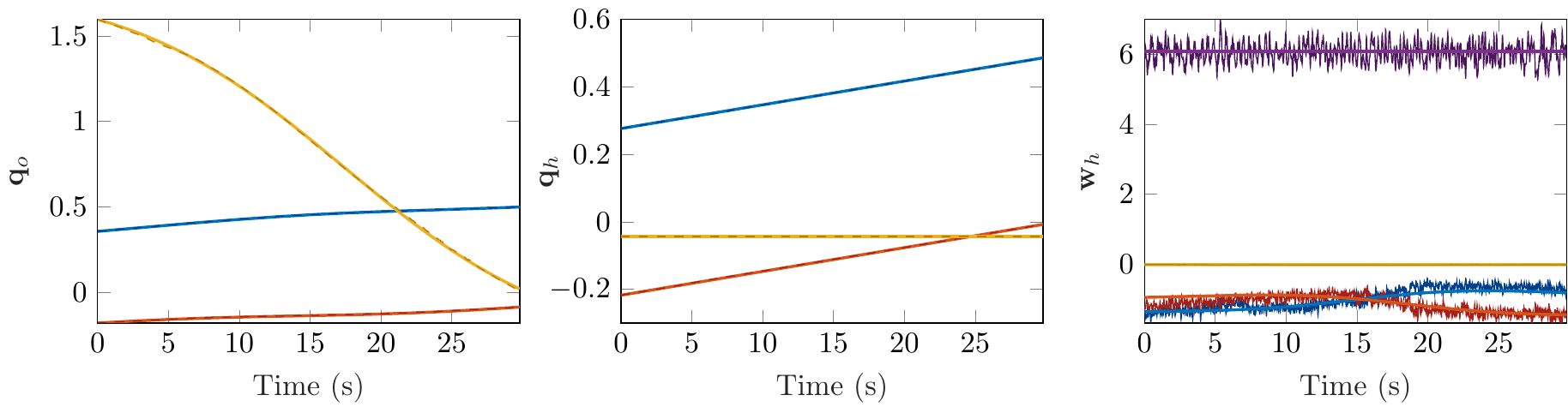}
		\caption{A sample of poses and forces in pivoting mode during a straight line motion. The three components of each vector are shown in blue, red, and yellow colors, respectively. The normal force at the hand frame has additionally been shown in $\vc{w}_h$ plot in violet. Dashed lines represent the experimental results, which are almost indistinguishable from the simulation.}
		\label{fig:estParamsAll}
	\end{center}
\end{figure*}
\begin{figure*}[t]
	\begin{center}
		\includegraphics[height=7.0cm]{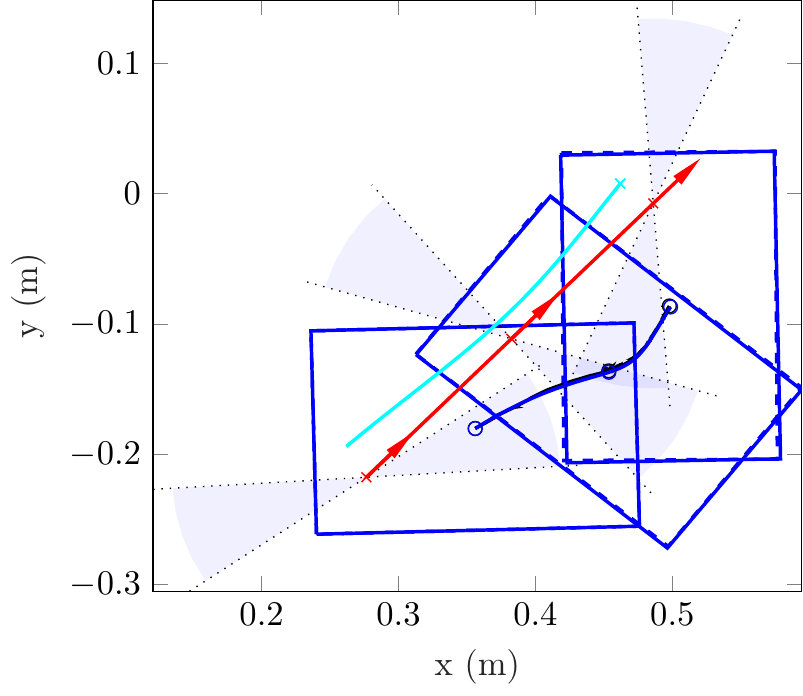}
		\includegraphics[height=7.0cm]{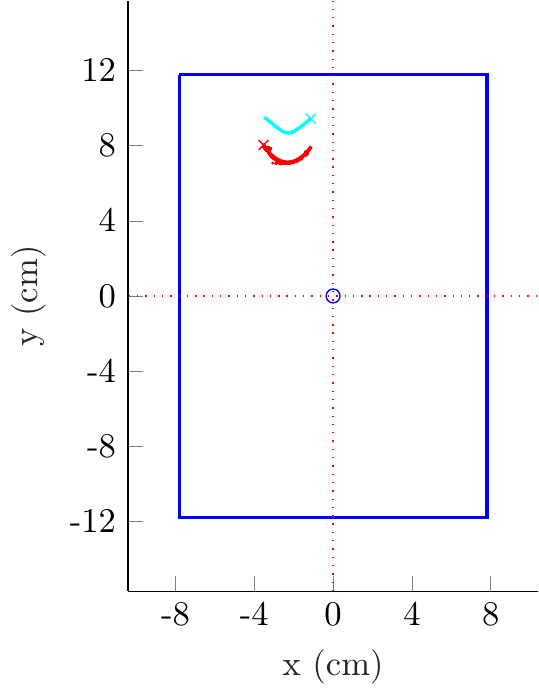}
		\caption{Comparison of simulated and experimental results for a straight line pivoting motion.  On the left, the rectangles in blue illustrate simulation, and in dashed black experimental results. On the right, the locus of the origin of the patch frame (red) and the pivot point (cyan) with respect to the object are shown.}
		\label{fig:estParamsPath}
	\end{center}
\end{figure*}

\begin{figure}
	\begin{center}
		\includegraphics[width=0.9\linewidth]{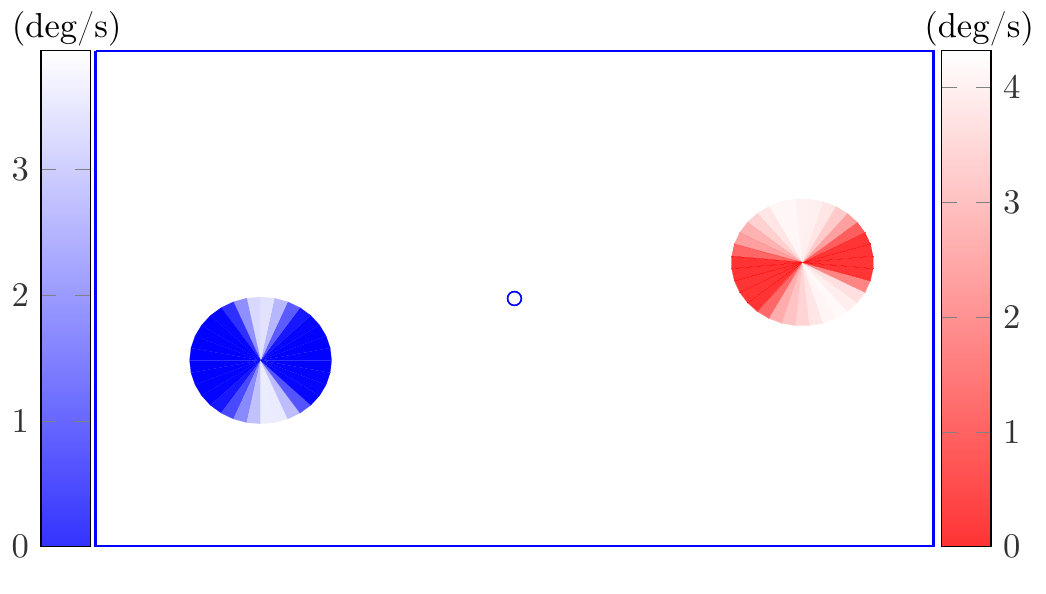}
		\caption{Visualization of the sticking and pivoting cones (blue) and slipping and pivoting cones (red) based on $\abs{\omega_o}$ as a result of moving the soft finger with $1\,$cm/s in various directions.}
		\label{fig:regions_exp}
	\end{center}
\end{figure}

\subsubsection{Sliding trajectory}

To verify the accuracy of the dynamical system presented in Section~\ref{sec:dynSys}, the soft finger mounted at the robot end-effector was pressed against the object and commanded to move at a certain velocity, while maintaining a constant normal force.
An image sequence of one trial is shown in Figure \ref{fig:sliding_exp}, with the trajectories of the center of the book overlaid in blue and of the center of the friction patch in red.

An experiment under similar conditions was tested in simulation, calculating the trajectories of the object, finger, and pivot point, and the forces that arise from this interaction.
In Figure~\ref{fig:estParamsAll}, the full state of the system and the wrenches at the hand frame as a function of time are shown for both simulation (solid) and experimental (dashed) data.
To identify $\mx{A}$, $\mx{B}$, and $s$, i.e., the percentage of the shift of COP due to loading, we set up an optimization problem that minimizes the error between the simulated experiment and the measured data. The hand velocity and the normal forces are chosen as the average of the respective values from the experiments.
A comparison between the simulated and the experimental results is shown in Figure~\ref{fig:estParamsPath}.
The plot on the left shows the simulated object path in blue and the experimental in dashed black.
The plot on the right side of Figure~\ref{fig:estParamsPath} shows the positions of the patch and the pivot point in the object frame.
It can be seen that the model accurately describes the sliding motion of the object, reaching similar positions and orientations within the same amount of time.
In Figure~\ref{fig:traj_examples}, a number of sample sliding motions are shown. The same parameters, except for the COP of the patch which could vary slightly from an experiment to another, are used for all the simulated results. The prediction of the proposed model matches with the experimental results.

\subsubsection{Modes and motion cones}
Validation of the motion cones and possible modes was carried out by placing the soft finger at different locations on the book and performing linear motions in various directions, with zero angular velocity, while maintaining a constant normal force.
The angular velocity of the object was recorded and is visualized in Figure~\ref{fig:regions_exp} for two different locations.
The symmetry presented in these results confirm what is posited in Corollary \ref{cor:revTwist}.
The left side of the figure shows the angular velocities when pressing the soft finger with a normal force of $6\,$N.
Since the friction is approximately isotropic, according to Theorem~\ref{thm:linMotionStick} the soft finger sticks to the object when moving towards or away from the COP of the object.
When moving perpendicularly to this line, the object pivots and has some rotational velocity.
The right side shows the same effect with a normal force of $2\,$N.
The patch slips against the object when moving along a direction close to the line that passes through the COPs of the object and of the patch and pivots when the velocity is perpendicular to that.

\subsubsection{Controlled sliding} \label{sec:contSliding}
One of the main observations of the proposed model is that, by regulating the normal force applied by the soft finger, we can modify the trajectory of the object.
As discussed in Section~\ref{sec:normalforce}, an increase in the normal force applied on the object through the friction patch slows down the rotation.
Given a patch location and a velocity direction, a reference trajectory for $\theta_o$ can be defined, as long as it stays within the calculated limits as in Figure~\ref{fig:maxRot}.

Figure \ref{fig:controlExp} illustrates the result of an experiment for tracking a desired trajectory of the object orientation.
A force controller was implemented on the robot to realize the proportional control law~\eqref{eq:controller}.
This simple proportional controller was able to closely track the reference trajectory, applying larger normal forces to keep the object from rotating, and relaxing the pressure whenever faster rotation was required.

\begin{figure*}
	\begin{center}
		\includegraphics[width=1.0\linewidth]{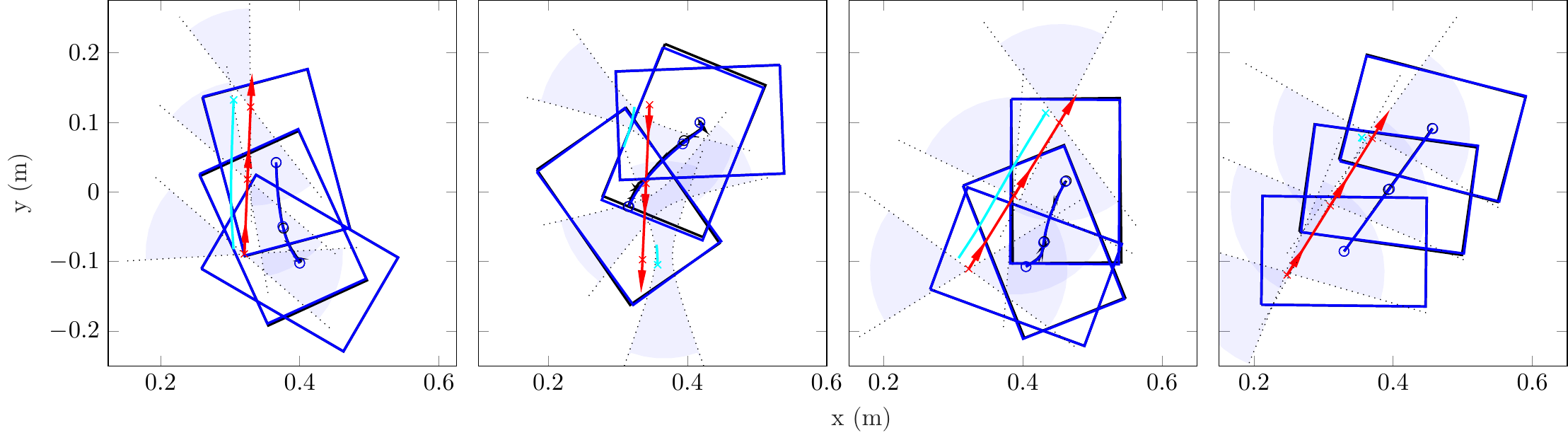}
		\caption{Samples of sliding motion illustrating paths and motion cones. Blue: modelled; Black: experimental. Red: friction patch; Cyan: pivot point.}
		\label{fig:traj_examples}
	\end{center}
\end{figure*}

\begin{figure}
	\begin{center}
		\includegraphics[width=0.9\columnwidth]{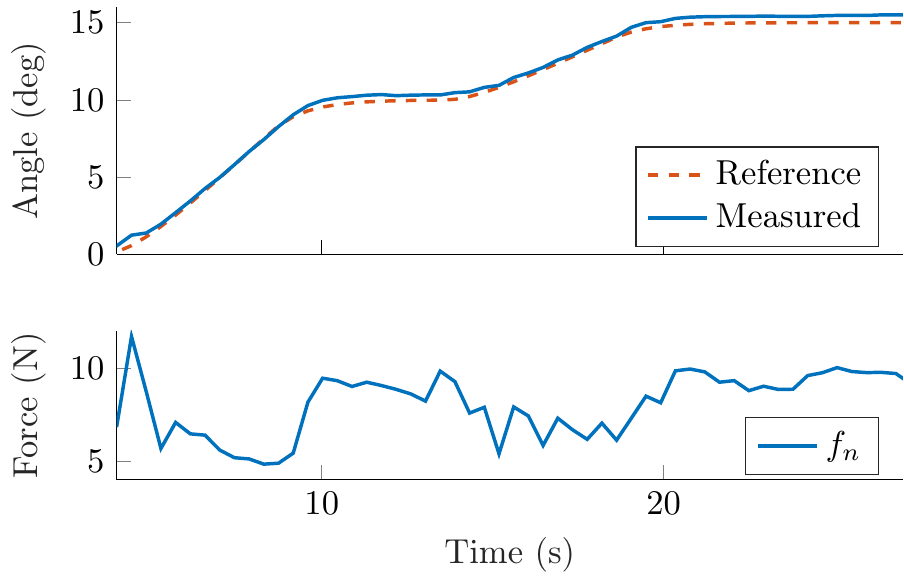}
		\caption{Experimental result of tracking a reference trajectory for the object orientation.}
		\label{fig:controlExp}
	\end{center}
\end{figure}

\section{Discussion}

The experimental results suggest a good match between theory and practice. The assumptions of quasi-static motion and ellipsoid approximation of limit surfaces hold in our experiments. However, when the contact areas are not negligible, an accurate estimation of the location of COPs becomes important. Accordingly, due to variability in the pressure distribution and friction coefficients~\citep{yu2016more}, applying nominal values may not be adequate and hence feedback must be employed.

A frictional patch allows both pulling and pushing of an object. Compared to pushing scenarios~\citep{mason1982manipulator,zhou2017pushing}, we observe that the torsional friction adds some stability margin for pushing. Nevertheless, reversing from a stable motion at the border of the pivoting region is unstable and sensitive to small perturbations.
Additionally, at the stable fixed-point, the center of a circular patch would not align with the COP of the object, in contrast to the scenario where there is no torsional friction considered by~\cite{huang2017exact}. Another observation is that, if the friction is anisotropic, the main axis of the motion cone corresponding to sticking mode is not directed toward the COP of the object.

A fundamental difference between our model and previous studies based on pulling/pushing is the inclusion of torsional friction. Since the torsional friction can be regulated by adjusting the normal force, we have in effect an extra degree of actuation besides the motion of the hand. This fact has been successfully utilized in the trajectory tracking experiment in Section~\ref{sec:contSliding}. The reason that a simple controller such as~\eqref{eq:controller} can work in practice is that for a given velocity of the patch, the angular velocity of the object ($\omega_o$) could be approximated as a linear function of $f_n$ around a working point.

For a linear motion and a fixed normal force, the distance to cause a certain amount of rotation in the object is independent of the velocity of the patch. Moreover, by increasing the normal force, the motion cone widens, and hence the maximum possible rotation is reduced. At the same time, the object rotates slower, which results in a longer distance to achieve a certain amount of rotation.

According to Theorem~\ref{thm:stickMaxMargin}, the generalized eigenvectors of $\hat{\mx{A}}$ and $\mx{B}$ can be interpreted as the principal sliding wrenches, while the generalized eigenvectors of $\hat{\mx{A}}^{-1}$ and $\mx{B}^{-1}$ define principal sliding twists. These eigenvectors and their corresponding eigenvalues provide important information for a sliding motion at a given configuration. Specifically, the eigenvectors characterize wrenches and twists that cause sticking and slipping with largest margins.

The approximation method developed in Section~\ref{sec:approxSol} can be viewed as an extension of the pushing model by~\cite{zhou2017pushing,zhou2017fast}, where the pusher can also exert torsional friction, and the ellipsoidal limit surfaces are given in their most generic form. If the pivot point is additionally allowed to displace, the model for planar sliding using friction patches is recovered. As explained in Section~\ref{sec:approxSol}, the displacement of the pivot point can be accounted for in the approximate solution in an iterative fashion.

If the stiffness of the material in contact with the slider is low, we cannot assume the velocity of the patch as a control input anymore. However, for many practical applications the deflection of the soft finger with respect to the overall motion of the object is negligible. A possible extension is to augment the Coulomb friction model of the patch with a spring-damper model to account for the flexibility of the soft finger.
This approach is somewhat similar to the work by~\cite{kao1992quasistatic}, although they do not derive any dynamical system for time evolution of the system.

A more realistic friction model should be able to distinguish between static and dynamic friction coefficients.
Moreover, higher order approximation of limit surface proposed by~\cite{zhou2018convex} can be adapted to our approach, although there may not exist efficient algorithms to solve the resulting systems of equation. 

A control algorithm can benefit from haptic exploration for identification of the friction properties of the object being manipulated. For example, by adjusting the normal force and measuring tangential forces, it is possible to distinguish if the soft finger slides \emph{with the object} or \emph{on the object}. Such ideas can be incorporated for identification and control of the mode of the system.

Using the developed model, several planning problems can be formulated and solved. A possible planning problem is to find trajectories to slide an object from an initial to a desired pose such the contact between surfaces is maintained and the normal force is within desired limits. Another example is to find minimal required normal forces to slide an object along a path without pivoting or slipping. Note that when the pivot point varies a lot, a simple approximation of the dynamics as an object with a joint is not sufficient. In such scenarios, a more elaborate dynamical system as the one presented in this article must be considered.


\section{Conclusion}

A mathematical model for planar sliding has been provided, where the friction between a soft finger and the slider generates forces for the motion of the object.
The concept of motion cones are extended to find criteria for three modes of the system, namely sticking, slipping, and pivoting. In pivoting mode, the soft finger can be regarded as if it is connected to the slider using a joint, which in general slides on the surface of the object. This interconnection enables both pushing and pulling of the object.

We relax some of the common assumptions made for modeling frictional contacts such as diagonal matrices for approximating limit surfaces or having a fixed pivot point. The result of modeling is a hybrid dynamical system, which is used for finding fixed-points of the system and determining their stability.

The developed model can be used as the basis for planning and control design.
For example, it makes it possible to predict the directions for stable sliding with the largest margin, to approximate the amount of rotation before reaching a fixed-point,  to predict whether the contact is maintained or the soft finger would go off the surface of the object, etc.

We evaluate the model experimentally. The comparison of the results with the simulated model suggests that the essential physical factors are captured in the model. In addition, we demonstrate the possibility of tracking a desired trajectory by regulating normal forces based on visual feedback.

In the future, we consider designing  adaptive and robust controllers to deal with the variations in the parameters of the system during sliding.



\footnotesize
\section*{Acknowledgements}
The research has received funding from the SOMA project (European Commission, Horizon 2020 Framework Programme, H2020-ICT-645599).

\normalsize
\bibliography{references}



\appendix
\section{Proof of Theorem~\ref{thm:decomp}} \label{sec:appx2}

\begin{proof}
Since matrix $\mx{A}$ is symmetric, it has 6 unique elements. Accordingly, by expanding the right hand side of~\eqref{eq:decomp}, we find 6 equations and 6 unknowns, i.e., the diagonal elements of $\mx{\Lambda}$, the angle of rotation $\theta$  for $\mx{R} := \mx{R}(\theta)$ and the vector  $\vc{r} = [x_r,\, y_r]^T$ for the Jacobian $\mx{J} := \mx{J}(\vc{r})$. To show that this equation system can indeed be solved, we construct the solution.

Using the elements of $\mx{A}$, we calculate
\begin{subequations}
\begin{align}
	x &:=  (a_{11} - a_{22} ) + \dfrac{1}{a_{33}} (a_{23}^2 - a_{13}^2 ) \nonumber \\ 
	   &\phantom{:}=  (\lambda_{1} - \lambda_{2}) \cos(2\theta), \\
	y &:= -2 (a_{21}  - \dfrac{1}{a_{33}} a_{13} a_{23}) \nonumber \\
	   &\phantom{:}=  (\lambda_{1} - \lambda_{2}) \sin(2\theta).
\end{align}
\end{subequations}
Accordingly, we find
\begin{align}
	\theta  = \dfrac{1}{2} \Atan2(x,y) \sgn (\lambda_{1} - \lambda_{2}).
\end{align}
If $\lambda_{1}  \neq \lambda_{2}$, and we decide a specific ordering for the elements of $\mx{\Lambda}$, e.g., $\lambda_{1}  > \lambda_{2}$, the angle is uniquely determined. If $\lambda_{1} = \lambda_{2}$, any angle can be chosen including $0$.

After finding $\theta$, as an intermediate step we calculate 
\begin{align}
	\tilde{\mx{\Lambda}} = \mx{R}(\theta) \mx{A} \mx{R}^T(\theta).
\end{align}
Now, the elements of $\vc{r}$ are obtained as
\begin{align}
	x_r = \dfrac{\tilde{\Lambda}_{23}}{\tilde{\Lambda}_{33}}, \quad
	y_r = - \dfrac{\tilde{\Lambda}_{13}}{\tilde{\Lambda}_{33}} 
\end{align}
Finally, 
\begin{align}
	\mx{\Lambda} = \mx{J}(-\vc{r}) \tilde{\mx{\Lambda}} \mx{J}^T(-\vc{r}).
\end{align}

\end{proof}

\section{Proof of Theorem~\ref{thm:no2alpha}} \label{sec:appx0}

\begin{proof}
	We prove the theorem by contradiction. Assume that there are two distinct positive solutions $\alpha_2 > \alpha_1 > 0$. We prove that this implies that $\vc{\nu}_h = 0$.
	
	Since $\hat{\mx{A}}$ and $\mx{B}$ are positive definite, so is $\mx{\Lambda}$, and hence all $\lambda_i> 0,\, i\in \{1,2,3\}$. 
	The coefficients $c_i$, which are the diagonal of $\mx{C} = \mx{\Lambda} - \mx{I}$, cannot all have the same sign. Otherwise, the left hand side of~\eqref{eq:solAlpha} will be irrespective of $\vc{\nu}_h$ either positive or negative and there will be no solution to the equation. This implies that one or two eigenvalues are less than one, while the rest are/is larger than one.
	Here, we consider the case where $\lambda_1 > 1 > \lambda_2 > \lambda_3$. Other cases are proven similarly.
	
	For $\alpha_i, i\in \{1,2\}$, we have
	\begin{multline}
	c_1 \left(\dfrac{\tilde{v}_{xh}}{\alpha_i \lambda_1 + 1} \right)^2 +
	c_2 \left(\dfrac{\tilde{v}_{yh}}{\alpha_i \lambda_2 + 1} \right)^2 +
	c_3 \left(\dfrac{\tilde{\omega}_{h}}{\alpha_i \lambda_3 + 1} \right)^2 \\ = 0.\nonumber
	\end{multline}
	We multiply the equation associated with $\alpha_i$ by
	\begin{align}
	 \left(\alpha_i \lambda_1  +1\right)^2 \label{eq:elimL1}
	\end{align}
	and subtract the resulting equations from each other to eliminate the first term. Accordingly,
	\begin{multline}
		c_2 \tilde{v}^2_{yh} \left(f_{1,2}(\alpha_2)  - f_{1,2}(\alpha_1) \right) + \\ 
		c_3 \tilde{\omega}^2_{h} \left(f_{1,3}(\alpha_2)   - f_{1,3}(\alpha_1) \right) = 0 \label{eq:noc1Term}
	\end{multline}
	where
	\begin{align} 
	f_{i,j}(\alpha) := \left(\dfrac{\alpha \lambda_i + 1}{\alpha \lambda_j + 1}\right)^2. \label{eq:fija}
	\end{align}
	For $\alpha \geq 0$ and positive values of $\lambda_i$, from the derivate of~\eqref{eq:fija} we find out that the function is monotonically increasing or decreasing, depending on the sign of $\lambda_i - \lambda_j$.
	Thus, given our assumptions about $\lambda_i$, both $f_{1,2}$ and $f_{1,3}$ are increasing functions. Taking this fact into account, the left hand side of~\eqref{eq:noc1Term} is always negative unless both $\tilde{v}_{yh}$ and $\tilde{\omega}_{h}$ are zero. If this is the case, from~\eqref{eq:solAlpha} we conclude that $\tilde{v}_{xh}$ must also be zero. Since $\vc{\nu}_h$ is assumed to be non zero, this completes the proof by contradiction.
\end{proof}

\section{Proof of Theorem~\ref{thm:unique}} \label{sec:appx}

\begin{proof}
Firstly, we show that it is impossible to have no active modes, i.e., there is at least one active mode. Secondly, we prove that it is impossible to have any two modes active at the same unless $\vc{\nu}_h = \vc{0}$.

Let us define 
\begin{align} \label{eq:fofab}
f(\beta,\gamma):= \vc{w}(\beta, \gamma)^T \mx{C} \vc{w}(\beta, \gamma)
\end{align}
where $\vc{w}(\beta, \gamma) = \left(\beta \mx{I} + \gamma \mx{\Lambda}\right)^{-1} \tilde{\vc{\nu}}_h$. 
If neither sticking mode nor slipping mode is possible, from conditions~\eqref{eq:condStick} and~\eqref{eq:condSlip}, we conclude
\begin{subequations} \label{eq:noStnoSl}
\begin{align}
f(\beta,0) &> 0, \\
f(0,\gamma) &< 0.
\end{align}
\end{subequations}
Also define $g(\alpha):= f(1,\alpha)$.
According to~\eqref{eq:fofab}, $\sgn f(\beta, \gamma) = \sgn g({\gamma}/{\beta})$. Consequently, conditions~\eqref{eq:noStnoSl} can be written as
\begin{align*}
g(0) > 0
\end{align*}
and for a large enough $\alpha$
\begin{align*}
g(\alpha) < 0.
\end{align*}
Since $g(\cdot)$ is a continuous function, there must exist an $\alpha > 0$ such that $g(\alpha)= 0$, i.e., there is a solution in pivoting mode. Therefore, it is impossible to have no active mode.

For ease of reference, here we summarize~\eqref{eq:condStick},~\eqref{eq:alpha4vh}, and~\eqref{eq:condSlip}, which provide the criteria for sticking, pivoting, and slipping modes, respectively
\begin{subequations} \label{eq:stickPivotSlip}
\begin{align}
\tilde{\vc{\nu}}_h^T \mx{C} \tilde{\vc{\nu}}_h &\leq 0, \label{eq:cond1}\\
\alpha >0,\quad \tilde{\vc{\nu}}_h^T \mx{C} (\mx{I} + \alpha \mx{\Lambda})^{-2} \tilde{\vc{\nu}}_h &= 0, \label{eq:cond2}\\\
\tilde{\vc{\nu}}_h^T \mx{C}  \mx{\Lambda}^{-2}  \tilde{\vc{\nu}}_h &\geq 0. \label{eq:cond3}\
\end{align}
\end{subequations}
Now assume that any pair of these conditions hold true. We can show that this results in a contradiction unless $\vc{\nu}_h = \vc{0}$. The proof construction is similar to the proof of Theorem~\ref{thm:no2alpha} given in Appendix~\ref{sec:appx0}. More specifically, the term corresponding to the largest eigenvalue is eliminated by multiplying the expressions by proper coefficients similar to~\eqref{eq:elimL1}. Here, we provide the details only for the case where~\eqref{eq:cond1} and~\eqref{eq:cond3} are assumed true.

We know that the diagonal elements of $\mx{C} = \mx{\Lambda} - \mx{I}$ cannot have the same sign. Let us assume $\lambda_1 > 1 > \lambda_2 > \lambda_3 > 0$. Accordingly,
\begin{multline} \label{eq:slip-stick}
\lambda_1^2 \tilde{\vc{\nu}}_h^T \mx{C}  \mx{\Lambda}^{-2}  \tilde{\vc{\nu}}_h - \tilde{\vc{\nu}}_h^T \mx{C} \tilde{\vc{\nu}}_h  = \\
c_2 \tilde{v}^2_{yh} \Big(\left(\tfrac{\lambda_1}{\lambda_2}\right)^2   - 1 \Big) +
c_3 \tilde{\omega}^2_{h} \Big(\left(\tfrac{\lambda_1}{\lambda_3}\right)^2   - 1 \Big) \leq 0
\end{multline}
Unless $\tilde{v}_{yh}$ and $\tilde{\omega}_{h}$ are zero,~\eqref{eq:slip-stick} is strictly negative. However, to fulfill~\eqref{eq:cond1} if $\tilde{v}_{yh} = \tilde{\omega}_{h} = 0$, $\vc{\nu}_h$ must also be zero since $c_1 > 0$. Thus, we conclude that if $\vc{\nu}_h \neq 0$, then
\begin{align}
\tilde{\vc{\nu}}_h^T \mx{C} \tilde{\vc{\nu}}_h  > \lambda_1^2 \tilde{\vc{\nu}}_h^T \mx{C}  \mx{\Lambda}^{-2}  \tilde{\vc{\nu}}_h,
\end{align}
which contradicts the assumption that the left hand side is less than or equal to zero and the right hand side is greater or equal to zero. Other scenarios for $\lambda_i$ are proven similarly.
\end{proof}


\section{Experimental validation of shift of COP} \label{sec:appx3}

\begin{figure}
	\centering
	\includegraphics[width=0.9\columnwidth]{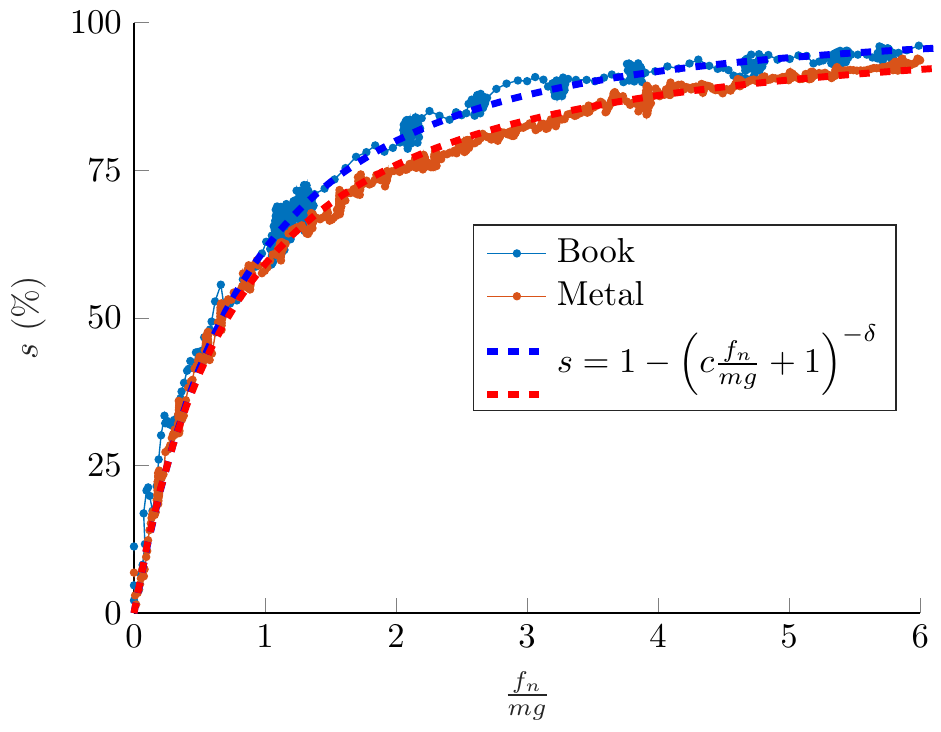}
	\caption{Experimental results of the effect of normal force in the shift of the object COP towards the patch.}
	\label{fig:copExp}
\end{figure}

To understand the effect of loading an object with a given normal force, in terms of the amount of displacement of the COP of the object towards the COP of the patch, a number of experiments were carried out.
We used a BTS Force Plate, which measures forces and centers of pressure. The objects were placed on the surface and pressed with an increasing normal force. 
The shift $s$ (in percentage) is plotted in Figure~\ref{fig:copExp}, against the normal force (normalized for object weight) for two different objects: a hardcover book of 463$\,$g, and a flat steel slab of 1593$\,$g.
Both objects presented similar behaviors, despite the differences in material properties.
The computational model proposed in \eqref{eq:COPmodel} was used to fit the experimental data, and the resulting parameters were ${c=0.6}$, $\delta=2.0$ for the book and $c = 0.9642$, $\delta = 1.324$ for the metal slab.

\end{document}